\newcommand{\ent}{\mathcal{H}}
\newtheoremstyle{questionstyle}
  {\topsep}   
  {0}         
  {\itshape}  
  {0pt}       
  {\bfseries} 
  {.}         
  {5pt plus 1pt minus 1pt} 
  {}          
\theoremstyle{questionstyle}\newtheorem{question}{Question}
\newtheorem{theorem}{Theorem}
\newtheorem{lemma}[theorem]{Lemma}
\newtheorem{corollary}{Corollary}
\newtheorem{definition}{Definition}
\newcommand{\prg}[1]{\textbf{#1}}
\newcommand{\bo}[1]{\textbf{#1}}
\newcommand{\tuple}[1]{\langle #1 \rangle}
\newcommand{\set}[1]{\{ #1 \}}
\newcommand{\expect}[2]{\mathbb{E}_{#1}\left[{#2}\right]}
\newcommand{\M}{\mathcal{M}}
\newcommand{\St}{\mathcal{S}}
\newcommand{\A}{\mathcal{A}}
\newcommand{\R}{R} 
\newcommand{\PS}{\mathcal{P}}
\DeclareMathOperator*{\argmax}{arg\!\max}
\newcommand{\voidarg}{{\,\cdot\,}}
\newcommand{\PD}{\textrm{PD}}
\newcommand{\CE}{\textrm{CE}}
\newcommand{\PE}{\textrm{PE}}
\newcommand{\KL}{D_\text{KL}}
\newcolumntype{M}[1]{>{\centering\arraybackslash}m{#1}}
\title{Maximum Entropy Population-Based Training for Zero-Shot Human-AI Coordination}
\author{Rui Zhao$^{1}$\thanks{Correspondence to: Rui Zhao {\tt\small $\lbrace$rui.zhao.ml@gmail.com$\rbrace$}.},\ Jinming Song$^{1}$,\ Yufeng Yuan$^{1}$,\\
\textbf{Haifeng Hu$^{1}$,\ Yang Gao$^{2}$,\ Yi Wu$^{2}$,\ Zhongqian Sun$^{1}$,\ Yang Wei$^{1}$}\\
$^{1}$Tencent AI Lab
$^{2}$Tsinghua University
}
\begin{document}

\maketitle

\begin{abstract}
We study the problem of training a Reinforcement Learning (RL) agent that is collaborative with humans without using any human data. Although such agents can be obtained through self-play training, they can suffer significantly from distributional shift when paired with unencountered partners, such as humans. To mitigate this distributional shift, we propose \emph{Maximum Entropy Population-based training} (MEP). In MEP, agents in the population are trained with our derived \emph{Population Entropy} bonus to promote both pairwise diversity between agents and individual diversity of agents themselves, and a common best agent is trained by paring with agents in this diversified population via prioritized sampling. The prioritization is dynamically adjusted based on the training progress. We demonstrate the effectiveness of our method MEP, with comparison to Self-Play PPO (SP), Population-Based Training (PBT), Trajectory Diversity (TrajeDi), and Fictitious Co-Play (FCP) in the Overcooked game environment, with partners being human proxy models and real humans. A supplementary video showing experimental results is available at \url{https://youtu.be/Xh-FKD0AAKE}.

\end{abstract}


\section{Introduction}
\label{sec:intro}
Deep Reinforcement Learning (RL) has gained many successes against humans in competitive games, such as Go~\cite{AlphaZero}, Dota~\cite{OpenAIFiveFinals}, and StarCraft~\cite{AlphaStar}.
However, it remains a challenge to build AI agents that can coordinate and collaborate with humans that the agents have not encountered during training~\cite{kleiman2016coordinate,lerer2017maintaining,carroll2019utility,shum2019theory,hu2020other,knott2021evaluating}.
This challenging problem, namely zero-shot human-AI coordination, is particularly important for real-world applications, such as cooperative games~\cite{carroll2019utility}, communicative agents~\cite{foerster2016learning}, self-driving vehicles~\cite{resnick2018vehicle}, and assistant robots~\cite{ShadowDexterousHand}, because it removes the onerous and expensive step of involving human or human data in AI training. Thus, studying this problem could potentially make our ultimate goal of building AI systems that can assist humans and augment our capabilities~\cite{AugmentingHumans, AugmentingHumansDistill} more achievable.

An efficient scheme for training AI agents in collaborative or competitive settings is through self-play reinforcement learning~\cite{tesauro1994td, AlphaZero}. Due to its training paradigm, self-play-trained agents are very specialized since they only encounter their own policies during training and assume their partners will behave in a particular way.
Therefore, those agents can suffer significantly from distributional shift when paired with humans.
For example, in the Overcooked game, the self-play-trained agents only use a specific pot and ignore the other pots while humans use all pots. As a consequence, the AI agent ends up waiting unproductively for the human to deliver a soup from the specific pot, even though the human has instead decided to fill up the other pots~\cite{carroll2019utility}.

In this paper, we propose a robust and efficient approach \textit{Maximum Entropy Population-based training} (MEP), to train agents for zero-shot human-AI coordination based on the advances in maximum entropy RL~\cite{haarnoja2018soft}, diversity~\cite{eysenbach2018diversity}, and Multi-agent RL~\cite{MADDPG,foerster2018counterfactual}.
To encourage the diversity and explorability of policies of the individual agent in the population, we utilize the maximum entropy objectives~\cite{ziebart2008maximum,toussaint2009robot,ziebart2010modeling,rawlik2013stochastic,fox2015taming,haarnoja2017reinforcement,haarnoja2018soft,zhao2019maximum} for the individual policies. To acquire diverse and distinguishable behaviors~\cite{eysenbach2018diversity} between agents in the population, we further utilize the average Kullback–Leibler (KL) divergence between all agent pairs in the population to promote pairwise diversity.
We define this combination of individual diversity and pairwise diversity as \textit{Population Diversity} (PD) and derive a safe and computationally efficient surrogate objective \textit{Population Entropy} (PE), which is the lower bound of the original PD objective with linear runtime complexity. Analogous to maximum entropy RL training, each agent in the population is rewarded to maximize the centralized population entropy. With this diverse population, we train a best response agent by pairing it with the agents sampled from this population with a prioritization scheme based on the difficulty to collaborate with~\cite{schaul2015prioritized,AlphaStar,vinyals2019grandmaster,han2020tstarbot}. By doing so, this newly trained AI agent encounters a diverse set of strategies and could have better generalization~\cite{pan2009survey,tobin2017domain,ShadowDexterousHand}.

The contributions of this paper are three-fold. First, based on the novel population diversity objective that considers both individual diversity and pairwise diversity for agents in the population, we derive a safe and computationally efficient surrogate objective, the population entropy, which is the lower bound of the population diversity objective. Secondly, we propose the maximum entropy population-based training framework, which comprises training a diverse population and then training a robust AI agent using this population. Last but not least, we evaluate our method and other state-of-the-art methods on the Overcooked game environment~\cite{Overcooked}, with both human proxy models and also real humans.


\section{Preliminaries}
\prg{Markov Decision Process:} A two-player Markov Decision Process (MDP) is defined by a tuple $\M = \tuple{\St, \set{\A^{(i)}}, \PS, \gamma, \R}$~\cite{multiMDP}, where $\St$ is a set of states; $\A^{(i)}$ is a set of the $i$-th agent's actions, where $i\in [1,2]$; $\PS$ is the transition dynamics that maps the current state and all agents' actions to the next state; $\gamma$ is the discount factor; $\R$ is the reward function. The $i$-th agent's policy is $\pi^{(i)}$. 
A trajectory is denoted by $\tau$. The shared objective is to maximize the expected sum rewards, which is $\expect{\tau}{\sum_t \R(s_t, a_t)}$, where $a_t = (a^{(1)}_t, a^{(2)}_t)$.
We can extend the objective to infinite horizon problems by the discount factor $\gamma$ to ensure that the sum of expected rewards is finite.
In the perspective of a single agent, the other agent can be treated as a part of the environment. In this case, we can reduce the process to the partially observable MDP (POMDP) for that particular agent.

\prg{AI Agent, Population, and Human:}
In the case of human-AI coordination, we have a two-player MDP, in which one player is human, and the other is AI.
Throughout this paper, we use the phrase \emph{AI agent} to explicitly denote the agent that plays the AI role in human-AI coordination. 
The \emph{population} of agents is used to train the AI agent to make it capable of cooperating with different partner agents.
The \emph{human} policy is represented as $\pi^{(H)}$ and a model of the human policy is $\hat{\pi}^{(H)}$.
The {AI agent} is denoted as $\pi^{(A)}$.

\prg{Environment:}
We use the Overcooked environment~\cite{carroll2019utility} as the human-AI coordination testbed, see Figure~\ref{fig:overcooked}. In the Overcooked game, it naturally requires coordination and collaboration between the two players to have a high score. The players are tasked to cook soups.

\prg{Maximum Entropy RL:}
Standard reinforcement learning maximizes the expected sum of rewards $\expect{\tau}{\sum_t \R(s_t, a_t)}$,
At the beginning of learning, almost all actions have equal probability. 
After some training, some actions have a higher probability in the direction of accumulating more rewards. Subsequently, the entropy of the policy is reduced over time during training~\cite{mnih2016asynchronous}.
while maximum entropy RL augments the standard RL objective with the expected entropy of the policy~\cite{ziebart2010modeling,haarnoja2018soft}, which incentives the agent to select the non-dominate actions.
The maximum entropy RL objective is defined as:
\begin{align}
\label{eq:maxent_objective}
J(\pi) = \sum_t \expect{(s_t, a_t) \sim \pi}{\R(s_t, a_t) + \alpha \ent(\pi(\voidarg | s_t))},
\end{align}
where parameter $\alpha$ adjusts the relative importance of the entropy bonus against the reward. 
The maximum entropy RL objective has several advantages.
First, the policy favors more exploration and mitigates the issue of early convergence~\cite{haarnoja2017reinforcement,schulman2017equivalence}.
Secondly, the policy can capture multiple modes of near-optimal behaviors and has better robustness~\cite{haarnoja2018latent,haarnoja2019learning}.

\section{Method}
\label{sec:method}
\begin{figure*}
    \centering
    \begin{minipage}{0.45\linewidth}
    \vskip 0.2in
        \includegraphics[width=\linewidth]{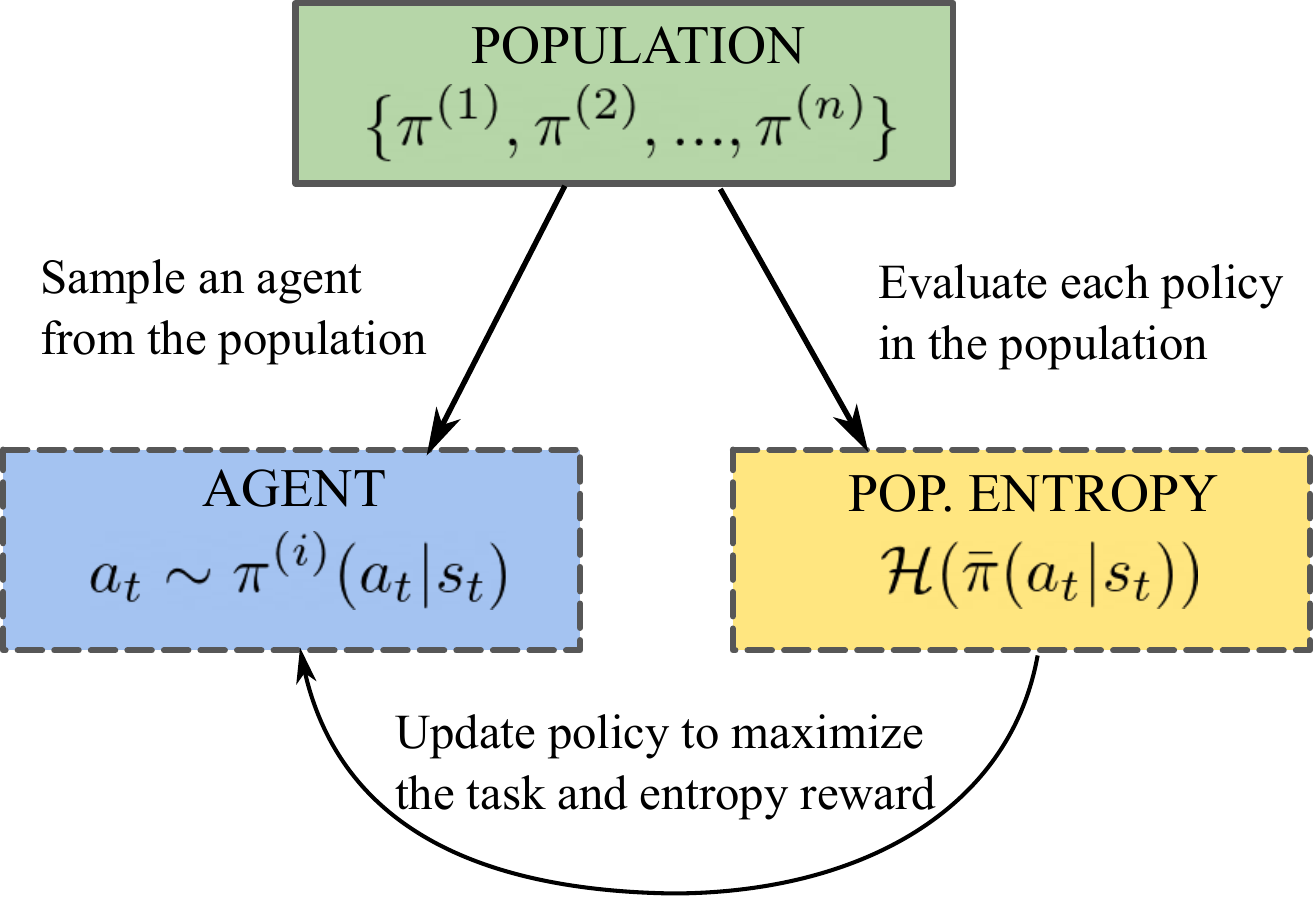}
    \end{minipage} 
    \begin{minipage}{0.54\linewidth}
    \begin{algorithm}[H]     
    \DontPrintSemicolon
    \SetAlgoLined
    \While{not converged}{
        Sample agent from population: $\pi^{(i)} \sim \set{\pi^{(1)}, \pi^{(2)}, ..., \pi^{(n)}}$\\
        \For{$t \leftarrow 1$ \KwTo $steps\_per\_episode$}{
                Sample action $a_t \sim \pi^{(i)}(a_t | s_t)$.\\
                Step environment $s_{t+1} \sim p(s_{t+1} \mid s_t, a_t)$.\\
                Calculate the population entropy reward and combine it with the task reward:\\$r = r(s_t,a_t) - \alpha \log(\bar{\pi}(a_t|s_t))$\\
                Update policy $\pi^{{(i)}}$ to maximize $\expect{\tau}{r}$.
              }
    } 
    \caption{Maximum Entropy Population}\label{algo:mep}
    \end{algorithm}
    \end{minipage}
    \caption{\textbf{Maximum Entropy Population}:
    We train each agent in the population to maximize its task reward as well as the population entropy reward to attain a maximum entropy population.\label{fig:mep}}
\end{figure*}
In this section, we first define the \textit{Population Diversity} objective, which includes average individual policy entropy and average pairwise difference among policies.
Secondly, we derive its safe and computationally efficient lower bound, \textit{Population Entropy}, as the surrogate objective for optimization.
Thirdly, we illustrate the \textit{Maximum Entropy Population-based training} framework, which comprises training a maximum entropy population and training a robust AI agent via prioritized sampling using the population.

\subsection{Population Diversity}
\label{sec:pd}
Motivated by maximum entropy RL, we want to make the policies in the population exploratory and diverse.
First, by utilizing the maximum entropy bonus, we encourage each policy itself to be exploratory and multi-modal.
Secondly, to encourage the policies $\set{\pi^{(1)}, \pi^{(2)}, ..., \pi^{(n)}}$ in the population to be complementary and mutually different, we utilize the KL divergence of each policy pair in the population as part of our objective. 
Formally, we define the \textit{Population Diversity} (PD) as a combination of the average entropy of each agent's policy and the average KL-divergence between each agent pair in the population. 
Mathematically, 
\begin{align}
\label{eq:population_diversity}
\PD(\set{\pi^{(1)}&, \pi^{(2)}, ..., \pi^{(n)}}, s_t) \coloneqq
\frac{1}{n}\sum_{i=1}^n \ent(\pi^{(i)}(\voidarg | s_t)) + \frac{1}{n^2}\sum_{i =1}^n \sum_{j=1}^n D_\text{KL} (\pi^{(i)}(\voidarg | s_t), \pi^{(j)}(\voidarg | s_t))  
,
\end{align}
where KL-divergence ($\KL$) and entropy ($\ent$) are defined as follows:
\begin{align}
&\KL(\pi^{(i)}(\voidarg | s_t), \pi^{(j)}(\voidarg | s_t))=\sum_{a\in\mathcal{A}} \pi^{(i)}(a_t|s_t) \log \frac{\pi^{(i)}(a_t|s_t)}{ \pi^{(j)}(a_t|s_t)} , \\
&\ent(\pi^{(i)}(\voidarg|s_t)) = - \sum_{a\in\mathcal{A}} \pi^{(i)}(a_t|s_t) \log \pi^{(i)}(a_t|s_t) .
\end{align}
Although the PD objective not only captures a single agent's explorability but also encourages agents' policies to be mutually distinct, evaluating this objective requires a quadratic runtime complexity of $O(n^2)$, where $n$ is the population size. Besides, as the KL-divergence is unbounded, optimizing this objective as part of the reward function may lead to convergence issues.

\subsection{Population Entropy}
To improve the stability and the runtime complexity of the PD objective, we derive a bounded and efficient surrogate objective \textit{Population Entropy} (PE) for optimization, which is defined as the entropy of the mean policies of the population. Mathematically, 
\begin{align}
\label{eq:population_entropy}
&\PE(\set{\pi^{(1)}, \pi^{(2)}, ..., \pi^{(n)}}, s_t) \coloneqq \ent(\bar{\pi}(\voidarg | s_t)) ,  \text{where}\ \bar{\pi}(a_t | s_t) \coloneqq \frac{1}{n} \sum_{i=1}^n \pi^{(i)}(a_t|s_t) .
\end{align}
PE serves as a lower bound of the PD objective.
\begin{theorem}
Let the population diversity be defined as Equation~(\ref{eq:population_diversity}).
Let the population entropy be defined as Equation~(\ref{eq:population_entropy}).
Then, we have 
\begin{align}
&\PD(\set{\pi^{(1)}, \pi^{(2)}, ..., \pi^{(n)}}, s_t) \geq \PE(\set{\pi^{(1)}, \pi^{(2)}, ..., \pi^{(n)}}, s_t) ,
\end{align}
where $n$ is the population size.
$\textit{Proof.}$ See Appendix A. {\hfill $\square$}
\end{theorem}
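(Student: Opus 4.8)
The plan is to reduce everything to a single pointwise application of the AM--GM inequality. First I would record a convenient algebraic identity: since $\CE(\pi^{(i)}(\voidarg|s_t),\pi^{(i)}(\voidarg|s_t))=\ent(\pi^{(i)}(\voidarg|s_t))$, the entropy terms appearing in $\PD$ are exactly the ``diagonal'' cross-entropy terms that are omitted from the double sum $\sum_{i}\sum_{j\neq i}$. Hence the population diversity collapses into a clean double sum over all ordered pairs,
\[
\PD(\set{\pi^{(1)},\dots,\pi^{(n)}},s_t)=\sum_{i=1}^n\sum_{j=1}^n \CE(\pi^{(i)}(\voidarg|s_t),\pi^{(j)}(\voidarg|s_t)).
\]

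Next I would expand each cross entropy as $-\sum_{a\in\A}\pi^{(i)}(a_t|s_t)\log\pi^{(j)}(a_t|s_t)$, interchange the finite sums, and group the terms by the action $a$:
\[
\PD=-\sum_{a\in\A}\Big(\sum_{i=1}^n\pi^{(i)}(a_t|s_t)\Big)\Big(\sum_{j=1}^n\log\pi^{(j)}(a_t|s_t)\Big)=-n\sum_{a\in\A}\bar\pi(a_t|s_t)\sum_{j=1}^n\log\pi^{(j)}(a_t|s_t),
\]
using $\sum_i\pi^{(i)}(a_t|s_t)=n\,\bar\pi(a_t|s_t)$. In the same way, $n^2\,\PE=-n^2\sum_{a\in\A}\bar\pi(a_t|s_t)\log\bar\pi(a_t|s_t)$. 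Subtracting, the claim $\PD\ge n^2\,\PE$ is equivalent to
\[
\sum_{a\in\A}\bar\pi(a_t|s_t)\Big(\sum_{j=1}^n\log\pi^{(j)}(a_t|s_t)-n\log\bar\pi(a_t|s_t)\Big)\le 0 .
\]
It therefore suffices to prove the pointwise inequality $\sum_{j=1}^n\log\pi^{(j)}(a_t|s_t)\le n\log\bar\pi(a_t|s_t)$ for each fixed $a$, since then every bracket is $\le 0$ and each weight $\bar\pi(a_t|s_t)$ is $\ge 0$.

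That pointwise inequality is exactly AM--GM (equivalently, concavity of $\log$ / Jensen): $\big(\prod_{j=1}^n\pi^{(j)}(a_t|s_t)\big)^{1/n}\le \frac1n\sum_{j=1}^n\pi^{(j)}(a_t|s_t)=\bar\pi(a_t|s_t)$, and applying the increasing function $\log$ to both sides gives the bound. The only thing to be careful about is the convention $0\log 0=0$ for actions with zero probability: if some $\pi^{(j)}(a_t|s_t)=0$ while another policy assigns positive mass to $a$, then that cross-entropy term is $+\infty$, so $\PD=+\infty$ and the inequality is trivially true; otherwise all the quantities involved are finite and the argument above applies verbatim. I do not expect a real obstacle here --- the one idea that makes the proof short is rewriting $\PD$ as the full $n\times n$ sum of cross entropies, after which the statement is a one-line consequence of AM--GM.
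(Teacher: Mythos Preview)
Your proof is correct and is essentially the same as the paper's argument: both rewrite $\PD$ as the full $n\times n$ sum of cross entropies $-\sum_{a}\sum_{i,j}\pi^{(i)}\log\pi^{(j)}$ and then apply Jensen's inequality (concavity of $\log$, equivalently AM--GM) to $\frac{1}{n}\sum_j\log\pi^{(j)}\le\log\bar\pi$. Your write-up is in fact a bit cleaner---you make the diagonal-term identification explicit and handle the $0\log 0$ edge case---but there is no substantive difference in method.
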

The PE objective serves as a lower bound for the PD objective, which requires only a linear runtime complexity $O(n)$. Moreover, when defined on categorical distribution, the PE objective is bounded, which makes it desirable to be optimized as part of the reward function. Therefore, we use the derived PE objective for optimization.

\subsection{Training a Maximum Entropy Population}
\label{sec:train_maximum_entropy_population}
With the PE objective, we can train a population of agents, which can cooperate well with each other with mutually distinct strategies. Therefore, similar to the objective in maximum-entropy RL, we define the objective for MEP training as follows:
\begin{align}
\label{eq:maxent_pbt}
J(\bar \pi) = \sum_t \expect{(s_t, a_t) \sim \bar \pi}{\R(s_t, a_t) + \alpha \ent(\bar \pi(\voidarg | s_t))},
\end{align}
where $\bar \pi$ is the mean policy of the population and $\alpha$ determines the relative weight of the population entropy term with respect to the task reward.
As $\bar\pi(a_t | s_t)$ can be written as $\frac{1}{n} \sum_{i=1}^n \pi^{(i)}(a_t|s_t)$, to optimize Equation~(\ref{eq:maxent_pbt}), we can uniformly sample agents from the population to maximize the augmented reward function $R(s_t, a_t) - \alpha \log\bar \pi(a_t|s_t)$. 
The task reward is related to the agent and its partner agent, which is a copied version of itself playing the partner role in our case.
When the centralized PE reward is calculated, it considers all the agents in the population.
We summarize the method of training a maximum entropy population in Algorithm~\ref{algo:mep} and Figure~\ref{fig:mep}.

\begin{figure*}
\centering
\includegraphics[height=.142\linewidth]{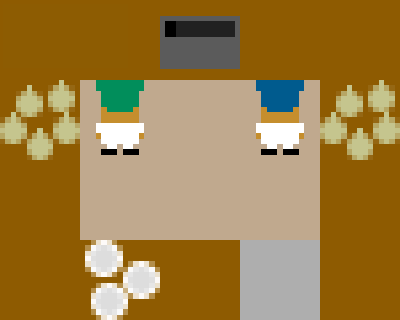}
\includegraphics[height=.142\linewidth]{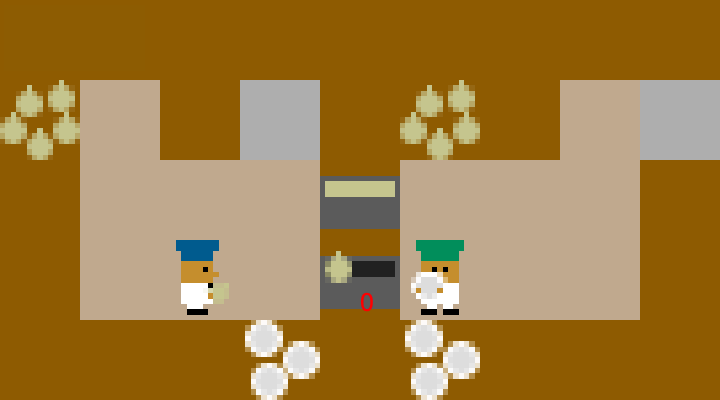}
\includegraphics[height=.142\linewidth]{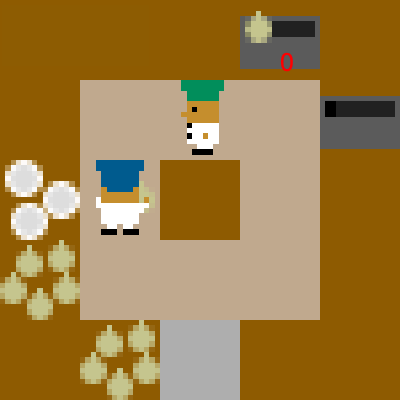}
\includegraphics[height=.142\linewidth]{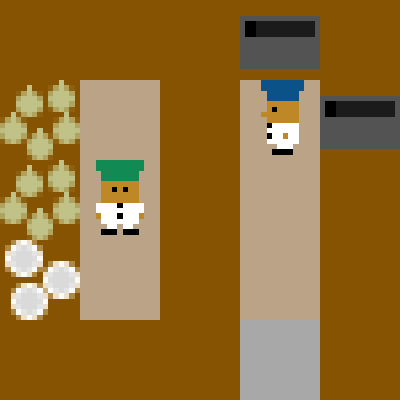}
\includegraphics[height=.142\linewidth]{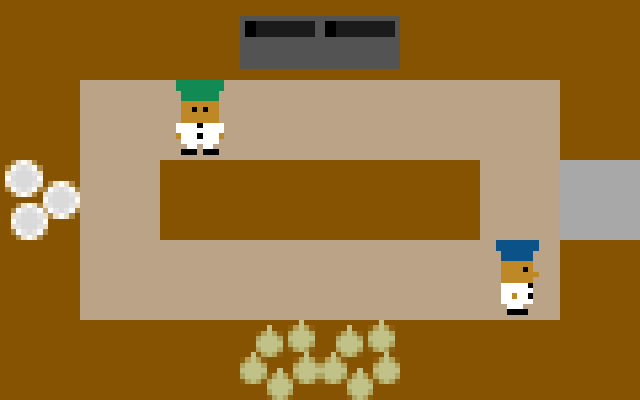}
\caption{\bo{Overcooked environment}: From left to right, the layouts are \emph{Cramped Room}, \emph{Asymmetric Advantages}, \emph{Coordination Ring}, \emph{Forced Coordination}, and \emph{Counter Circuit}.}
\label{fig:overcooked}
\end{figure*}

After having the maximum entropy population, we utilize this diverse set of agents to train a robust AI agent that can be readily paired with any player, including real human. 
The intuition behind MEP is that the AI agent should be more robust when paired with a group of diversified partners during training than trained only via only self-play.
In the extreme case, when the AI agent can coordinate well with an infinite set of different partners, it can also collaborate well with the human players.
In a more realistic sense, the more diverse the population is, the more likely the population covers most of the human behaviors in the training set.
Subsequently, the final AI agent should be less ``panicked'' when facing ``abnormal'' human actions.

\subsection{Training a Robust Agent via Prioritized Sampling}
\label{sec:prioritized_sampling}
With the maximum entropy population, training this robust agent is still non-trivial. If we train the robust agent $(A)$ by pairing it with $i$-th agent uniformly sampled from the population, the resulting policy gradient is:
\begin{align*}
\frac{1}{n}\sum_{i=1}^{n}\expect{\tau}{\sum_t \nabla_{\theta}\log\pi^{(A)}_{\theta}(a^{(A)}_{t}|s_t) \sum_t \R(s_t, a^{(A)}_t,  a^{(i)}_t)},
\end{align*}
where $n$ is the population size. This "maximize average" paradigm may lead to an agent $(A)$ that exploits the easy-to-collaborate partners as pairing with them will inevitably lead to much higher return, and therefore, ignores the hard-to-collaborate partners, which is orthogonal to our intention of training an robust agent. However, motivated by Vinyals et al.~\cite{vinyals2019grandmaster}, we can mitigate this issue by skewing the sampling distribution as follows: 
\begin{align*}
p(\pi^{(i)}) \propto 1 / \expect{\tau}{\sum_t \R(s_t, a^{(A)}_t,  a^{(i)}_t)},
\end{align*}
We assign a higher priority to the agents that are relatively hard to collaborate with. 
By doing so, we change the "maximize average" paradigm to a smooth approximation of "maximize minimal" paradigm to mitigate the issue of over exploitation of easy-to-collaborate partners.
In the extreme case, at each optimization step, if we always choose the hardest agent in the population to train the AI agent, we optimize a performance lower bound of the cooperation between the AI agent and any agent in the population. Mathematically,
\begin{align}
\label{eq:ps_lb_main}
\pi^{(A)} = \mathop{\arg\max} \min_{i\in \{1,...,n\}}J(\pi^{(A)}, \pi^{(i)}) ,
\end{align}
where $J(\pi^{(A)}, \pi^{(i)})$ denotes the expected return achieved by $\pi^{(A)}$ and $\pi^{(i)}$ collaborating with each other.
For more detail on the performance lower bound, Equation~(\ref{eq:ps_lb_main}), see Lemma 3 in Appendix B.
Furthermore, we derive the performance connection between two pairs of agents, $(\pi^{(A)}, \pi^{(i)})$ and $(\pi^{(A)}, \pi^{(j)})$, when the partner agent $\pi^{(i)}$ in the first pair is $\epsilon$-close~\cite{ko2006mathematical} to the other partner agent $\pi^{(j)}$ in the second pair, see Lemma 4 in Appendix C.
Based on Lemma 4, if the population we used for training is diverse and representative enough such that we can find an agent that is $\epsilon$-close to the human player's policy, then we have a performance lower bound of human-AI coordination.
In this case, with prioritized sampling, we optimize not only the performance lower bound of the AI agent and the population, see Equation~(\ref{eq:ps_lb_main}), but also the performance lower bound between the human player and the AI agent, see Corollary 1 in Appendix C.

In practice, we follow the rank-based approach as proposed by Schaul et al.~\cite{schaul2015prioritized} in consideration for stability. The probability of the $i$-th agent to be sampled is:
\begin{align*}
p(\pi^{(i)}) = \frac{\textrm{rank}\left(1 / \expect{\tau}{\sum_t \R(s_t, a^{(A)}_t,  a^{(i)}_t)}\right)^{\beta}}{\sum_{j=1}^n \textrm{rank} \left( 1 / \expect{\tau}{\sum_t \R(s_t, a^{(A)}_t,  a^{(j)}_t)} \right)^{\beta}},
\end{align*}
where $\mathrm{rank}(\cdot)$ is the ranking function ranging from $1$ to $n$ and $\beta$ is a hyper-parameter for adjusting the strength of the prioritization.


\section{Experiments}
\begin{figure*}
\begin{subfigure}[b]{0.3\textwidth}
  \centering
   \includegraphics[width=.82\linewidth]{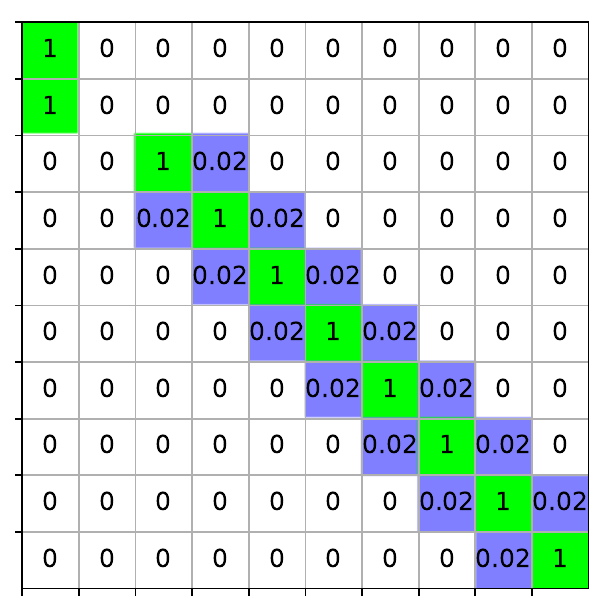}
   \vspace{1.5mm}
\end{subfigure}
\begin{subfigure}[b]{0.6\textwidth}
\includegraphics[height=.45\linewidth]{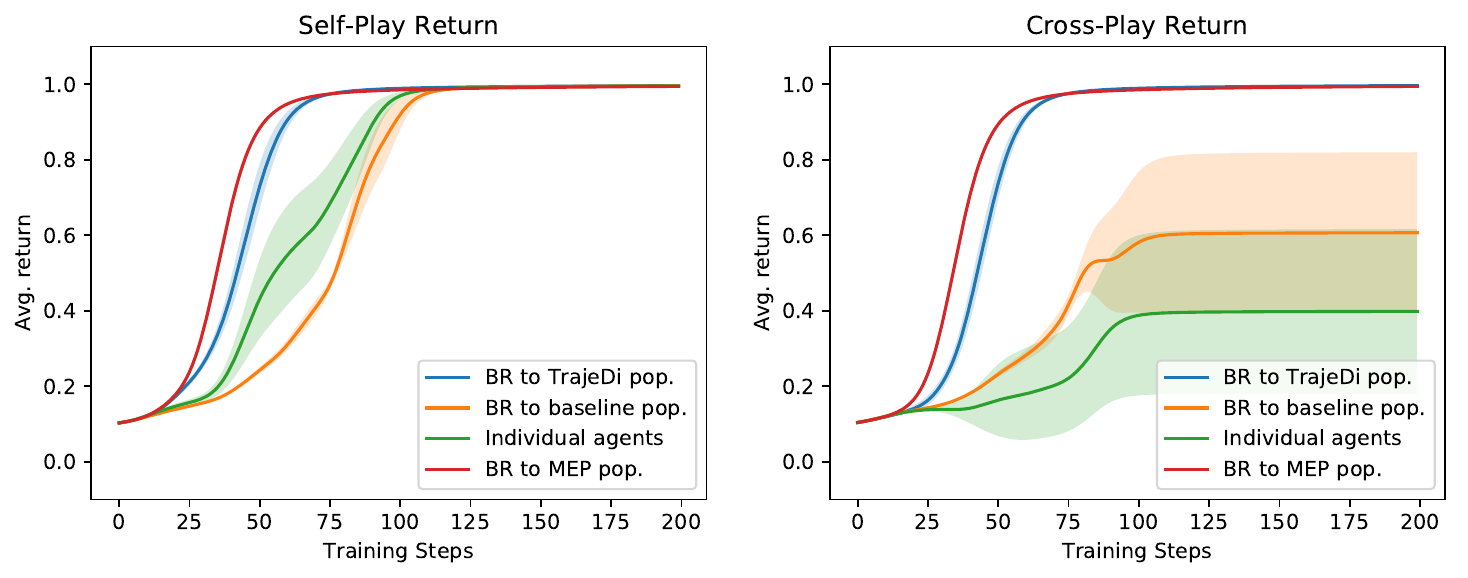}
\end{subfigure}
\caption{\bo{Performance comparison}: Training and test performances in the matrix game. Shown are the results for Best Responses (BRs) to MEP agents, BRs to TrajeDi populations, BRs to baseline populations, and individual agents. MEP allows faster learning compared to TrajeDi and others.}
\label{fig:matrix_game}
\end{figure*}
\prg{Environment:}
To evaluate the proposed method, we first use a toy environment, the matrix game~\cite{lupu2021trajectory}, see Figure~\ref{fig:matrix_game}, and then use the Overcooked environment~\cite{carroll2019utility}, see Figure~\ref{fig:overcooked}. The Overcooked game naturally requires human-AI coordination to achieve a high score.
The players are tasked to cook the onion soups as fast as possible.
The relevant objects are onions, plates, and soups. 
Players are required to place three onions in a pot, cook them for 20 timesteps, put the cooked soup on a plate, and serve the soup. 
Afterward, the players receive a reward of 20. 
The six actions are up, down, left, right, no-operation, and interact.
There are five different layouts, see Figure~\ref{fig:overcooked}, and each exhibits a unique challenge. 
For example, in \emph{Asymmetric Advantages}, the player on the left has the advantage to deliver the soup while the player on the right is closer to the onions. Good players should discover their advantages and play to their strengths.

\bo{Experiments:}
First, we train the population using the PE bonus and investigate the effect of the entropy weight $\alpha$. 
Secondly, we use the learned maximum entropy population to train the AI agent with the learning progress-based prioritized sampling and report the performance.
In the ablation study, we show the effectiveness of both population entropy and prioritized sampling. 
We also show the comparison between MEP and Maximum Population Diversity (MPD), which maximizes the population diversity objective, see Section~\ref{sec:pd}.
We compare our method with other methods, including Self-Play (SP) Proximal Policy Optimization~\cite{PPO,carroll2019utility}, Population Based Training (PBT)~\cite{PBT,carroll2019utility}, Trajectory Diversity (TrajeDi)~\cite{lupu2021trajectory}, Fictitious Co-Play (FCP)~\cite{strouse2021collaborating}, and MPD.
To evaluate these methods, we use the protocol proposed by Carroll et al.~\cite{carroll2019utility}, in which a human proxy model, $H_{Proxy}$, is used for evaluation.
The human proxy model is trained through behavior cloning~\cite{BehavioralCloning} on the collected human data.
Furthermore, we conduct a user study using Amazon Mechanical Turk (AMT), in which we deploy our models through web interfaces and let real human players play with the AI agents. 
We present both the quantitative results and the qualitative findings.
The experimental details are shown in Appendix D.
Our code is available at \url{https://github.com/ruizhaogit/maximum_entropy_population_based_training}.

\begin{question}
How does MEP perform in toy environments?
\end{question}
In the single-step collaborative matrix game~\cite{lupu2021trajectory}, player 1 must select a row while player 2 chooses a column independently. Both agents get the reward associated with the intersection of their choices at the end of the game. We use the same evaluation protocol as proposed by Lupu et al.~\cite{lupu2021trajectory}. 
As shown in Figure~\ref{fig:matrix_game}, MEP converges faster than TrajeDi in both self-play return and cross-play return. An extensive hyper-parameter search for TrajeDi can be found in Figure 6 in Appendix E.

\begin{table*}
\centering
\caption{\bo{Population entropy with different $\alpha$}: In this table, $\alpha$ denotes the weight of the PE reward in Equation~(\ref{eq:maxent_pbt}).}    
\begin{tabular}{c c c c c c}
\toprule
 \ \ \ \ $\alpha$ & Cramped Rm.      & Asymm. Adv.     & Coord. Ring     & Forced Coord.     & Counter Circ. \\
\midrule
0.000   & 0.971   & 1.120   & 0.878    & 0.970    & 0.988 \\
0.001   & 1.031   & 1.051   & 0.907    & 0.858    & 1.152 \\
0.005   & 0.949   & 1.075   & 0.901    & 0.889    & 1.038 \\
0.010   & 1.057   & 1.139   & 0.840    & 1.079    & 1.151 \\
0.020   & 1.029   & 1.074   & 0.947    & 1.093    & 1.171 \\
0.030   & 1.134   & 1.203   & 1.028    & 0.957    & 1.715 \\
0.040   & 1.194   & 1.353   & 1.122    & 1.460    & 1.791 \\
0.050   & 1.127   & 1.364   & 0.996    & 1.703    & 1.791 \\  \bottomrule

\end{tabular}
\label{tab:results}
\end{table*}

\begin{question}
Does PE reward increase the entropy of the population?
\end{question}

To check whether the PE reward increases the entropy of the population, we investigate the effect of different values of $\alpha$ and record the entropy of the population that corresponds to the best reward during training in Table~\ref{tab:results}. As shown in Table~\ref{tab:results}, the population entropy with $\alpha > 0$ is generally greater than that with $\alpha = 0$ and the overall trend is the population entropy increases as $\alpha$ gets larger. This empirical finding verifies that PE reward effectively increases the entropy of the population.

\begin{question}
What does an MEP population look like?
\end{question}
\label{que:population}
To have an intuition of what a maximum entropy population looks like, we show the behavior of the agents in the supplementary video from 0:01 to 0:21.
This video clip presents the population trained without and with the PE reward in the first and second row, respectively.
In the first row, the blue and green agents move in a synchronized way most of the time, and the routines among the five agent pairs are similar. 
For example, the blue agent first passes the onions to the green agent, and then, when the green agent puts all three onions in the pot, the blue agent starts to pass the plate. After that, the green agent delivers the onion soup. 
This kind of routine repeats till the end of the game.
However, in the second row of the video clip, the agents' behavior trained with population entropy reward is more diverse.
The movements of the blue agent and the green agent are less synchronized, and their routines are less predictable.
For example, in the second agent pair, the blue agent throws the onion into a random spot or passes the first and second plates in a row, and in the fifth agent pair, the green agent behaves in a highly unexpected way.
In general, we observe more diverse behaviors and stochasticity of the policies trained with population entropy rewards. 

\begin{figure*}
\centering
\begin{subfigure}[b]{1.\textwidth}
   \includegraphics[width=\linewidth]{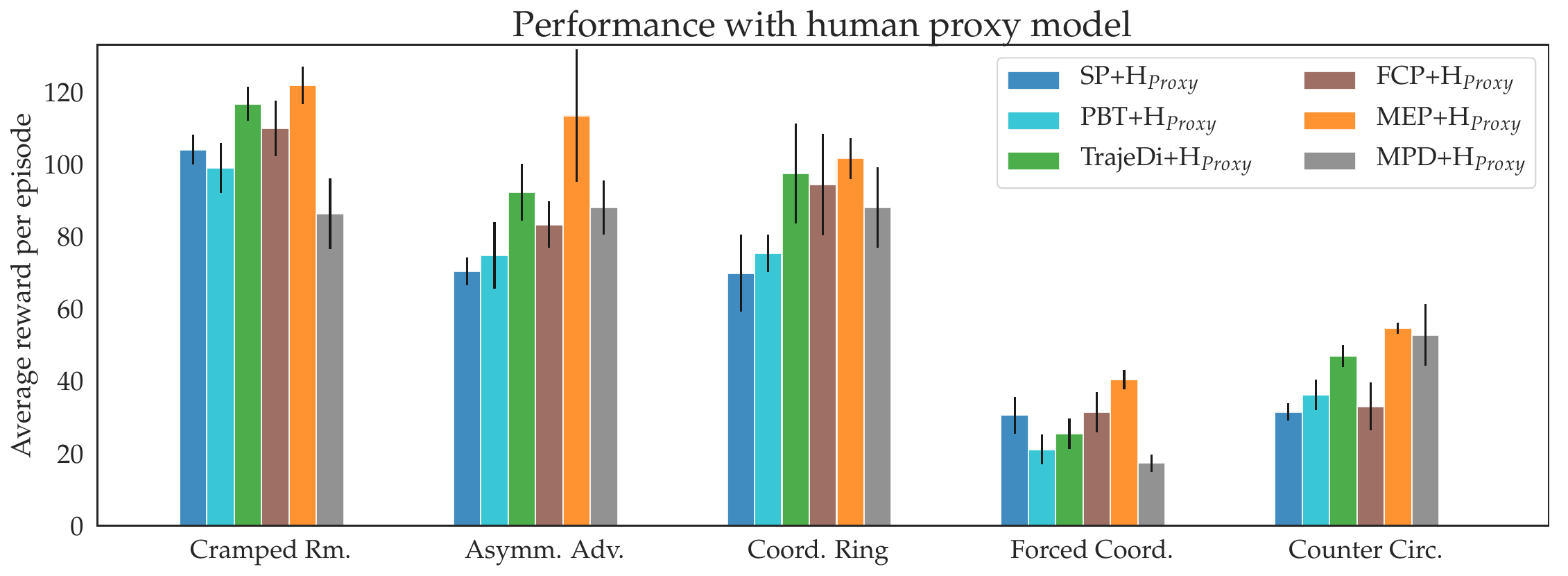}
   \caption{\bo{Performance Comparison}}
   \label{fig:MEP_human_proxy_performance} 
\end{subfigure}
\begin{subfigure}[b]{1.\textwidth}
   \includegraphics[width=\linewidth]{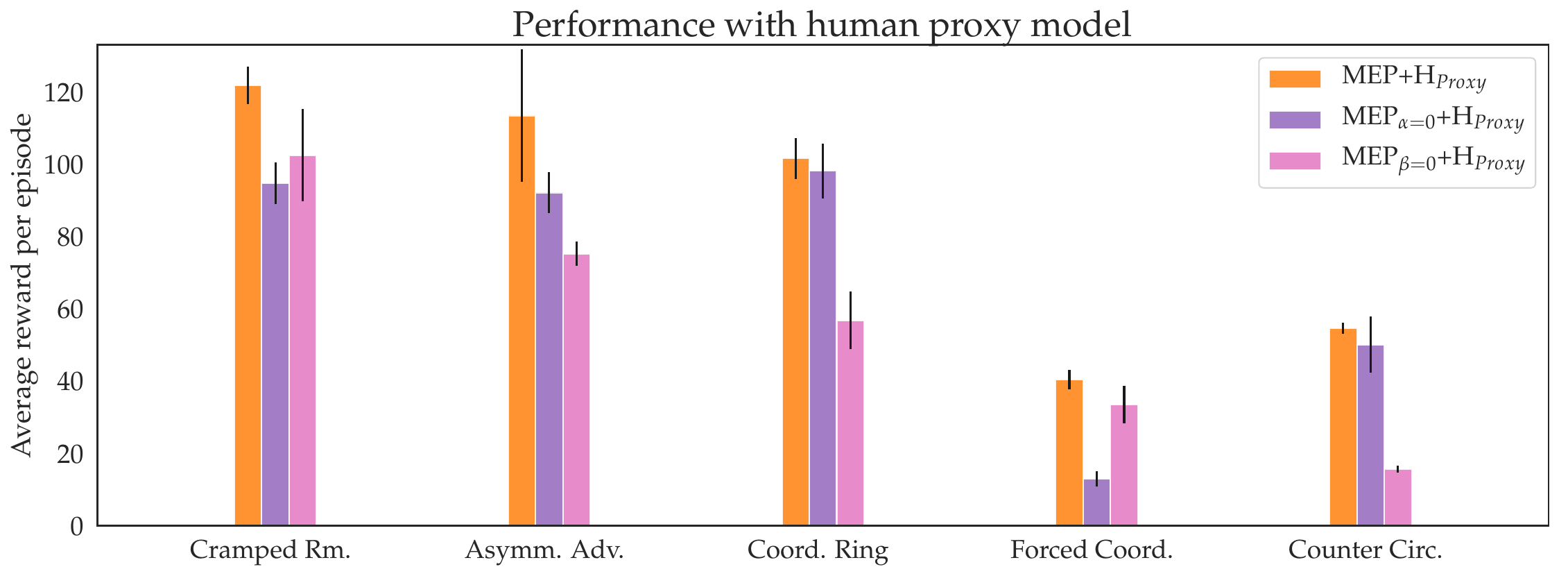}
   \caption{\bo{Ablation tests}}
   \label{fig:MEP_human_proxy_performance_ablation_test}
\end{subfigure}
\caption{\bo{Performance comparison and ablation test}: Average episode rewards over 400 timestep (1 min) trajectories for different methods, with standard error over 5 different random seeds, paired with the proxy human H$_{Proxy}$. 
Figure (a) shows the performance comparison among MEP and other methods including SP, PBT, TrajeDi, FCP, and MPD. 
Figure (b) shows the ablation tests, where we use $\text{MEP}_{\alpha=0}$ to denote MEP without PE reward and use $\text{MEP}_{\beta=0}$ to denote MEP without prioritized sampling. 
For more detailed experimental results, please refer to Figure 7, Figure 8, and Figure 9 in Appendix E.
}
\label{fig:MEP_performance}
\end{figure*}

\begin{question}
How does MEP compare to other methods?
\end{question}
We pair each agent trained with SP, PBT, TrajeDi, FCP, MPD, and MEP, with the human proxy model H$_{Proxy}$, and evaluate the team performance in all the five layouts, as shown in Figure~\ref{fig:overcooked}.
Following the evaluation protocol proposed by Carroll et al.~\cite{carroll2019utility}, we use the cumulative rewards over a horizon of 400 timesteps as the proxy for coordination ability since good coordination between teammates is essential to achieve high scores in the Overcooked game. 
For all the results, we report the average reward per episode and the standard error across five different random seeds. 
As shown in Figure~\ref{fig:MEP_human_proxy_performance}, MEP outperforms other methods in all five layouts when paired with a human proxy model.
Additionally, according to the ablation test shown in Figure~\ref{fig:MEP_human_proxy_performance_ablation_test}, both the population entropy reward and the prioritized sampling are necessary components for achieving the best performance.

We did a hyper-parameter search for TrajeDi on the discounting factor $\gamma$, see Figure 8 in Appendix E, and report the best results, which use $\gamma$ as 1.0 in Figure~\ref{fig:MEP_human_proxy_performance}. 
We also did a hyper-parameter search for FCP on the population size. 
By default, for TrajeDi, MPD, and MEP, we use a population size of 5. However, we use a population size of 10 for FCP in Figure~\ref{fig:MEP_human_proxy_performance}. 
The performance comparison between FCP with population size of 5 and 10 is shown in Figure 9 in Appendix E. 
As the results show, a larger population size indeed improves the performance.
However, MEP outperforms FCP with only half the population size, see Figure~\ref{fig:MEP_human_proxy_performance}.
Similar to FCP~\cite{strouse2021collaborating}, to increase diversity, we use multiple checkpoints, including the beginner model, the middle model, and the best model of each agent in the population to form the final population for training the AI agent. 
For a fair comparison, we use multiple checkpoints for all four methods, i.e., TrajeDi, FCP, MPD, and MEP.

\begin{question}
How does MEP perform with real humans?
\end{question}
\begin{figure*}
  \centering
      \includegraphics[width=1.\linewidth]{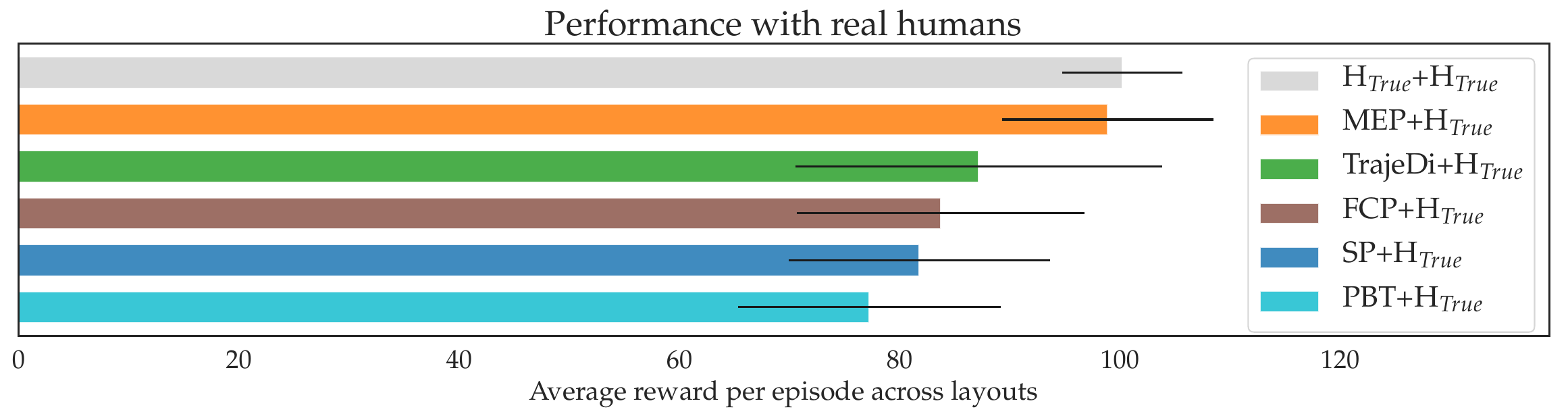}
  \caption{\textbf{Performance with real humans}\label{fig:mep_results}}
  \label{fig:humanai_performance}
\end{figure*}
For this human-AI coordination test, we use Amazon Mechanical Turk (AMT) and follow the same evaluation procedure proposed by Carroll et al.~\cite{carroll2019utility}.
We evaluate TrajeDi, FCP, and MEP-trained AI agents and measure their average episode reward when the agents are paired with a real human player.
We reuse the testing results of SP and PBT from the human-AI evaluation on AMT carried out by Carroll et al.~\cite{carroll2019utility}.
These testing results are compatible because the evaluation procedure is the same and uses a between-subjects design, meaning each user is only paired with a single AI agent.
The results are presented in Figure~\ref{fig:humanai_performance}. 
The chart in Figure~\ref{fig:humanai_performance} shows that, on average, across all five layouts, MEP outperforms other methods, including SP, PBT, FCP, and TrajeDi, and its performance is on par with the Human-Human coordination performance.


\begin{question}
What does AI do when paired with real human players?
\end{question}
Here, we show and analyze some qualitative behaviors observed during the real human-AI coordination experiments, which are shown in the supplementary video from 0:22 to 2:27.
From 0:24 to 0:44, we observe that in the Forced Coordination layout, the MEP-trained agent is more robust and less likely to get stuck during coordination than SP and PBT.
Next, from 0:44 to 1:09, in the Asymmetric Advantage layout, the SP-trained and the PBT-trained agents only learned to put the onion into the pot and did not learn to deliver the onion soup, while the MEP-trained agent learned to put the onion into the pot and learned to deliver the onion soup when its human partner is busy. 
Similarly, from 1:09 to 1:29, in the Cramped Room layout, the SP-trained and PBT-trained agents only learned to use the plate to take the soup, whereas the MEP-trained agent additionally learned to carry the onion to the pot. 
Interestingly, from 1:29 to 1:56, in the Coordination Ring layout, the SP-trained and PBT-trained agent only learned to deliver the onion soup in one direction, while the MEP-trained agent learned to deliver the soup clockwise and counterclockwise, depending on where its human partner stands.
Last but not least, from 2:01 to 2:26, in the Counter Circuit layout, the SP-trained and PBT-trained agents only learned to pass the onion over the ``counter''.
However, the MEP-trained agent also learned to take the plate and deliver the soup.
From all these observations, we conclude that the SP-trained and the PBT-trained agents tend to overfit to the policies they encounter during training, whereas the MEP-trained agent is more robust and flexible in the real world when playing with real human partners.

\section{Related Work}
Over the past few years, deep reinforcement learning has gained many successes in competitive games, such as Go~\cite{AlphaZero}, Dota~\cite{OpenAIFiveFinals}, Quake~\cite{FTW}, and StarCraft~\cite{AlphaStar}. 
In these competitive games, self-play (SP) and population-based training (PBT) have been leveraged to improve performance.
However, the agents trained via SP or PBT tend to learn overly specific policies in collaborative environments~\cite{carroll2019utility}. 
Recent works~\cite{lerer2018learning,tucker2020adversarially,carroll2019utility,knott2021evaluating} tackle the collaboration problem using some behavioral data from the partner to select the equilibrium of the existing agents~\cite{lerer2018learning,tucker2020adversarially} or build and incorporate a human model into the training process~\cite{carroll2019utility,knott2021evaluating}.
However, collecting a large amount of human data in the real world is expensive and onerous. 
In this work, we consider the zero-shot setting, where no behavioral data from the human partner is available during training~\cite{hu2020other}.

From a Bayesian perspective, when we do not have a prior on what the human policies look like, we should train the AI agent to be robust and capable of collaborating with a diverse set of policies~\cite{murphy2012machine}.
One popular approach towards robust AI agents is through maximum entropy reinforcement learning~\cite{ziebart2008maximum,ziebart2010modeling,fox2015taming,haarnoja2017reinforcement,haarnoja2018soft}, and many previous works leverage it as a means of encouraging exploration~\cite{schulman2017equivalence,haarnoja2018soft} or skill discovering~\cite{eysenbach2018diversity,zhao2021mutual}.
However, obtaining a diversified population through entropy maximization is still subjective to research.
In Multi-agent Reinforcement Learning (MARL), a group of agents is trained to achieve a common goal by Centralized Training and Decentralized Execution (CTDE)~\cite{MADDPG,foerster2018counterfactual}.
Taking inspiration from CTDE, we propose to train a population of agents to maximize a centralized surrogate objective -- population entropy, to encourage diversity in the population.
Subsequently, we train the AI agent with the maximum entropy population and dynamically sample the partner agent based on the learning progress, which shares similarities with Prioritized Fictitious Self-Play (PFSP)~\cite{vinyals2019grandmaster}. PFSP is designed exclusively for zero-sum competitive games, whereas we are concerned with cooperative games and derive the relationship between prioritized sampling and cooperation performance lower bound, see Appendix B.
With prioritized sampling, the AI agent can learn a policy that is generally suitable for all the strategies presented in the population.

The idea of MEP shares a common intuition with domain randomization, where some features of the environment are changed randomly during training to make the policy robust to that feature~\cite{tobin2017domain,yu2017preparing,peng2018sim,tan2018sim,openai2019rubiks,tang2020discovering}.
In general, MEP can be seen as a domain randomization technique, where the randomization is conducted over a set of partners' policies.

A recent related work -- TrajeDi~\cite{lupu2021trajectory} has a similar motivation and encourages the trajectories from different agents in the population to be distinct.
TrajeDi directly optimizes the trajectory-level Jensen-Shannon divergence between policies as part of the policy loss, while our method trains the population with reward function augmented by population entropy on the action level. 
However, the variance of the evaluated gradient from TrajeDi could be unbounded due to its trajectory-level importance sampling part, while our formulation does not have importance sampling terms. 
In the experiments, MEP shows superior performance compared to TrajeDi empirically.

Another recent work -- FCP~\cite{strouse2021collaborating} is closely related to our work. Strouse et al.~\cite{strouse2021collaborating} show that with diversity induced by different checkpoints and different random seeds, the agent can generalize well in collaborative games. However, in our experiments, we find out that FCP requires a relatively large population to work well. Compared to FCP, MEP only uses half the population and works better. The experimental results show that MEP is more effective in obtaining a diverse population than FCP, given the same amount of computational resources.

There are also other population diversity-based methods, such as Diversity via Determinants (DvD)~\cite{parker2020effective} and Diversity-Inducing Policy Gradient (DIPG)~\cite{masood2019diversity}. 
DvD is based on the determinant of the kernel matrix, and DIPG is derived from Maximum Mean Discrepancy (MMD).
These two methods are formulated for the single-agent setting, whereas MEP is designed for the multi-agent cooperative setting.
In games with non-transitive dynamics where strategic cycles exist, e.g., Rock-Paper-Scissors, Policy-Space Response Oracle (PSRO)-based methods~\cite{balduzzi2019open,perez2021modelling,liu2021unifying} provide solutions to learn diverse behaviors.
In general, MEP is complementary to these previous works and is applicable to many human-AI coordination tasks.

\section{Conclusion}
This paper introduces Maximum Entropy Population-based training (MEP), a deep reinforcement learning method for robust human-AI coordination.
With the derived population entropy reward encouraging diversity in policies and the learning progress-based prioritized sampling enhancing generalization to unencountered policies, the MEP-trained agents demonstrate more flexibility and robustness to various human strategies. Our result, which bridges maximum entropy RL and PBT, suggests that entropy maximization can be a promising avenue for achieving diversity and robustness in reinforcement learning. Combing MEP with other MARL algorithms could be meaningful directions for future work.

\bibliography{reference}
\bibliographystyle{abbrv}


\appendix

\section*{Appendix}

\section{Population Entropy Lower Bound}
\label{app:lower_bound}
\begin{theorem}
Let the population diversity be defined as:
\begin{align}
\PD(\set{\pi^{(1)}&, \pi^{(2)}, ..., \pi^{(n)}}, s_t) \coloneqq
\frac{1}{n}\sum_{i=1}^n \ent(\pi^{(i)}(\voidarg | s_t)) + \frac{1}{n^2}\sum_{i =1}^n \sum_{j=1}^n D_\text{KL} (\pi^{(i)}(\voidarg | s_t), \pi^{(j)}(\voidarg | s_t))  
\end{align}

Let the population entropy be defined as:
\begin{align}
\PE(\set{\pi^{(1)}, \pi^{(2)}, ..., \pi^{(n)}}, s_t) \coloneqq \ent(\bar{\pi}(\voidarg | s_t)) ,\ \text{where}\ \bar{\pi}(a_t | s_t) \coloneqq \frac{1}{n} \sum_{i=1}^n \pi^{(i)}(a_t|s_t) .
\end{align}
Then, we have 
\begin{align}
\PD(\set{\pi^{(1)}, \pi^{(2)}, ..., \pi^{(n)}}, s_t) \geq \PE(\set{\pi^{(1)}, \pi^{(2)}, ..., \pi^{(n)}}, s_t) ,
\end{align}
where $n$ is the population size.
\end{theorem}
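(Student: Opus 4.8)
The plan is to first collapse the two sums defining $\PD$ into a single double sum over all ordered pairs $(i,j)$, using the elementary identity $\CE(\pi^{(i)}(\voidarg|s_t),\pi^{(i)}(\voidarg|s_t)) = \ent(\pi^{(i)}(\voidarg|s_t))$, so that the ``diagonal'' cross-entropy terms absorb the entropy sum. Writing $p_i(a) \coloneqq \pi^{(i)}(a|s_t)$ for brevity, this yields
\begin{align}
\PD(\set{\pi^{(1)},\dots,\pi^{(n)}}, s_t) = \sum_{i=1}^n\sum_{j=1}^n \CE(\pi^{(i)}(\voidarg|s_t),\pi^{(j)}(\voidarg|s_t)) = -\sum_{a\in\mathcal{A}}\Big(\sum_{i=1}^n p_i(a)\Big)\Big(\sum_{j=1}^n \log p_j(a)\Big).
\end{align}
On the other side, expanding the entropy of $\bar{\pi}(a|s_t) = \frac1n\sum_{j} p_j(a)$ gives
\begin{align}
n^2\,\PE(\set{\pi^{(1)},\dots,\pi^{(n)}}, s_t) = -n\sum_{a\in\mathcal{A}}\Big(\sum_{i=1}^n p_i(a)\Big)\log\bar{\pi}(a|s_t).
\end{align}

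Next I would subtract the two expressions. The claim $\PD \geq n^2\PE$ then reduces to showing that
\begin{align}
\PD - n^2\,\PE = \sum_{a\in\mathcal{A}}\Big(\sum_{i=1}^n p_i(a)\Big)\Big[\,n\log\bar{\pi}(a|s_t) - \sum_{j=1}^n \log p_j(a)\,\Big] \geq 0.
\end{align}
Since the prefactor $\sum_i p_i(a)$ is nonnegative for every action $a$, it suffices to establish the per-action inequality $\log\bar{\pi}(a|s_t) \geq \frac1n\sum_{j=1}^n \log p_j(a)$, i.e.\ that the logarithm of the arithmetic mean of $p_1(a),\dots,p_n(a)$ dominates the arithmetic mean of their logarithms. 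This is exactly Jensen's inequality for the concave function $\log$ (equivalently, the AM--GM inequality). Summing these pointwise inequalities against the nonnegative weights $\sum_i p_i(a)$ and over $a$ finishes the argument.

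The only point requiring care — and the main ``obstacle'', though a mild one — is the degenerate case where some $p_j(a) = 0$ while $p_i(a) > 0$ for another $i$: there the corresponding cross-entropy term is $+\infty$, so $\PD = +\infty$ and the inequality holds trivially; away from that case one may restrict to actions in the common support, where all logarithms are finite and Jensen applies verbatim. I expect no further difficulties: once the double-sum rewriting and the reduction to a pointwise statement are in place, the result is immediate from concavity of $\log$.
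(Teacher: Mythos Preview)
Your proposal is correct and follows essentially the same approach as the paper: collapse $\PD$ into the full double sum $-\sum_a\big(\sum_i p_i(a)\big)\big(\sum_j \log p_j(a)\big)$ by absorbing the entropy terms as the diagonal $i=j$ cross-entropies, then apply Jensen's inequality for the concave function $\log$ to compare $\frac1n\sum_j \log p_j(a)$ with $\log\bar{\pi}(a|s_t)$. The paper presents this as a direct chain of inequalities rather than computing the difference $\PD - n^2\PE$, but the key step is identical; your handling of the $p_j(a)=0$ edge case is a minor addition the paper omits.
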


\begin{proof}
KL-divergence is given by: 
\begin{align}
\KL(\pi^{(i)}(\voidarg | s_t), \pi^{(j)}(\voidarg | s_t))= \sum_{a\in\mathcal{A}} \pi^{(i)}(a_t|s_t) \log \frac{\pi^{(i)}(a_t|s_t)}{ \pi^{(j)}(a_t|s_t)} . 
\end{align}
Entropy is given by:
\begin{align}
\ent(\pi^{(i)}(\voidarg|s_t)) = - \sum_{a\in\mathcal{A}} \pi^{(i)}(a_t|s_t) \log \pi^{(i)}(a_t|s_t) .
\end{align}
We can derive the follows:
\begin{align}
&\PD(\set{\pi^{(1)}, \pi^{(2)}, ..., \pi^{(n)}}, s_t) \\
= &\frac{1}{n^2}\sum_{i =1}^n \sum_{j=1}^n D_\text{KL} (\pi^{(i)}(\voidarg | s_t), \pi^{(j)}(\voidarg | s_t)) + \frac{1}{n}\sum_{i=1}^n \ent(\pi^{(i)}(\voidarg | s_t)) 
\\
=&\frac{1}{n^2}\sum_{i =1}^n \sum_{j=1}^n \CE (\pi^{(i)}(\voidarg | s_t), \pi^{(j)}(\voidarg | s_t)) - \frac{1}{n^2}\sum_{i=1}^n \sum_{j=1}^n \ent(\pi^{(i)}(\voidarg | s_t)) + \frac{1}{n}\sum_{i=1}^n \ent(\pi^{(i)}(\voidarg | s_t))
\\
=&\frac{1}{n^2}\sum_{i =1}^n \sum_{j=1}^n \CE (\pi^{(i)}(\voidarg | s_t), \pi^{(j)}(\voidarg | s_t)) - \frac{1}{n}\sum_{i=1}^n \ent(\pi^{(i)}(\voidarg | s_t) + \frac{1}{n}\sum_{i=1}^n \ent(\pi^{(i)}(\voidarg | s_t))  
\\
=&\sum_{a\in\mathcal{A}}  \frac{1}{n^2} (\sum_{i, j}-\pi^{(i)}(a_t | s_t) \log \pi^{(j)}(a_t | s_t)) \\
\label{eq:jensen_1}
=&\sum_{a\in\mathcal{A}}\sum_i - \frac{1}{n} \pi^{(i)}(a_t|s_t) \sum_j \frac{1}{n} \log \pi^{(j)}(a_t|s_t) \\
\label{eq:jensen_2}
\geq& \sum_{a\in\mathcal{A}} \sum_i -\frac{1}{n} \pi^{(i)}(a_t|s_t) \log \sum_j \frac{1}{n} \pi^{(j)}(a_t|s_t) \\
=&\sum_{a\in\mathcal{A}} - \bar{\pi}(a_t|s_t) \log \bar{\pi}(a_t|s_t) \\
=&\ent(\bar{\pi}(\voidarg|s_t)) \\
=&\PE(\set{\pi^{(1)}, \pi^{(2)}, ..., \pi^{(n)}}, s_t)
\end{align}

From Equation~(\ref{eq:jensen_1}) to Equation~(\ref{eq:jensen_2}), we use Jensen's inequality~\cite{murphy2012machine}.
\end{proof}

\section{Prioritized Sampling and Performance Lower Bound}
\label{app:ps_lower_bound}
Here, we use $\pi^{(A)}$ to denote the AI policy and use $\theta$ to represent the parameter of $\pi^{(A)}$.
At the training step $t$, we use $\theta_t$ to denote the current parameter.
$\{\pi^{(1)},...,\pi^{(n)}\}$ is the population used to train $\pi^{(A)}$. 
We want to find the optimal parameter $\theta^{*}$ for the AI policy $\pi^{(A)}$, so that the AI agent could cooperate well with any agent in the population.

At each training step, we first sample $\pi^{(i)}$ from the population, then let  $\pi^{(i)}$ to cooperate with $\pi^{(A)}$. 
Then, we use the sampled trajectory $\tau$ to train $\pi^{(A)}$.
$J(\pi^{(A)}, \pi^{(i)})$ is the excepted sum rewards achieved by $\pi^{(A)}$ and $\pi^{(i)}$ together. 
With prioritized sampling introduced in Section~\ref{sec:prioritized_sampling}, we assign higher priority to the agent that is harder to collaborate with.
To be more specific, let
\begin{align}
i=\mathop{\arg \min}_{i} J(\pi^{(A)}_{\theta_t}, \pi^{(i)}) ,
\end{align}
then we sample $\pi^{(i)}$ to cooperate with $\pi^{(A)}_{\theta_t}$. 
During training, $\theta$ is updated using the gradient ascent method with non-increasing learning rate $\alpha_t$. 
At each training step $t$,  there exists a subset of $\{1,...,n\}$ denoted by $K_{\theta_t}=\{i_k| k={1,...,l, l\leq n}\}$, such that
\begin{align}
J(\pi^{(A)}_{\theta_t}, \pi^{(i)})> J(\pi^{(A)}_{\theta_t}, \pi^{(i_k)})=C_t,\quad \textrm{if} \, i\notin K_{\theta_t}, i_k\in K_{\theta_t} .
\end{align}
Since prioritized sampling is used, one of $i_{k'}$ ($k'\in{1,...,l}$) could be sampled. 
Then the gradient of the current step $t$ is 
\begin{align}
\nabla_{\theta_t} J(\pi^{(A)}_{\theta_t}, \pi^{(i_{k'})})
\end{align}
The parameter is updated as following:
\begin{align}
\theta_{t+1} = \theta_{t} + \alpha_t \nabla\theta_t ,
\end{align}
where $\alpha_t$ is the learning rate at the training time step $t$.

Assume that $\{J(\pi^{(A)}_{\theta_t}, \pi^{(i)})|i=1,...,n\}$ are smooth towards $\theta_t$, then 
\begin{align}
g(\theta) = \min_{i\in\{1,...,n\}}J(\pi^{(A)}_{\theta}, \pi^{(i)})
\end{align}
is a piece-wise smooth function and $\nabla_{\theta_t} J(\pi^{(A)}_{\theta_t}, \pi^{(i_{k'})})$ is equal to the gradient of $g(\theta)$ almost everywhere. 
Next we prove that using $\nabla_{\theta_t} J(\pi^{(A)}_{\theta_t}, \pi^{(i_{k'})})$, it could also converge to a local maximum of $g(\theta)$.

\begin{lemma}
\label{lem:ps_lower_bound}
$\pi^{(A)}$ with parameter $\theta$ is trained with the population $\{\pi^{(1)},...,\pi^{(n)}\}$. 
We use the learning progress-based prioritized sampling to sample the agent from the population for training. 
Assume that $J(\pi^{(A)}_{\theta}, \pi^{(i)})$ is smooth towards the parameter vector $\theta$ and has an L-Lipschitz gradient for all $i$. $\theta$ is optimized using the gradient ascent with a sufficiently small constant step size. If $J(\pi^{(A)}_{\theta}, \pi^{(i)})$ converges and doesn't go to infinity, it would converge to a local maximum of $g(\theta)$. That is  $\theta$ converges to a neighborhood $V_{\hat{\theta}}$ of $\hat{\theta}$, where $\hat{\theta}$ is define as
\begin{align}
\hat{\theta} = \mathop{\arg\max}_{\theta \in V_{\hat{\theta}}} \min_{i\in \{1,...,n\}}J(\pi^{(A)}_{\theta}, \pi^{(i)}) .
\end{align}
\end{lemma}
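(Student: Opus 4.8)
Write $J_i(\theta) = J(\pi^{(A)}_\theta, \pi^{(i)})$ and $g(\theta) = \min_{i\in\{1,\dots,n\}} J_i(\theta)$. The plan is to treat the prioritized-sampling update as a supergradient ascent on the piecewise-smooth envelope $g$, identify its limit as a stationary point of $g$ using the convergence hypothesis, and then upgrade stationarity to a local maximum. I would first record the structural facts: each $J_i$ is smooth with an $L$-Lipschitz gradient, so $g$, a pointwise minimum of finitely many such functions, is locally Lipschitz and piecewise smooth; it is differentiable wherever the minimizing index is unique, with $\nabla g(\theta) = \nabla J_{i^\star}(\theta)$ for $i^\star = \arg\min_i J_i(\theta)$, and at a kink its superdifferential $\partial g(\theta)$ is the convex hull of $\{\nabla J_i(\theta) : i \in \arg\min_i J_i(\theta)\}$. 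The point of prioritized sampling is that the sampled index always lies in $K_{\theta_t} = \arg\min_i J_i(\theta_t)$, so the step direction $\nabla J_{i_{k'}}(\theta_t)$ is a supergradient of $g$ at $\theta_t$, equal to $\nabla g(\theta_t)$ for almost every $\theta_t$.

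Next I would establish an ascent estimate for $g$. With the constant step size $\alpha_t \equiv \alpha \le 1/L$, the descent lemma applied to $J_{i_{k'}}$ gives $J_{i_{k'}}(\theta_{t+1}) \ge J_{i_{k'}}(\theta_t) + \tfrac{\alpha}{2}\|\nabla J_{i_{k'}}(\theta_t)\|^2$, and $i_{k'}\in K_{\theta_t}$ means $J_{i_{k'}}(\theta_t) = C_t = g(\theta_t)$. To pass from $J_{i_{k'}}$ to $g(\theta_{t+1}) = \min_j J_j(\theta_{t+1})$ I split the remaining indices: the genuinely inactive ones $j \notin K_{\theta_t}$ remain strictly above $C_t$ at $\theta_{t+1}$ by continuity, once $\alpha$ is small relative to the gap $\min_{j\notin K_{\theta_t}} J_j(\theta_t) - C_t$, while the other active ones require a closer look (carried out in the obstacle discussion below). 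Granting this, $g(\theta_t)$ is non-decreasing with per-step gain of order $\|\nabla J_{i_{k'}}(\theta_t)\|^2$, so if $g(\theta_t)$ converges then $\sum_t \|\nabla J_{i_{k'}}(\theta_t)\|^2 < \infty$.

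Then I would invoke the hypothesis. Since the $J_i(\theta_t)$ converge and stay finite, $g(\theta_t)$ converges, so by the previous step $\nabla J_{i_{k'}}(\theta_t) \to 0$; as $\theta_{t+1} - \theta_t = \alpha\nabla J_{i_{k'}}(\theta_t)$ with $\alpha$ constant, $\{\theta_t\}$ is Cauchy and converges to a point $\hat\theta$, with $V_{\hat\theta}$ any sufficiently small ball around it. Since $\{1,\dots,n\}$ is finite, at least one index is sampled infinitely often; along that subsequence it lies in $K_{\theta_t}$, so by continuity it belongs to $\arg\min_i J_i(\hat\theta)$ and has gradient $\lim_t\nabla J_{i_{k'}}(\theta_t) = 0$ there. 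One active index with vanishing gradient already puts $0$ into the convex hull $\partial g(\hat\theta)$, so $\hat\theta$ is a stationary point of $g$; combined with the ascent monotonicity of $g$ along the iterates, which forbids $g$ taking a larger value nearby, this gives $\hat\theta = \arg\max_{\theta\in V_{\hat\theta}} \min_i J_i(\theta)$.

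The main obstacle is the ascent estimate when the active set $K_{\theta_t}$ contains ties: the descent-lemma bound on how far a tied branch $J_j$, $j\in K_{\theta_t}$, can fall below $C_t$ at $\theta_{t+1}$ involves $\|\nabla J_j(\theta_t)\|$, not merely $\|\nabla J_{i_{k'}}(\theta_t)\|$, so it is not automatically absorbed into the gain of $J_{i_{k'}}$ and the admissible step size could a priori shrink near a kink of $g$. I would handle this by bringing the convergence hypothesis in up front and confining the whole analysis to a fixed small ball around $\hat\theta$, on which only finitely many branches are ever active, their gradients are small and comparable (so tied branches move essentially in lock-step), the gaps for genuinely inactive indices are bounded below, and the smoothness constants are uniform, supplemented if needed by a generic-position assumption ruling out persistent ties. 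A secondary gap is that stationarity of the nonconcave $g$ need not by itself force a local maximum rather than a saddle or plateau; the intended resolution is the ascent monotonicity together with, if necessary, a second-order condition making $\hat\theta$ a local maximum of the active branch $J_{i^\star}$, which immediately transfers to $g$ since $g \le J_{i^\star}$ locally.
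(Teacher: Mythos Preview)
Your approach differs substantially from the paper's. You frame prioritized sampling as supergradient ascent on the envelope $g(\theta)=\min_i J_i(\theta)$, derive an ascent estimate via the descent lemma, and use the resulting monotonicity of $g(\theta_t)$ plus square-summability of the sampled gradients to reach stationarity. The paper instead does a case split: if some index $i_{k'}$ remains in the active set $K_{\theta_{t+l}}$ for all $l>0$, then the update is ordinary gradient ascent on the single smooth function $J_{i_{k'}}$ and the paper invokes the result of Lee et al.\ on gradient descent avoiding saddle points; if no index persists, the paper argues that a saddle point of $g$ would force all active branches to have zero gradient and hence coincide locally (reducing to the previous case), and finishes by arguing that a nonzero gradient at the limit would contradict convergence. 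Your route is more systematic and makes the role of the step size and the envelope structure explicit; the paper's is shorter but leans on an external convergence theorem and a rather loose identification-of-branches argument at saddles.

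There is, however, a genuine gap in your argument. From $\sum_t \|\nabla J_{i_{k'}}(\theta_t)\|^2 < \infty$ you deduce only $\nabla J_{i_{k'}}(\theta_t)\to 0$, i.e.\ $\theta_{t+1}-\theta_t\to 0$; this does \emph{not} make $\{\theta_t\}$ Cauchy. For instance, increments of size $\alpha/\sqrt{t}$ are square-summable yet $\theta_t$ diverges like $\sqrt{t}$. So you cannot extract a limit $\hat\theta$ this way, and the subsequent accumulation-point and stationarity claims lose their anchor. The paper sidesteps this entirely by effectively \emph{assuming} that $\theta_t$ converges (its final step begins ``Assume the convergent point $\hat\theta$ has a non-zero gradient\ldots''). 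If you want to keep your supergradient framework, either take $\theta_t\to\hat\theta$ as part of the hypothesis (as the paper implicitly does), or replace the Cauchy claim by an accumulation-point argument showing that every cluster point of $\{\theta_t\}$ is stationary for $g$. Your acknowledged obstacles about tied active indices and about upgrading stationarity to a local maximum are real but secondary to this.
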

\begin{proof} 

$\theta_t$ denotes the parameter of $\pi^{(A)}$ at the training step $t$. 
We define the index set of $i$, that $J(\pi^{(A)}_{\theta_t}, \pi^{(i)})$ equal to $\min_{i\in \{1,...,n\}}J(\pi^{(A)}, \pi^{(i)})$:
\begin{align}
K_{\theta_t}=\{i_k| k={1,...,l, l\leq n}\} ,
\end{align}
where $i_k$ satisfies 
\begin{align}
J(\pi^{(A)}_{\theta_t}, \pi^{(i)})> J(\pi^{(A)}_{\theta_t}, \pi^{(i_k)})=C_t,\quad \textrm{if} \, i\notin K_{\theta_t}, i_k\in K_{\theta_t} .
\end{align}
$i_{k'}$ is sampled from $K_{\theta_t}$ and current gradient is $\nabla_{\theta_t} J(\pi^{(A)}_{\theta_t}, \pi^{(i_{k'})})$. 

If $i_{k'}\in K_{\theta_{t+l}}$ for all $l>0$, the optimization process could be regarded as a non-convex optimization problem by gradient ascent, then $\theta_t$ converges to a local maximum almost surely by the assumption $J(\pi^{(A)}_{\theta_t}, \pi^{(i)})$ has an L-Lipschitz gradient~\cite{lee2016gradient}. 

If $\bigcap_{l>0}K_{\theta_{t+l}}=\emptyset$, then first we prove that the sequence $\{\theta_t\}$ can't converge to a saddle point. If $\hat{\theta}$ is a saddle point, $\nabla_{\theta} J(\pi^{(A)}_{\hat{\theta}}, \pi^{(i_{k'})}) = 0$ for all $i_{k'} \in K_{\hat{\theta}}$, which means $J(\pi^{(A)}_{\theta_t}, \pi^{(i_{k'})})$ are identical in a neighborhood of $\hat{\theta}$. This contradicts the local minimum convergence of $\theta_t$~\cite{lee2016gradient}. 

Assume the convergent point $\hat{\theta}$ has a non-zero gradient $\nabla_{\theta} J(\pi^{(A)}_{\theta_t}, \pi^{(i_{k'})})$, since we use a sufficiently small constant learning rate $\alpha$, if $J(\pi^{(A)}_{\hat{\theta} + \Delta\theta}, \pi^{(i_{k'})}) > J(\pi^{(A)}_{\hat{\theta}}, \pi^{(i_{k'})})$, this would contradict the convergence of $\theta$, and if $J(\pi^{(A)}_{\hat{\theta} + \Delta\theta}, \pi^{(i_{k'})}) \leq J(\pi^{(A)}_{\hat{\theta}}, \pi^{(i_{k'})})$, which means $\hat{\theta}$ is the local maximum.

\end{proof}
From Lemma~\ref{lem:ps_lower_bound}, we can see that with prioritized sampling, we could improve the lower bound of the cooperation performance between $\pi^{(A)}$ and the population $\{\pi^{(1)},...,\pi^{(n)}\}$. 
In comparison, uniform sampling does not provide any guarantee on the worst case. We call $\theta'$ the optimal solution of mean sampling:
\begin{align}
\theta' = \argmax_{\theta} \sum_{i\in \{1,...,n\}} J(\pi^{(A)}_\theta, \pi^{(i)}) .
\end{align}
Then, the worst cooperation between $\pi_{\theta'}$ and the population must be no greater than the cooperation between $\pi_{\hat{\theta}}$ and the population. That is:
\begin{align}
\min_{i\in \{1,...,n\}}J(\pi^{(A)}_{\hat{\theta}}, \pi^{(i)}) \geq \min_{i\in \{1,...,n\}}J(\pi^{(A)}_{\theta'}, \pi^{(i)}) .
\end{align}

\section{Relation to Human-AI Coordination Performance}
\label{app:epsilon_close_and_reward}
To illustrate that we could improve the lower bound of human-AI coordination performance, here we introduce the connection between $\epsilon$-close~\cite{ko2006mathematical} and return, i.e., expected sum rewards.

\begin{definition}
We define that $\pi^{(1)}$ is $\epsilon$-close to $\pi^{(2)}$ at the state $s_{t}$ if
\begin{align}
\left| \frac{\pi^{(1)}(a_t|s_t)}{\pi^{(2)}(a_t|s_t)} - 1 \right| < \epsilon
\end{align}
for all $a_t\in \mathcal{A}$. 
If this is satisfied at every $s_t\in \mathcal{S}$,  we call $\pi^{(1)}$ is $\epsilon$-close to $\pi^{(2)}$. 
\end{definition}

\begin{lemma}
\label{lem:epsilon_close_and_reward}
If an MDP has $T$ time steps and $\pi^{(1)}$ is $\epsilon$-close to $\pi^{(2)}$, then for all $\pi^{(A)}$,  we have 
\begin{align}
(1-\epsilon)^T J(\pi^{(2)}, \pi^{(A)}) < J(\pi^{(1)}, \pi^{(A)}) < (1+\epsilon)^T J(\pi^{(2)}, \pi^{(A)}) ,
\end{align}
where $J(\pi^{(i)}, \pi^{(A)})$ denotes the expected sum reward achieved by $\pi^{(i)}$ and $\pi^{(A)}$ collaborating with each other.
\end{lemma}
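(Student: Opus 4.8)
The plan is to bound the per-step contribution of the trajectory distribution induced by $(\pi^{(1)}, \pi^{(A)})$ against the one induced by $(\pi^{(2)}, \pi^{(A)})$, and then multiply over the $T$ steps. The key observation is that $J(\pi^{(1)}, \pi^{(A)})$ is an expectation of $\sum_{t} \R(s_t, a_t)$ where the trajectory probability factors as a product of terms $\pi^{(1)}(a^{(1)}_t \mid s_t)\,\pi^{(A)}(a^{(A)}_t \mid s_t)\,\PS(s_{t+1} \mid s_t, a_t)$. Only the factors involving $\pi^{(1)}$ differ from the corresponding expression for $\pi^{(2)}$; the $\epsilon$-close assumption gives $(1-\epsilon)\,\pi^{(2)}(a_t \mid s_t) < \pi^{(1)}(a_t \mid s_t) < (1+\epsilon)\,\pi^{(2)}(a_t \mid s_t)$ at every state, so each of the $T$ action-probability factors for player $1$ along a trajectory is sandwiched between $(1-\epsilon)$ and $(1+\epsilon)$ times its counterpart.

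First I would write $J(\pi^{(1)}, \pi^{(A)}) = \sum_{\tau} \PP_{1A}(\tau)\, \R(\tau)$ where $\R(\tau) = \sum_t \R(s_t, a_t) \geq 0$ (rewards are nonnegative in Overcooked, so the inequalities have the right direction — I would note this assumption explicitly), and $\PP_{1A}(\tau)$ is the product over the $T$ steps. Second, for a fixed trajectory $\tau$, I would compare $\PP_{1A}(\tau)$ and $\PP_{2A}(\tau)$ factor by factor: the transition factors and the $\pi^{(A)}$ factors are identical, and there are exactly $T$ factors of the form $\pi^{(\cdot)}(a^{(1)}_t \mid s_t)$ that differ, each within a multiplicative $(1\pm\epsilon)$ window. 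Hence $(1-\epsilon)^T \PP_{2A}(\tau) < \PP_{1A}(\tau) < (1+\epsilon)^T \PP_{2A}(\tau)$ for every $\tau$. Third, I would multiply through by $\R(\tau) \geq 0$ and sum over all $\tau$, which preserves the inequalities and yields exactly $(1-\epsilon)^T J(\pi^{(2)}, \pi^{(A)}) < J(\pi^{(1)}, \pi^{(A)}) < (1+\epsilon)^T J(\pi^{(2)}, \pi^{(A)})$.

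The main obstacle is mostly a matter of careful bookkeeping rather than deep difficulty: one must be precise about what "an MDP has $T$ time steps" means (finite horizon, so each trajectory has exactly $T$ action choices per player), and one must use the nonnegativity of the cumulative reward to push the per-trajectory bounds through the summation without flipping a sign. A secondary subtlety is that the bound is stated with a strict inequality, which is fine as long as $\epsilon > 0$ and the relevant probabilities are positive; if some $\pi^{(2)}(a_t \mid s_t) = 0$ the ratio in the definition of $\epsilon$-close is only meaningful where the denominator is nonzero, so I would either restrict attention to the support or note that zero-probability actions contribute nothing to either side. Beyond that, the argument is a direct product-over-steps estimate and requires no further machinery.
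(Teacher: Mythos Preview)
Your proposal is correct and follows essentially the same route as the paper: factor the trajectory probability, bound each of the $T$ factors involving $\pi^{(1)}$ by $(1\pm\epsilon)$ times the corresponding $\pi^{(2)}$ factor, then sum against the (nonnegative) trajectory reward. In fact you are slightly more careful than the paper, since you explicitly flag the nonnegativity of rewards and the support issue for the ratio in the $\epsilon$-close definition, both of which the paper leaves implicit.
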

\begin{proof}
For all trajectory $\tau$, we have
\begin{align}
p_{\pi^{(1)}, \pi^{(A)}}(\tau) = p(s_0)\Pi_{t=0}^{T-1}\pi^{(A)}(a_t|s_t)\pi^{(1)}(a_t|s_t)p(s_{t+1}|s_t, a_t^{(A)}, a_t^{(1)}) .
\end{align}
Since 
\begin{align}
(1-\epsilon)\pi^{(2)}(a_t|s_t)< \pi^{(1)} (a_t|s_t)< (1+\epsilon) \pi^{(2)}(a_t|s_t) ,
\end{align}
we have
\begin{align}
(1-\epsilon)^T p_{\pi^{(2)}, \pi^{(A)}}(\tau) < p_{\pi^{(1)}, \pi^{(A)}}(\tau)  < (1+\epsilon)^T p_{\pi^{(2)}, \pi^{(A)}}(\tau) .
\end{align}
And because
\begin{align}
J(\pi^{(i)}, \pi^{(A)}) = \sum_{\tau}p_{\pi^{(1)}, \pi^{(A)}}(\tau) r(\tau) ,
\end{align}
where 
\begin{align}
r(\tau)=r(s_0,a_0,...,s_{T-1}) = \sum_t r(s_t,a_t)\ \text{and}\ a_t=(a^{(i)}, a^{(A)}) .
\end{align}
Therefore, we have 
\begin{align}
(1-\epsilon)^T J(\pi^{(2)}, \pi^{(A)})<J(\pi^{(1)}, \pi^{(A)})<(1+\epsilon)^T J(\pi^{(2)}, \pi^{(A)}) .
\end{align}
\end{proof}

We use $\pi^{(H)}$ to denote the human player's policy. 
From Lemma~\ref{lem:epsilon_close_and_reward}, we can see that if $\pi^{(H)}$ is similar to any of the policy $\pi^{(i)}$ in the population $\{\pi^{(1)},...,\pi^{(n)}\}$ in a certain degree, measured by $\epsilon$-close, then its cooperation performance with the AI policy $\pi^{(A)}$, which is trained with the population, would not deteriorate too much. 
Furthermore, we derive the following corollary.

\begin{corollary}
\label{cor:hai_performance}
We call the infimum of expected sum rewards of $\pi^{(A)}$ cooperating with the population $\{\pi^{(1)},...,\pi^{(n)}\}$ as:
\begin{align}
\min_{i\in \{1,...,n\}}J(\pi^{(A)}, \pi^{(i)})=C .
\end{align}  
If $\pi^{(H)}$ is $\epsilon$-close to the policy $\pi^{(i)}$ in the population, then we have
\begin{align}
J(\pi^{(A)}, \pi^{(H)}) > C (1-\epsilon) ^ T ,
\end{align}
where $T$ is the total steps in the trajectory.
\end{corollary}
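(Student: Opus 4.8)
The plan is to obtain the corollary directly from Lemma~\ref{lem:epsilon_close_and_reward} together with the definition of $C$, with essentially no extra work. First I would apply Lemma~\ref{lem:epsilon_close_and_reward} with the substitution $\pi^{(1)} \leftarrow \pi^{(H)}$ and $\pi^{(2)} \leftarrow \pi^{(i)}$, where $\pi^{(i)}$ is the population member that $\pi^{(H)}$ is assumed to be $\epsilon$-close to. The hypothesis of the corollary matches the hypothesis of the lemma verbatim, so the lemma yields
\begin{align}
J(\pi^{(A)}, \pi^{(H)}) > (1-\epsilon)^T J(\pi^{(A)}, \pi^{(i)}) ,
\end{align}
where I use that $J$ is symmetric in its two policy arguments, since it is the expected sum of the shared team reward and does not depend on the order in which the two collaborating policies are listed.

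Next I would invoke the definition $C = \min_{j\in\{1,\dots,n\}} J(\pi^{(A)}, \pi^{(j)})$, which immediately gives $J(\pi^{(A)}, \pi^{(i)}) \geq C$. Multiplying this inequality by the factor $(1-\epsilon)^T$, which is nonnegative because $0 < \epsilon < 1$, preserves its direction, so $(1-\epsilon)^T J(\pi^{(A)}, \pi^{(i)}) \geq (1-\epsilon)^T C$. Chaining this with the displayed inequality above gives $J(\pi^{(A)}, \pi^{(H)}) > (1-\epsilon)^T C$, which is exactly the claim.

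There is no genuine obstacle here; the only points needing care are bookkeeping. One must ensure $\epsilon < 1$ so that $(1-\epsilon)^T$ is positive and the bound $J(\pi^{(A)}, \pi^{(i)}) \geq C$ can be transported through the multiplication; and one should note that the nonnegativity of the per-step reward—which is what makes Lemma~\ref{lem:epsilon_close_and_reward} valid, and which holds in Overcooked—is already assumed upstream, so it need not be re-established in this corollary.
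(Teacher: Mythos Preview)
Your proposal is correct and follows essentially the same route as the paper's own proof: apply Lemma~\ref{lem:epsilon_close_and_reward} to get $J(\pi^{(A)}, \pi^{(H)}) > (1-\epsilon)^T J(\pi^{(A)}, \pi^{(i)})$, then combine with $J(\pi^{(A)}, \pi^{(i)}) \geq C$ to conclude. The only difference is that you are slightly more careful about bookkeeping (symmetry of $J$, positivity of $(1-\epsilon)^T$, nonnegativity of rewards), which the paper leaves implicit.
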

\begin{proof}
If $\pi^{(H)}$ is $\epsilon$-close to $\pi^{(i)}$, based on the property of $\epsilon$-close, see Lemma~\ref{lem:epsilon_close_and_reward}, 
we have
\begin{align}
J(\pi^{(A)}, \pi^{(H)}) >  (1-\epsilon)^T J(\pi^{(A)}, \pi^{(i)}) .
\end{align}
Additionally, since 
\begin{align}
J(\pi^{(A)}, \pi^{(i)}) > C ,
\end{align}
we have 
\begin{align}
J(\pi^{(A)}, \pi^{(H)}) > C (1-\epsilon)^T.
\end{align}
\end{proof}
Since prioritized sampling optimizes the lower bound of expected sum rewards of the AI agent cooperating with the population, see Lemma~\ref{lem:ps_lower_bound}, and with Corollary~\ref{cor:hai_performance}, we could say that it also optimizes the lower bound of expected sum rewards of the AI agent cooperating with the Human player, when the population is diverse and representative enough so that it is close to cover human behaviors.

\section{Experiment Details}
\label{app:experimental_details}
We ran all the methods in each environment with 5 different random seeds and report the average episode reward and the standard deviation. 
The experiments of the maximum entropy population-based training use the following hyper-parameters:
\begin{itemize}
\item The learning rate is 8e-4.
\item The reward shaping horizon is 5e6.
\item The environment steps per agent is 1.1e7.
\item The number of mini-batches is 10.
\item The mini-batch size is 2000.
\item PPO iteration timesteps are 40000. The PPO iteration timesteps refer to the length in environment timesteps for each agent pairing training.
\item The population size is 5 for training the maximum entropy population. We use the beginner model, the middle model, and the best model of each agent in the population to form the final population for training the AI agent.
\item The weight $\alpha$ for the population entropy reward is 0.01 in general. For the Forced Coordination layout, we use 0.04.
\item The number of parallel environments used for simulating rollouts is 50.
\item The discounting factor $\gamma$ is 0.99.
\item The max gradient norm is 0.1.
\item The PPO clipping factor is 0.05.
\item The number of hidden layers is 3.
\item The size of hidden layers is 64.
\item The number of convolution layers is 3.
\item The number of filters is 25.
\item The value function coefficient 0.1.
\item The $\beta$ for prioritized sampling is 3.
\end{itemize}

\section{Experimental Results}
\label{app:experimental_results}

\begin{figure}[h]
\centering
\includegraphics[height=.27\linewidth]{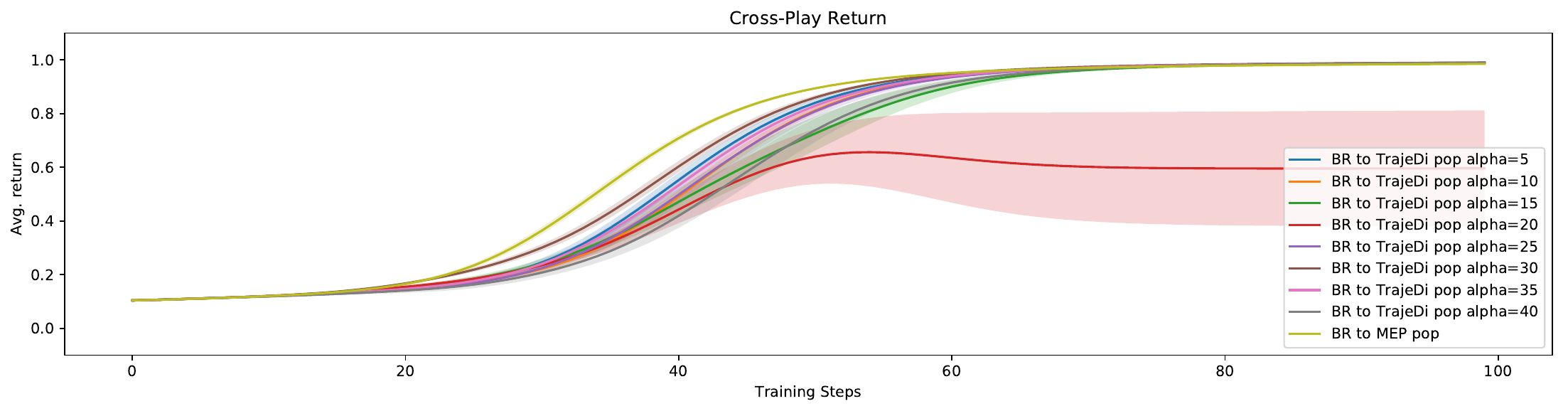}
\caption{\bo{Performance comparison}: We did an extensive hyper-parameter search for TrajeDi. MEP converges faster than TrajeDi under all the parameters.}
\label{fig:matrix_game_hyperparameter}
\end{figure}

\begin{figure}[!h]
\centering
\begin{subfigure}[b]{1.\textwidth}
   \includegraphics[width=\linewidth]{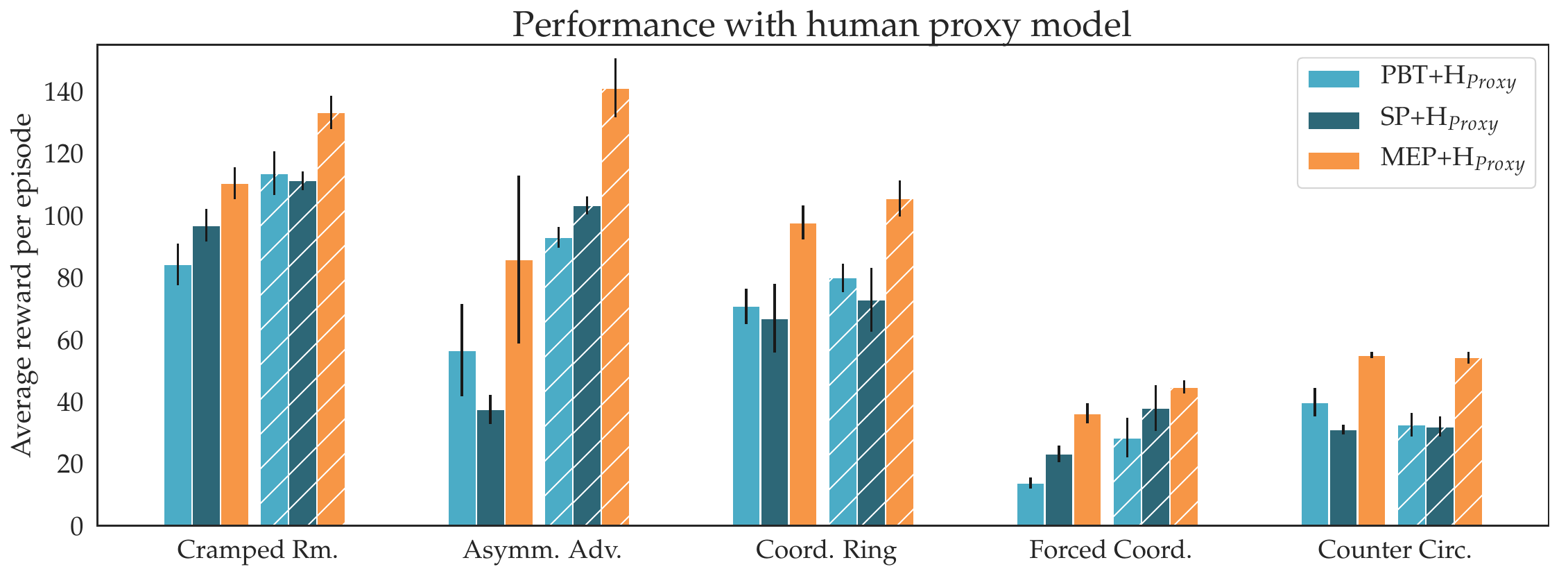}
   \caption{\bo{Performance Comparison}}
   \label{fig:MEP_COMB_2_exp_plot} 
\end{subfigure}
\begin{subfigure}[b]{1.\textwidth}
   \includegraphics[width=\linewidth]{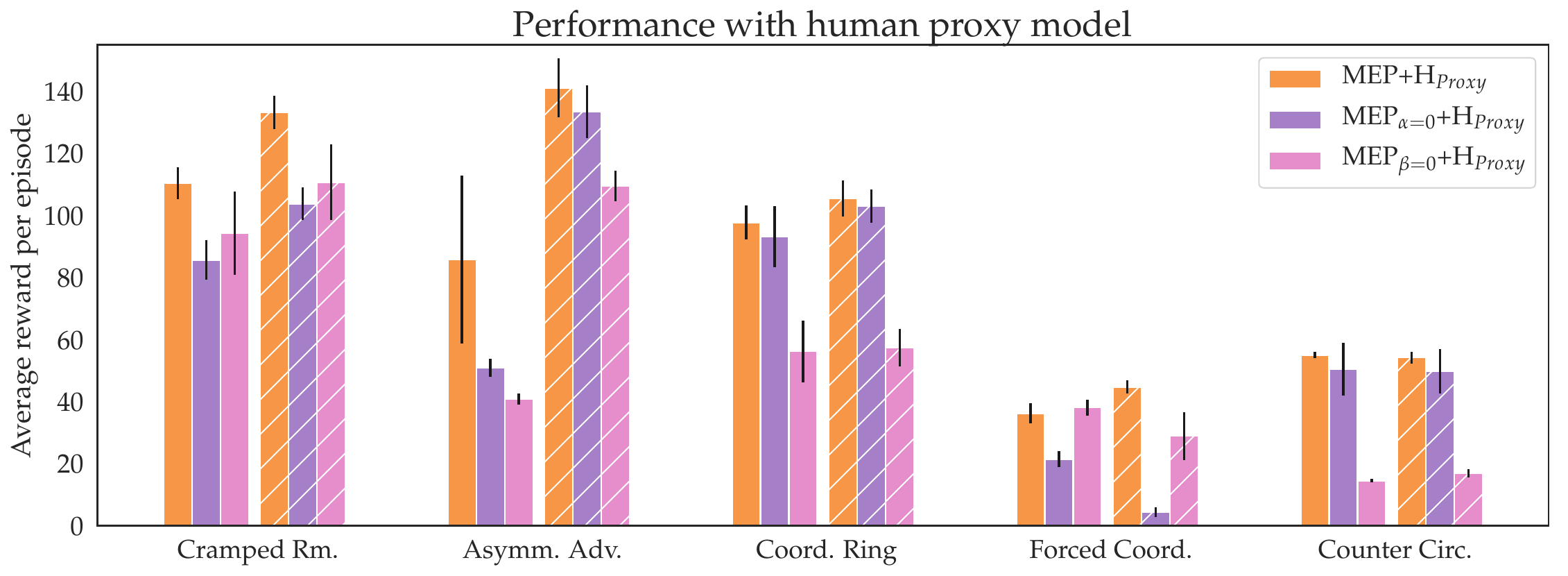}
   \caption{\bo{Ablation tests}}
   \label{fig:MEP_Ablation_COMB_3_exp_plot}
\end{subfigure}
\begin{subfigure}[b]{1.\textwidth}
   \includegraphics[width=\linewidth]{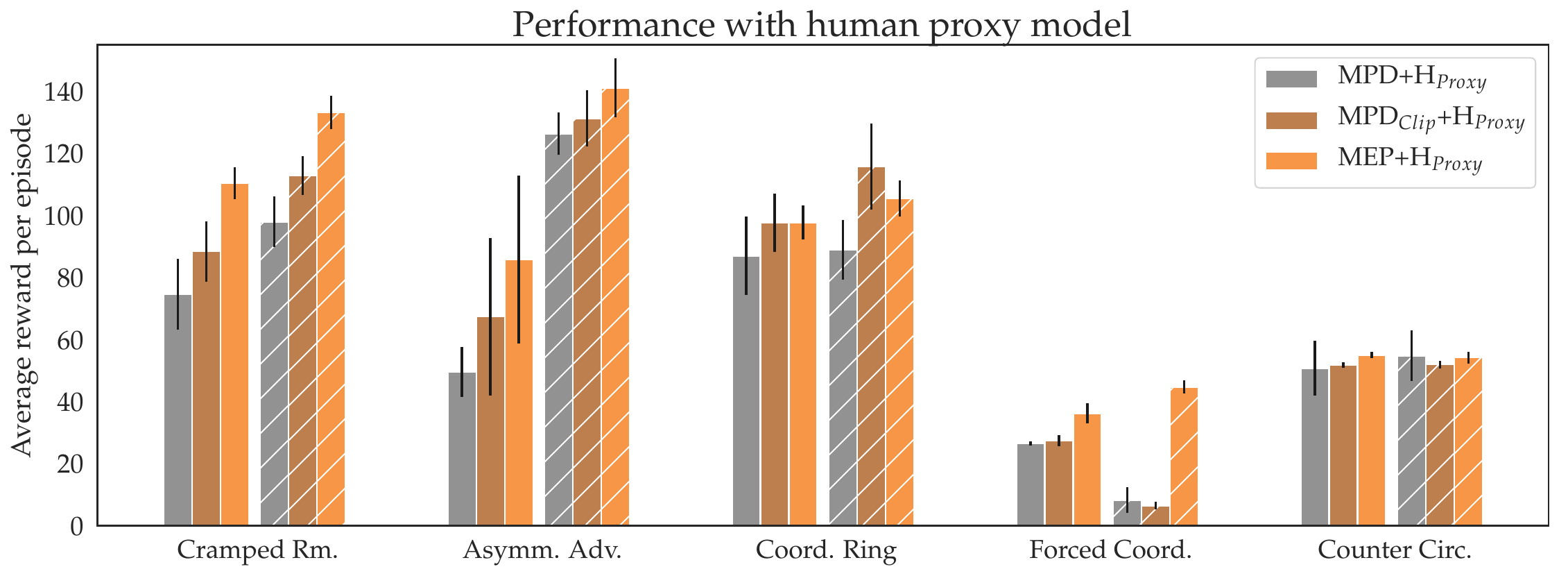}
   \caption{\bo{Performance Comparison}}
   \label{fig:MPD_COMB_12_exp_plot}
\end{subfigure}
\caption{\bo{Performance comparison and ablation tests}: Average episode rewards over 400 timestep (1 min) trajectories for different methods, with standard error over 5 different random seeds, paired with the proxy human H$_{Proxy}$. 
The hashed bars with the slash (/) show results with the starting position of the agents switched.
Figure (a) shows the performance comparison among MEP, SP and PBT. 
Figure (b) shows the ablation tests, where we use $\text{MEP}_{\alpha=0}$ and $\text{MEP}_{\beta=0}$ to denote the MEP model without the population entropy reward and without the prioritized sampling mechanism, respectively. 
Figure (c) shows the performance comparison with Maximum Population Diversity (MPD) with or without clipping the importance weights that are greater than 1.
}
\label{fig:MEP_MEPAblation_MPD_experiments}
\end{figure}

\begin{figure}[!h]
\centering
\includegraphics[width=1.\linewidth]{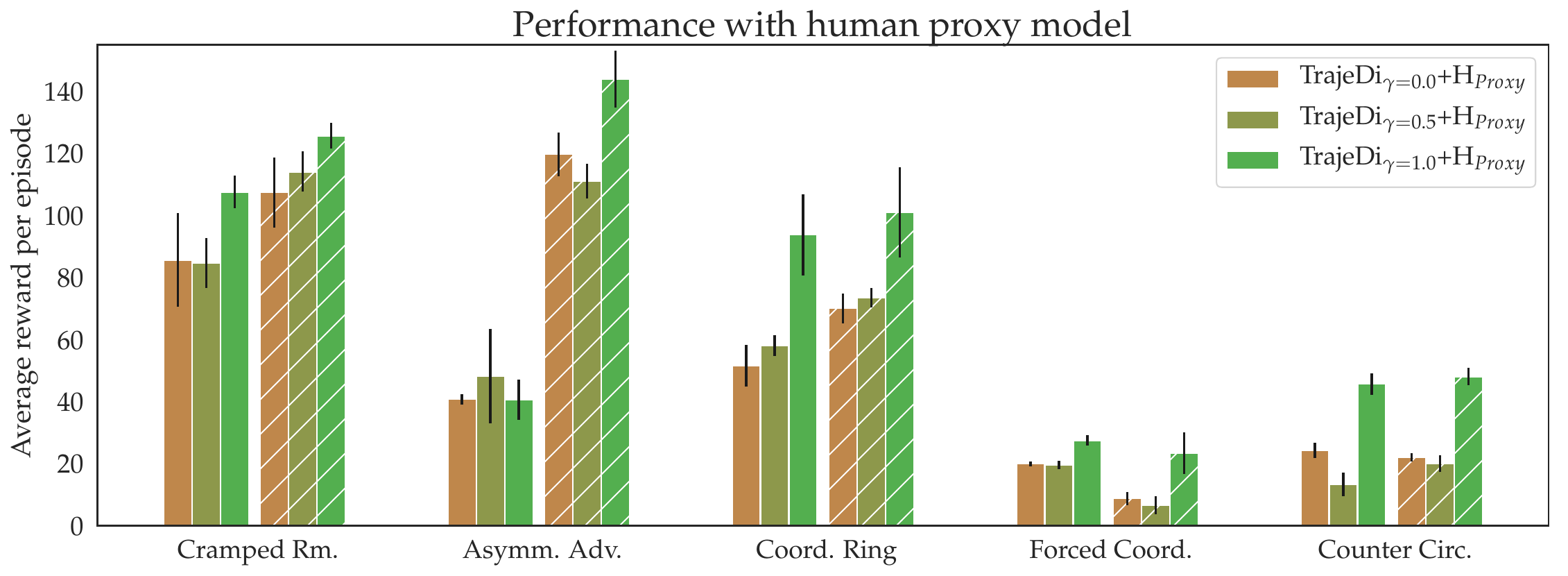}
\caption{\bo{TrajeDi performance comparison}: This figure shows the performance comparison of TrajeDi with different discounting factors $\gamma$, i.e., 0.0, 0.5, 1.0. Average episode rewards over 400 timestep (1 min) trajectories for different methods, with standard error over 5 different random seeds, paired with the proxy human H$_{Proxy}$. 
The hashed bars with the slash (/) show results with the starting position of the agents switched.
}
\label{fig:TrajeDi_experiments}
\end{figure}

\begin{figure}[!h]
\centering
\includegraphics[width=1.\linewidth]{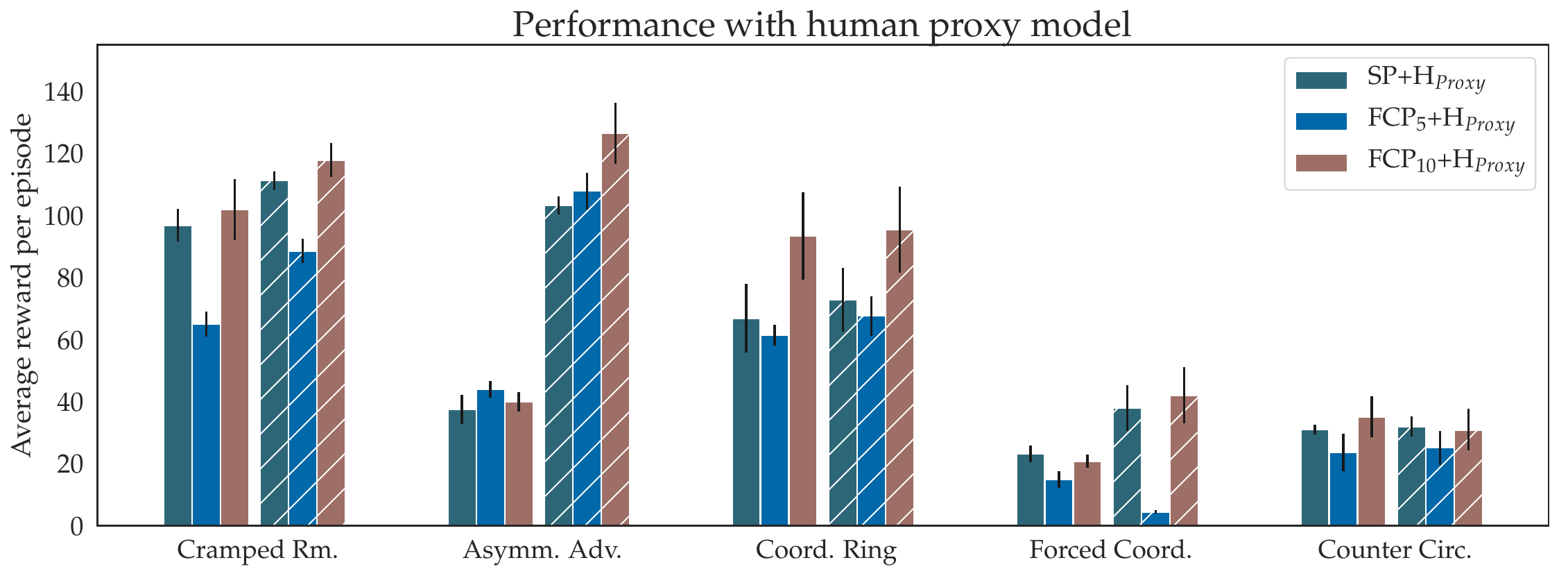}
\caption{\bo{FCP performance comparison}: This figure shows the performance comparison of FCP with different population sizes, i.e., 5 or 10. Average episode rewards over 400 timestep (1 min) trajectories for different methods, with standard error over 5 different random seeds, paired with the proxy human H$_{Proxy}$. 
The hashed bars with the slash (/) show results with the starting position of the agents switched.
}
\label{fig:FCP_experiments}
\end{figure}

\begin{table}[h]
  \centering
  \begin{tabular}{M{0.218\linewidth}M{0.218\linewidth}M{0.218\linewidth}M{0.218\linewidth}}
     \toprule
      $\alpha$=0.000 & $\alpha$=0.001 & $\alpha$=0.005 & $\alpha$=0.010 \\
      \midrule
      \includegraphics[width=\linewidth]{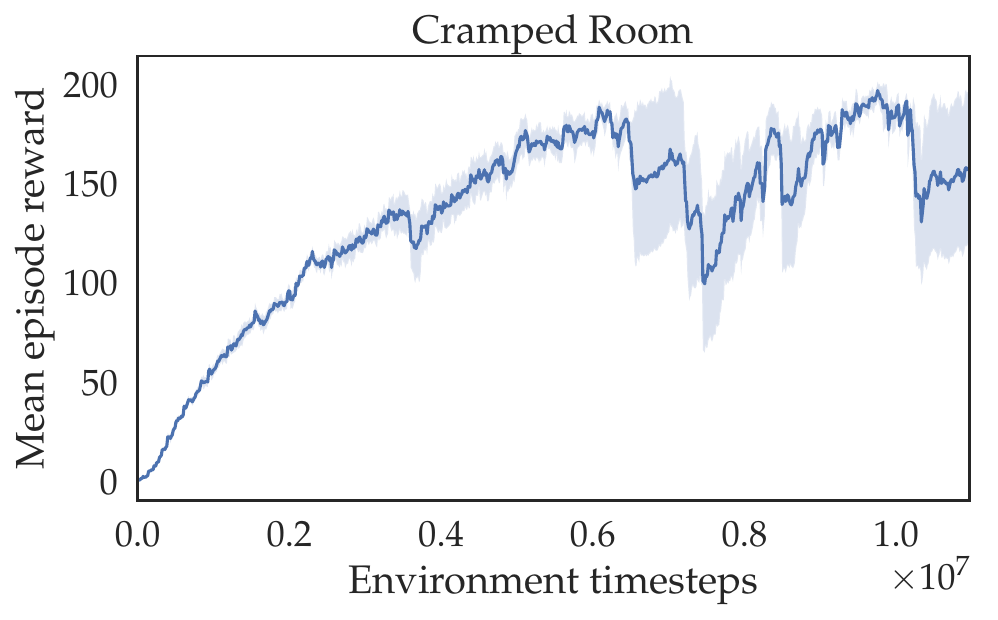} & \includegraphics[width=\linewidth]{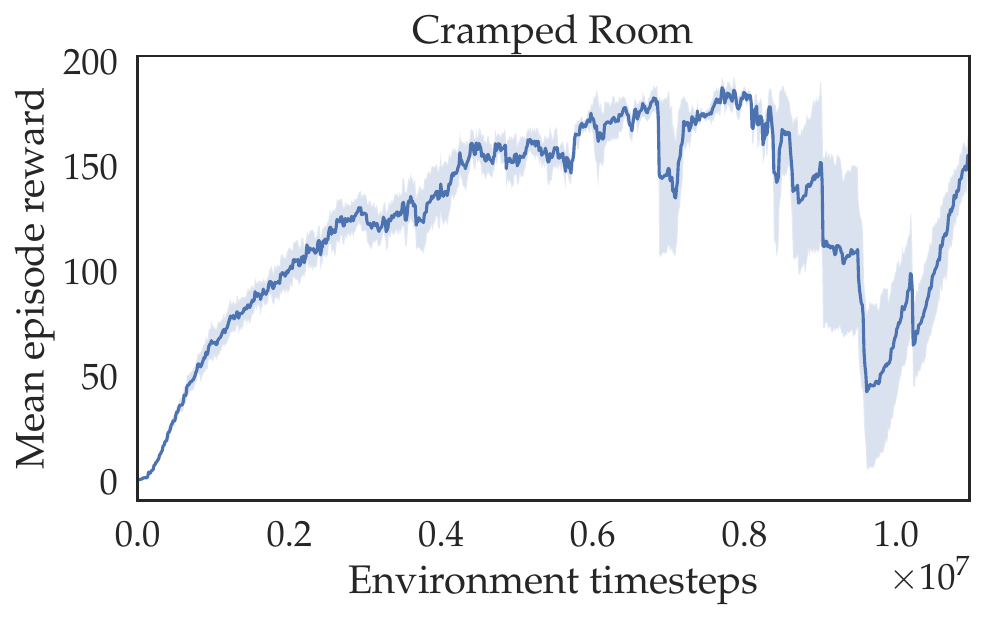} & \includegraphics[width=\linewidth]{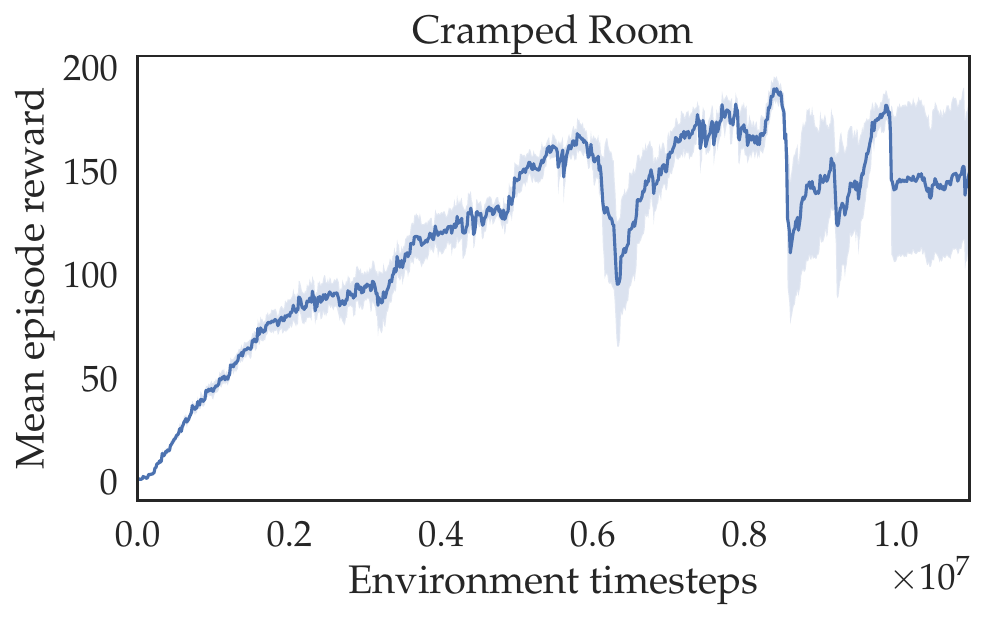} & \includegraphics[width=\linewidth]{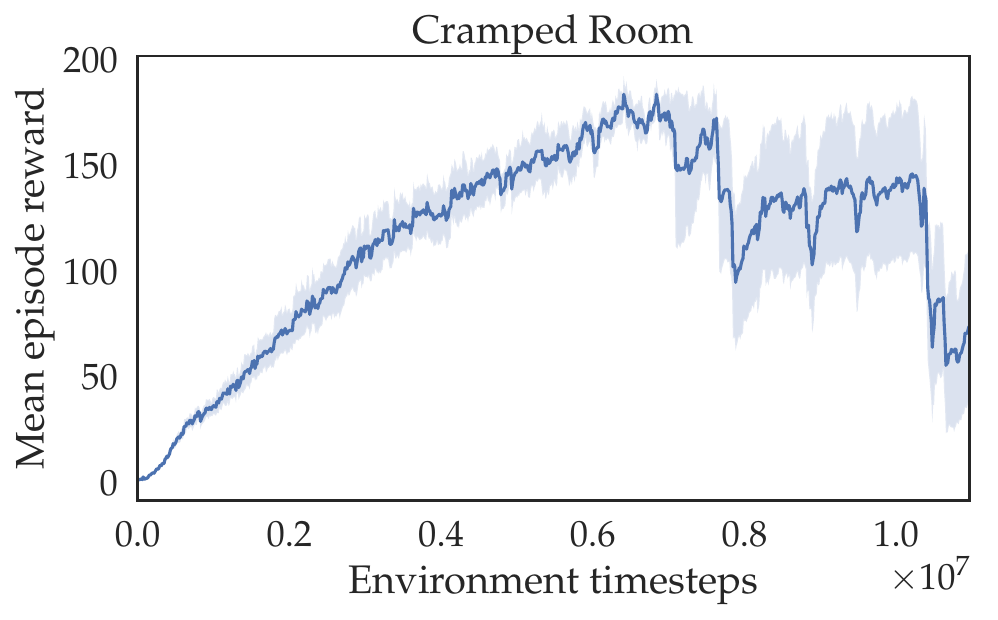} \\
      \includegraphics[width=\linewidth]{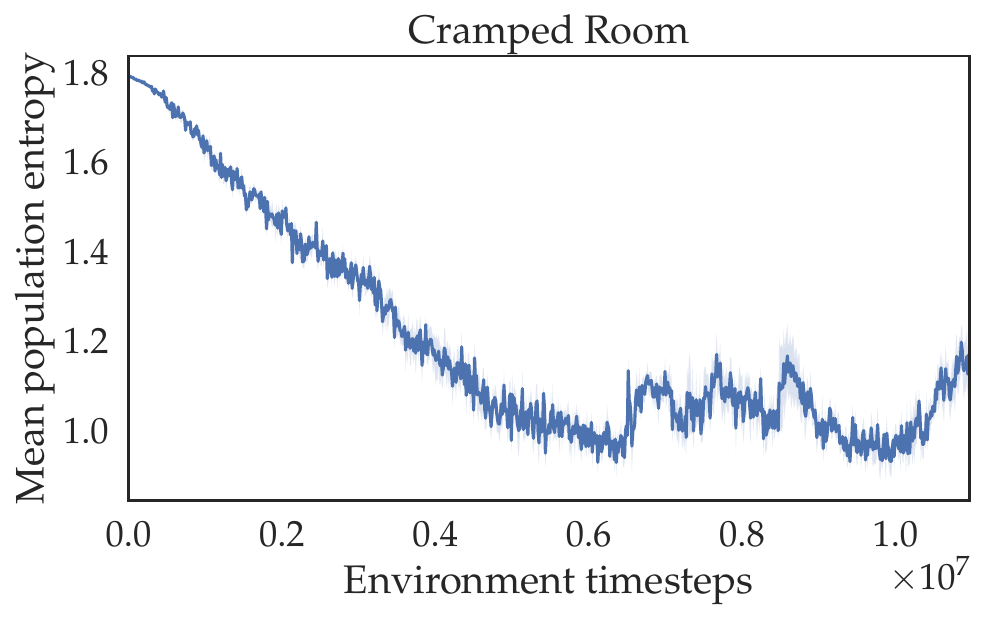} & \includegraphics[width=\linewidth]{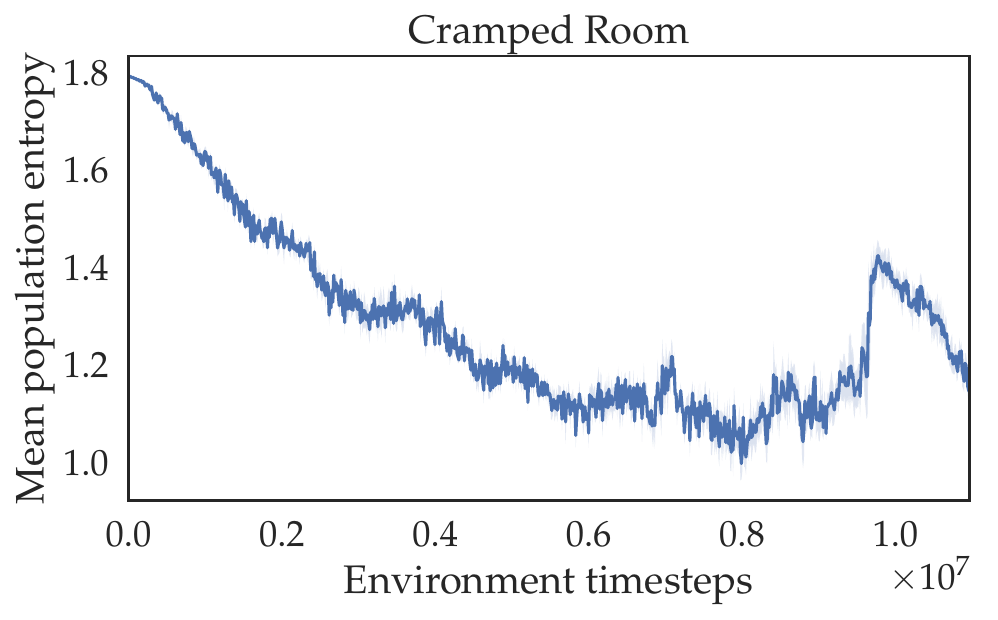} & \includegraphics[width=\linewidth]{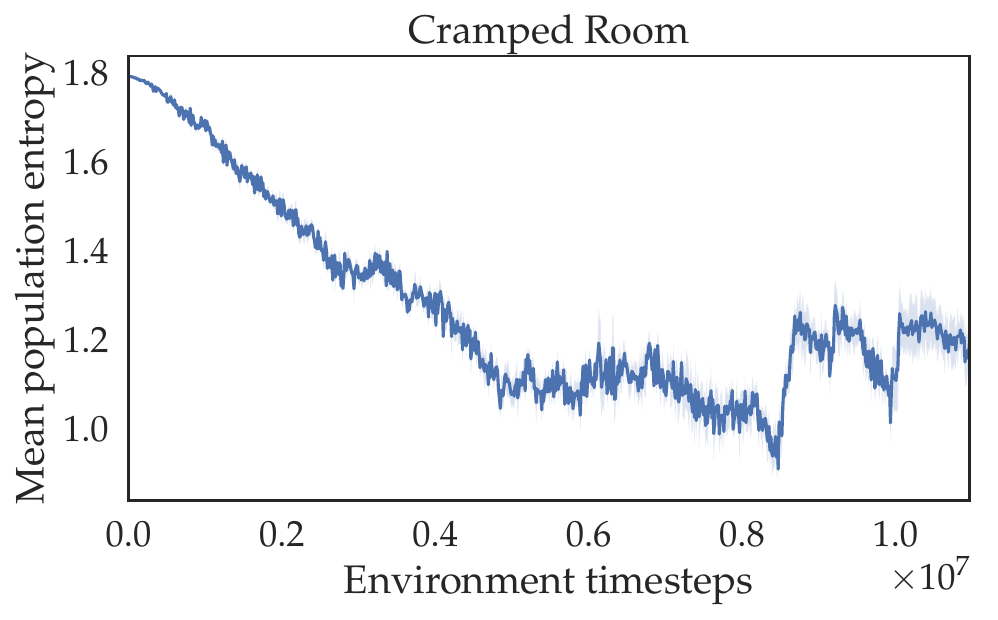} & \includegraphics[width=\linewidth]{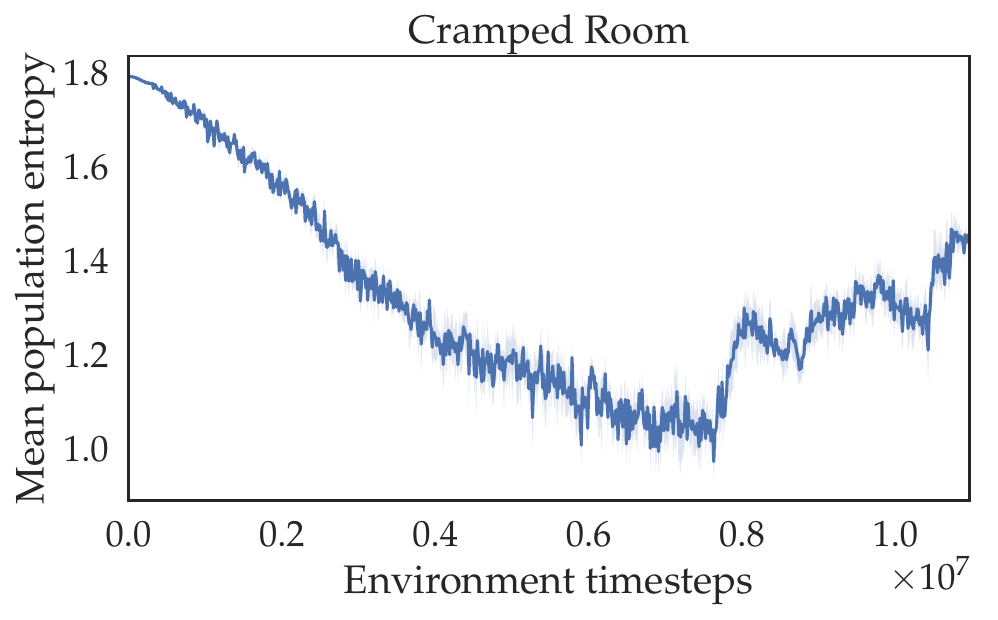} \\ \midrule
      \includegraphics[width=\linewidth]{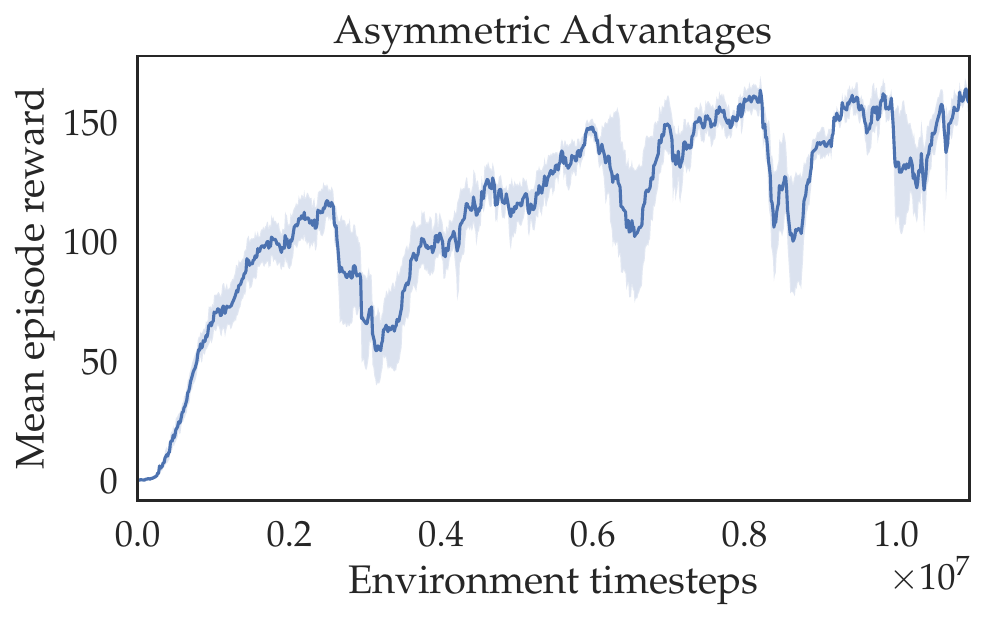} & \includegraphics[width=\linewidth]{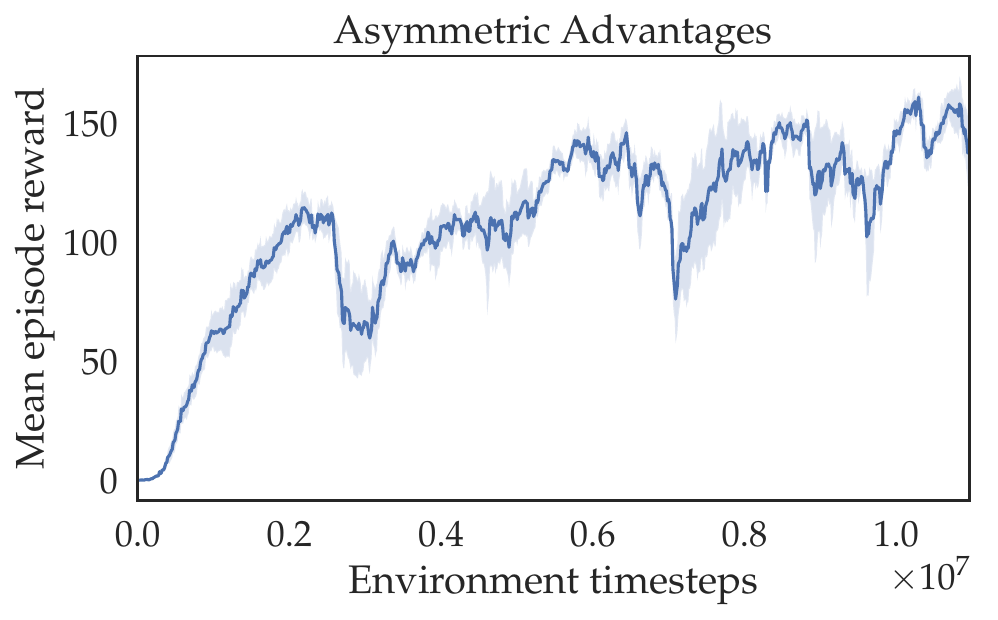} & \includegraphics[width=\linewidth]{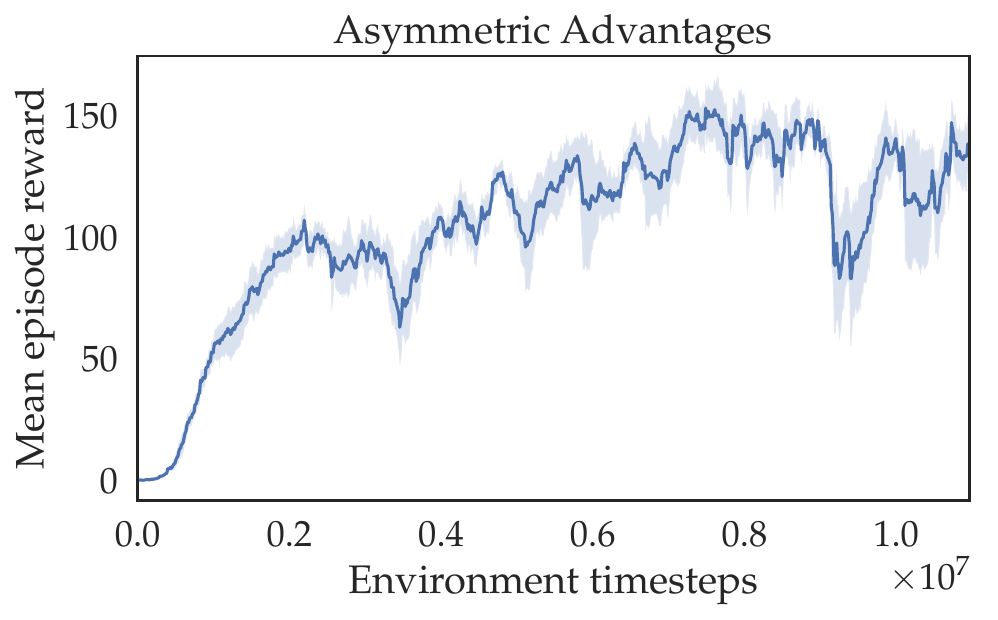} & \includegraphics[width=\linewidth]{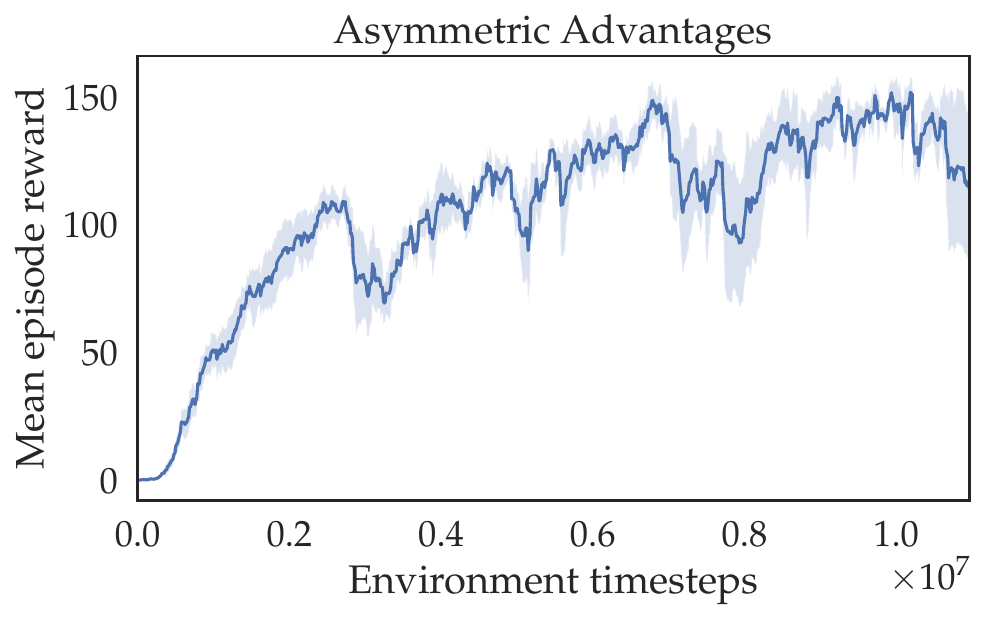} \\
      \includegraphics[width=\linewidth]{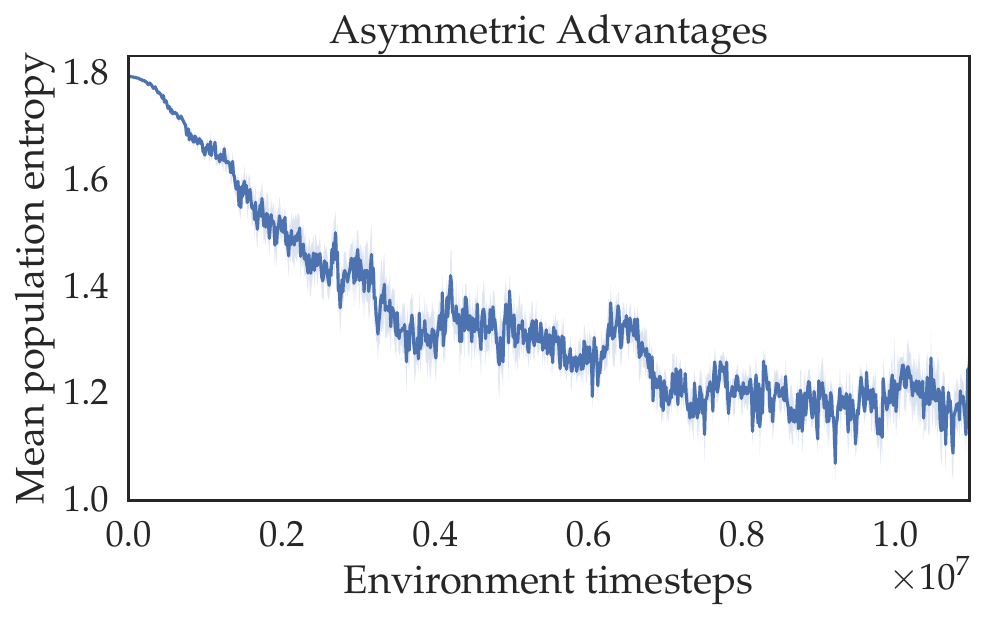} & \includegraphics[width=\linewidth]{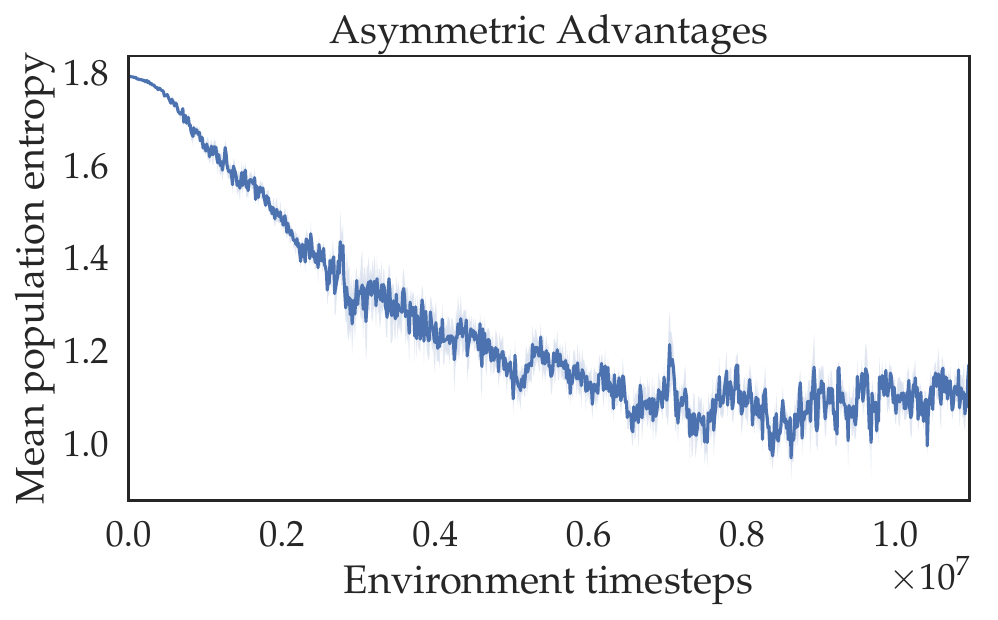} & \includegraphics[width=\linewidth]{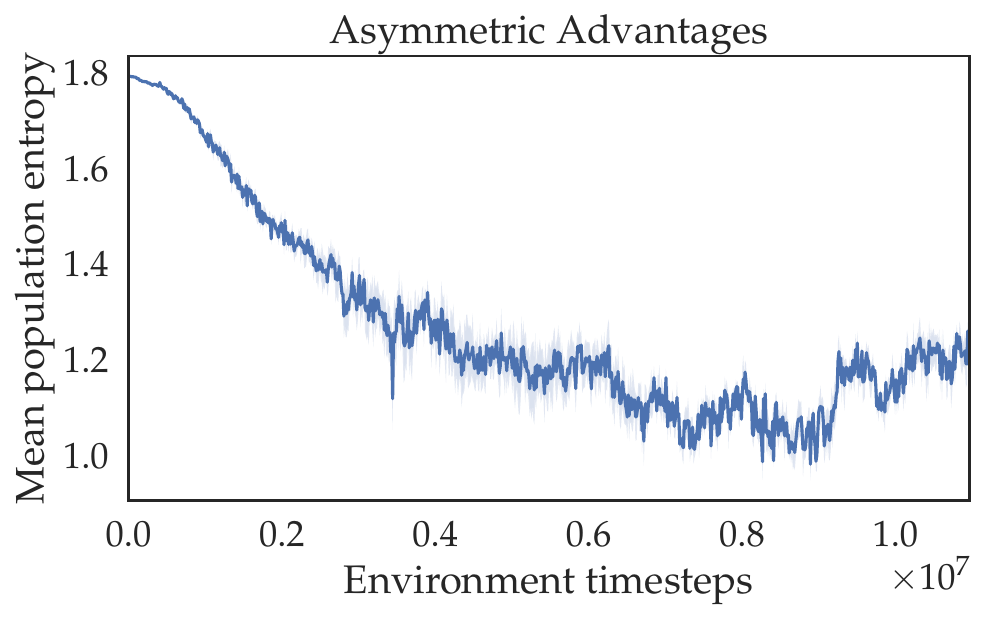} & \includegraphics[width=\linewidth]{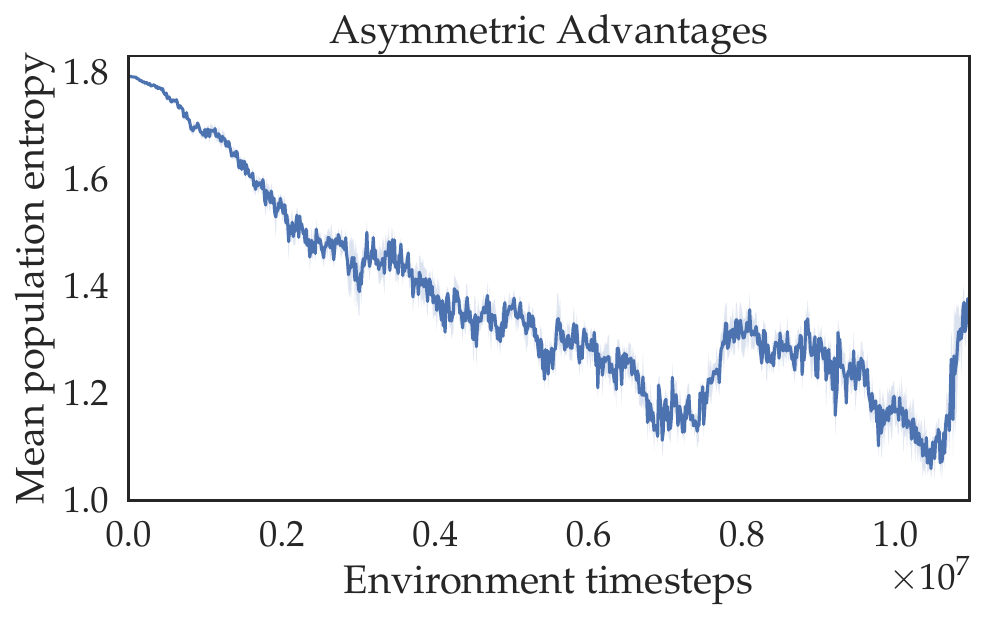} \\ \midrule
      \includegraphics[width=\linewidth]{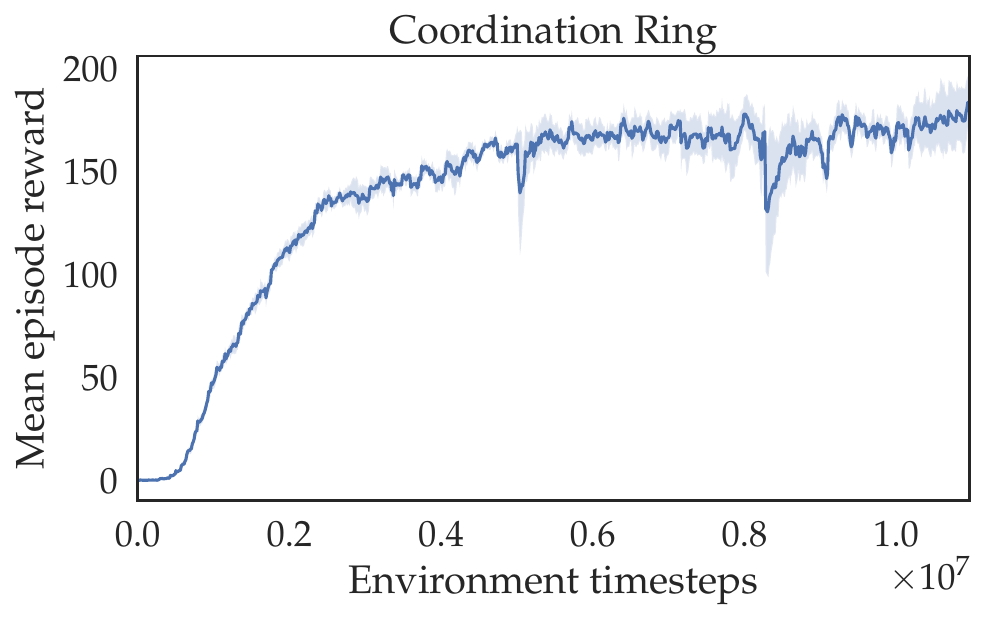} & \includegraphics[width=\linewidth]{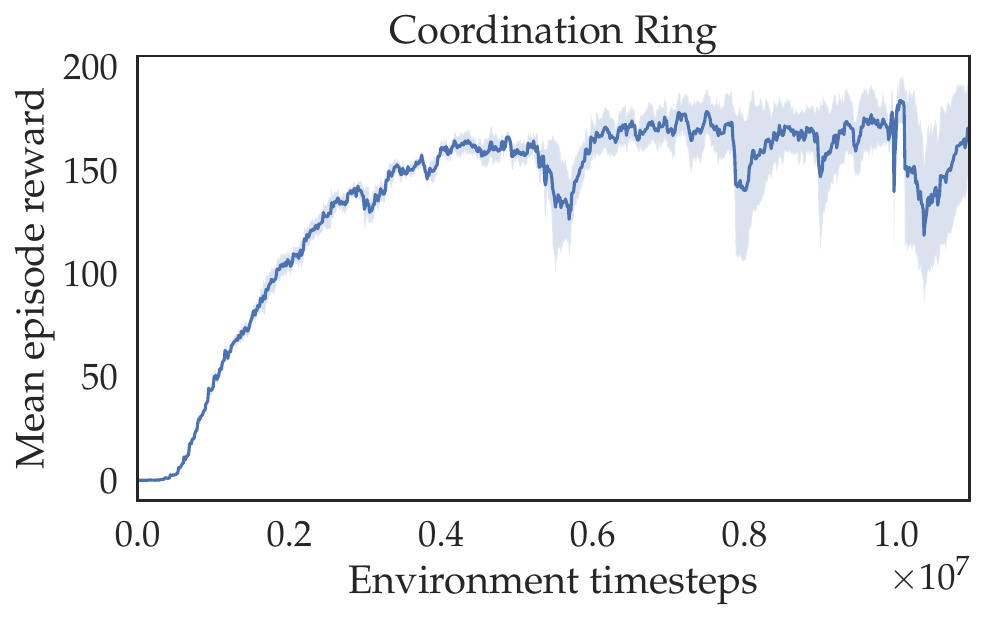} & \includegraphics[width=\linewidth]{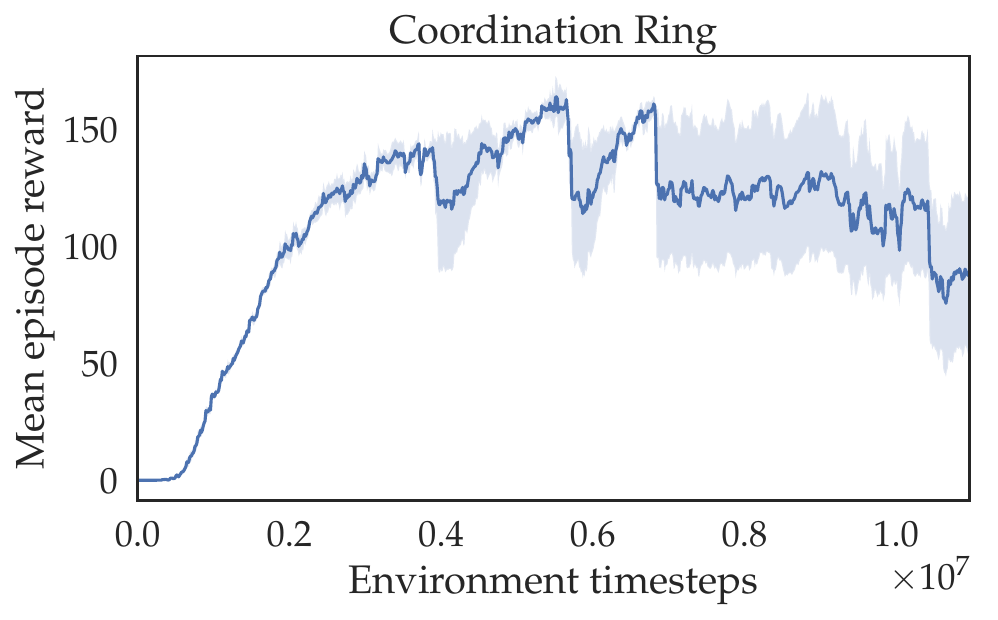} & \includegraphics[width=\linewidth]{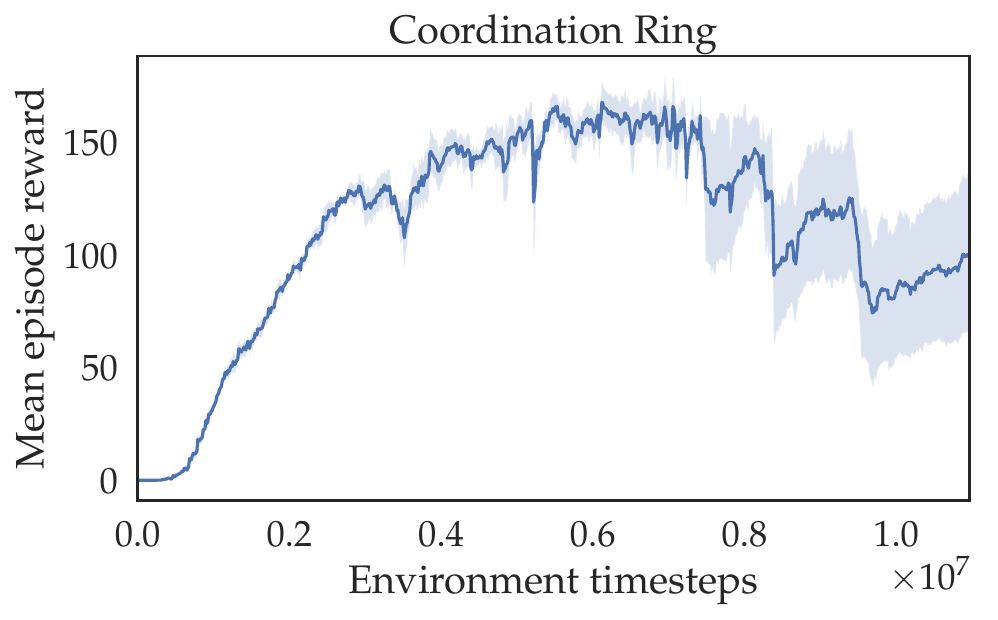} \\
      \includegraphics[width=\linewidth]{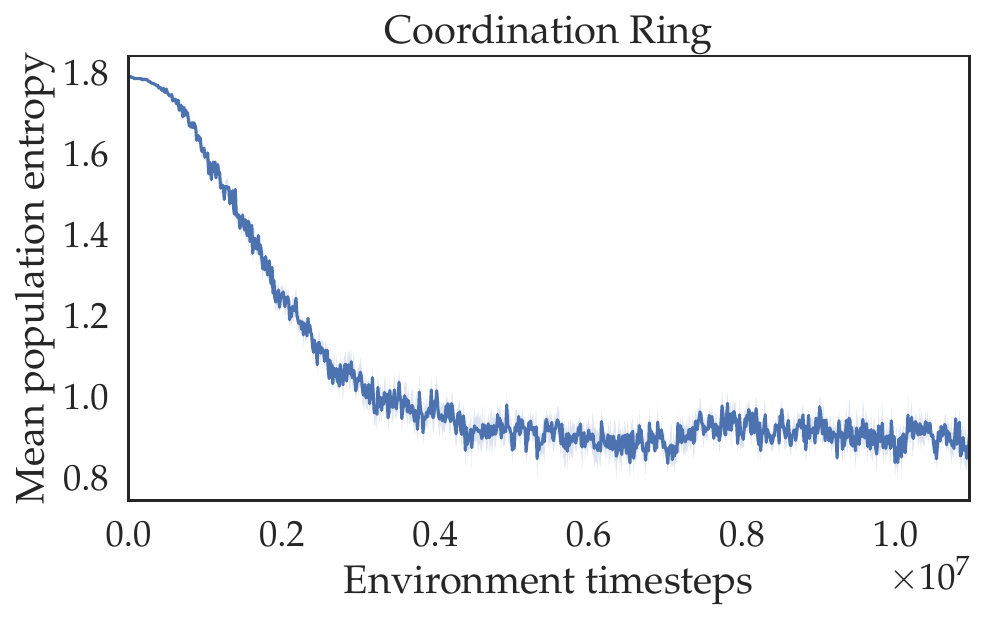} & \includegraphics[width=\linewidth]{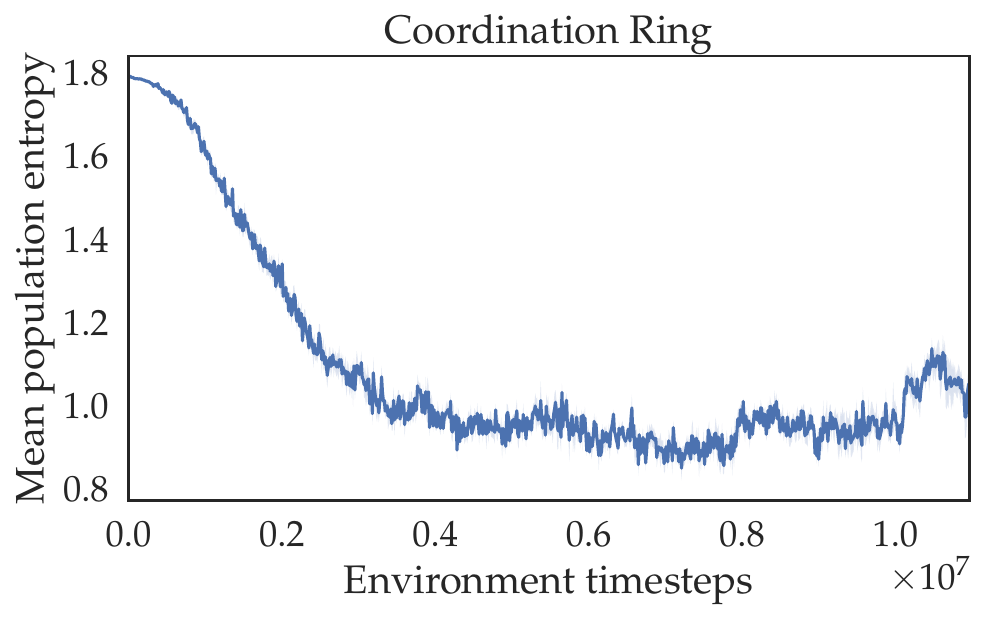} & \includegraphics[width=\linewidth]{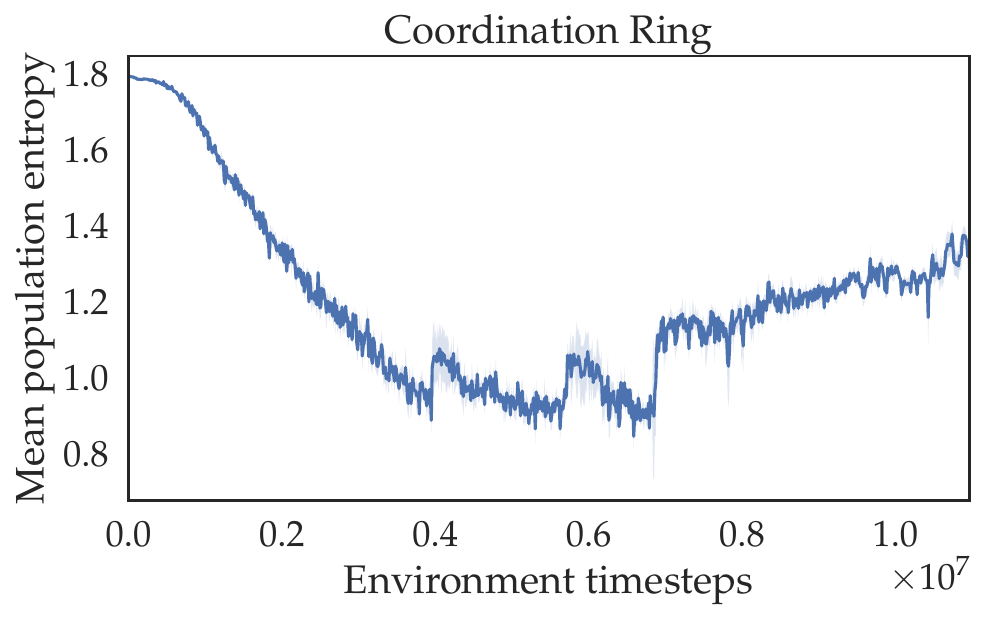} & \includegraphics[width=\linewidth]{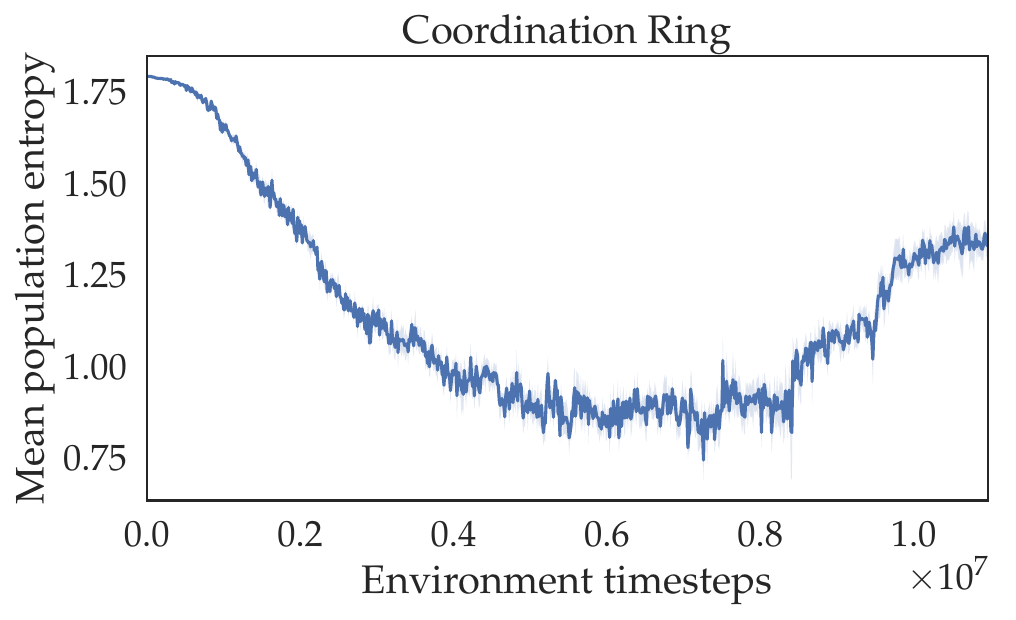} \\ \midrule
      \includegraphics[width=\linewidth]{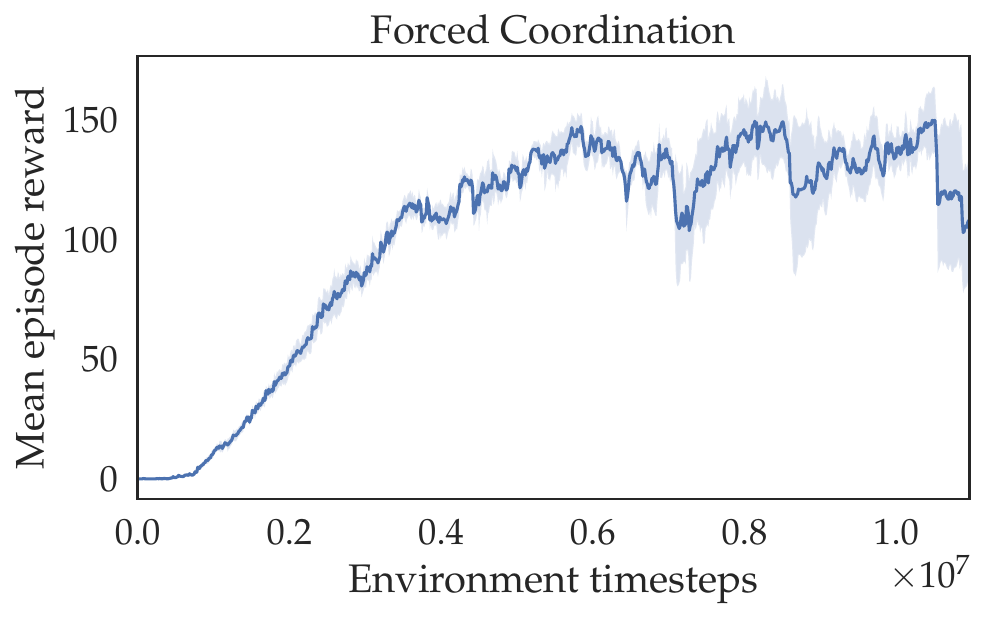} & \includegraphics[width=\linewidth]{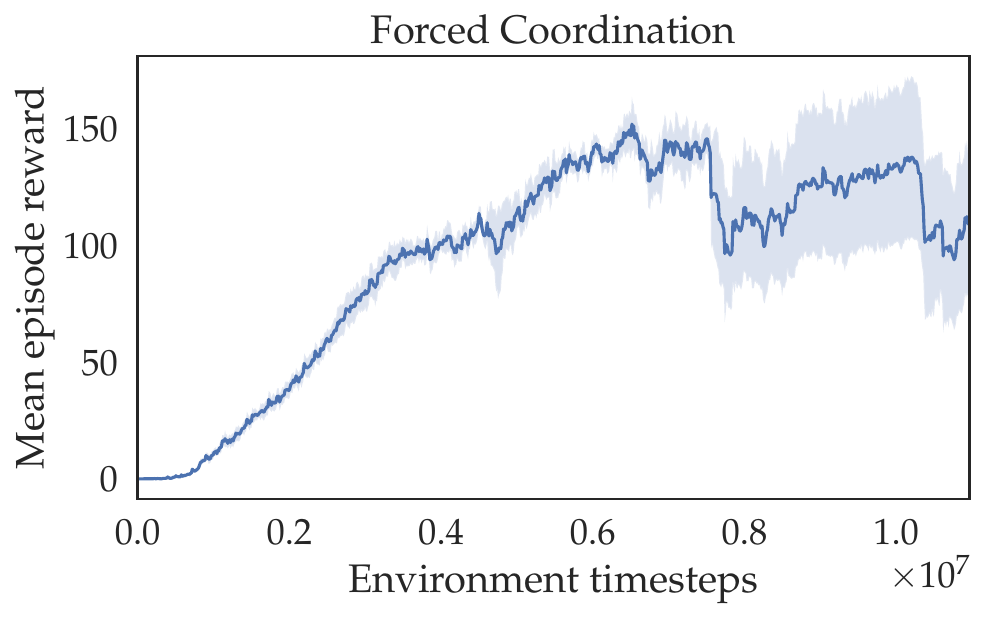} & \includegraphics[width=\linewidth]{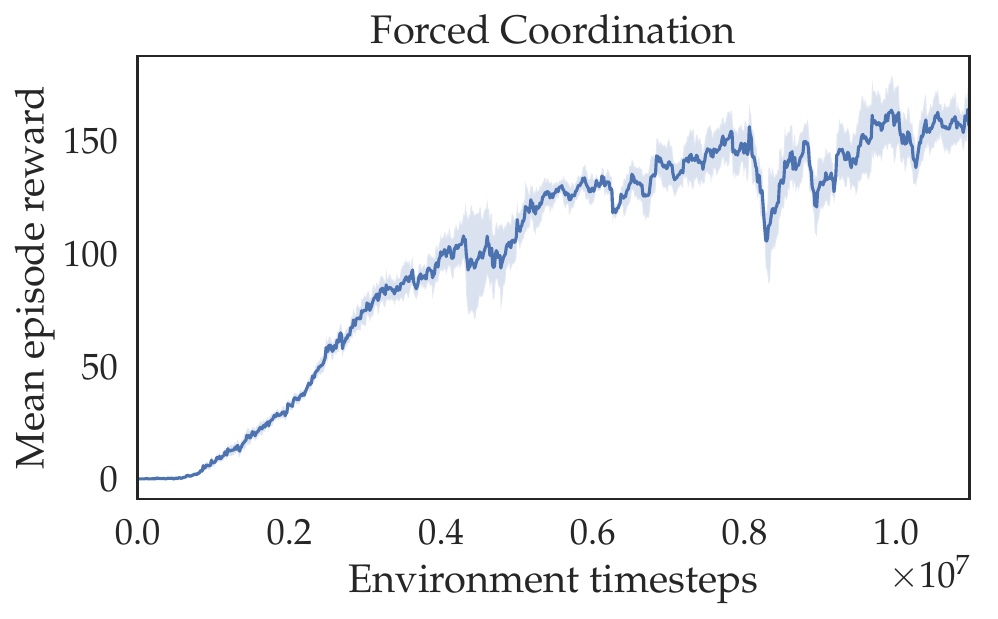} & \includegraphics[width=\linewidth]{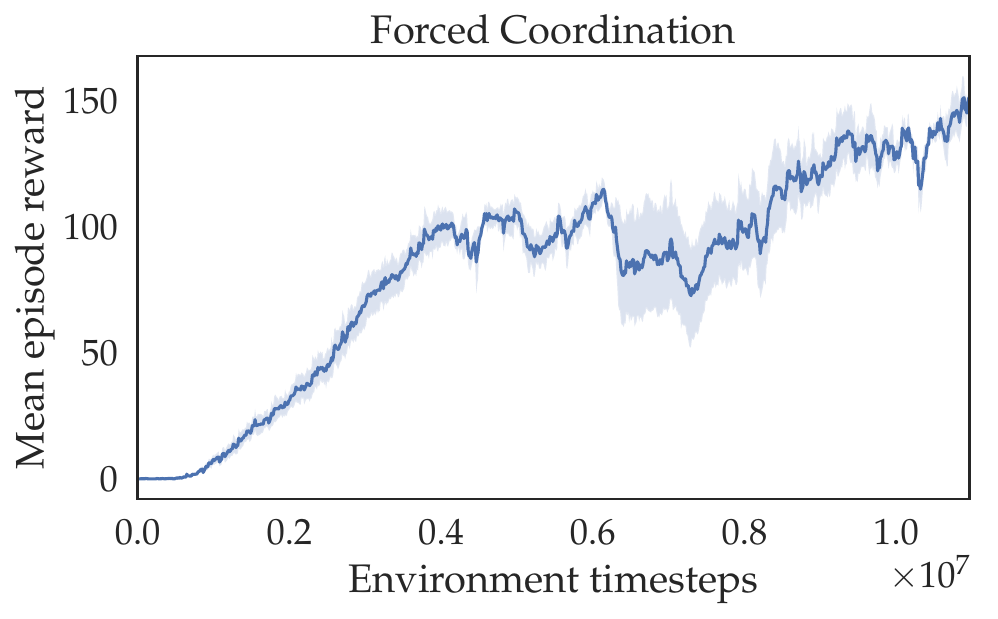} \\
      \includegraphics[width=\linewidth]{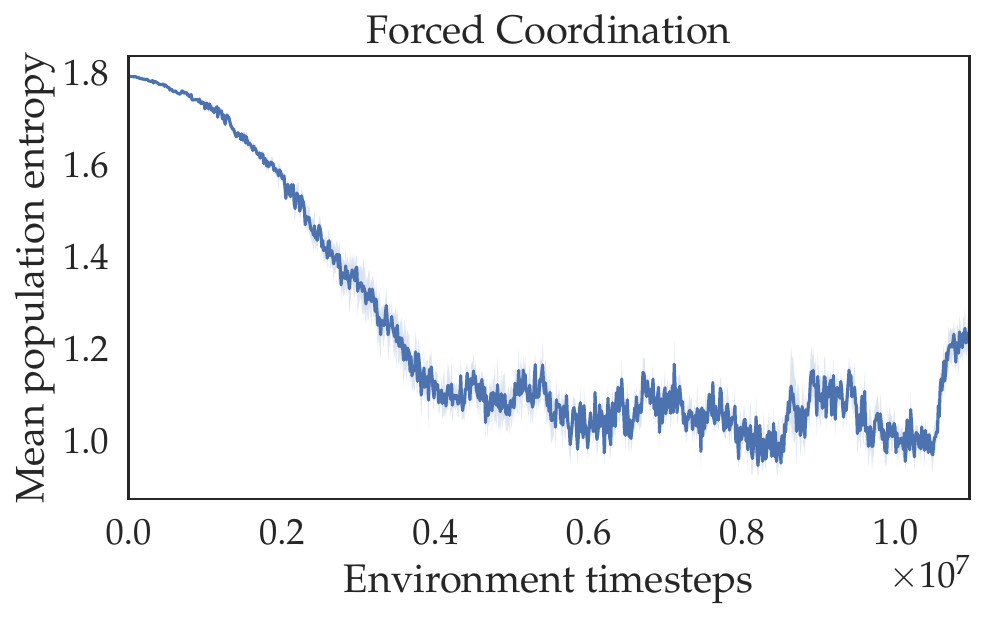} & \includegraphics[width=\linewidth]{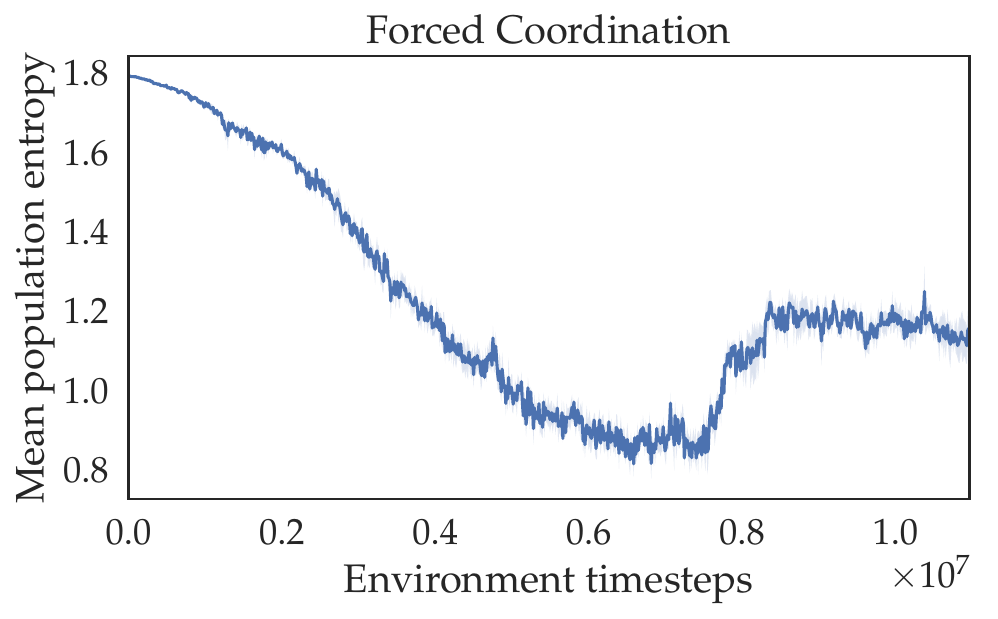} & \includegraphics[width=\linewidth]{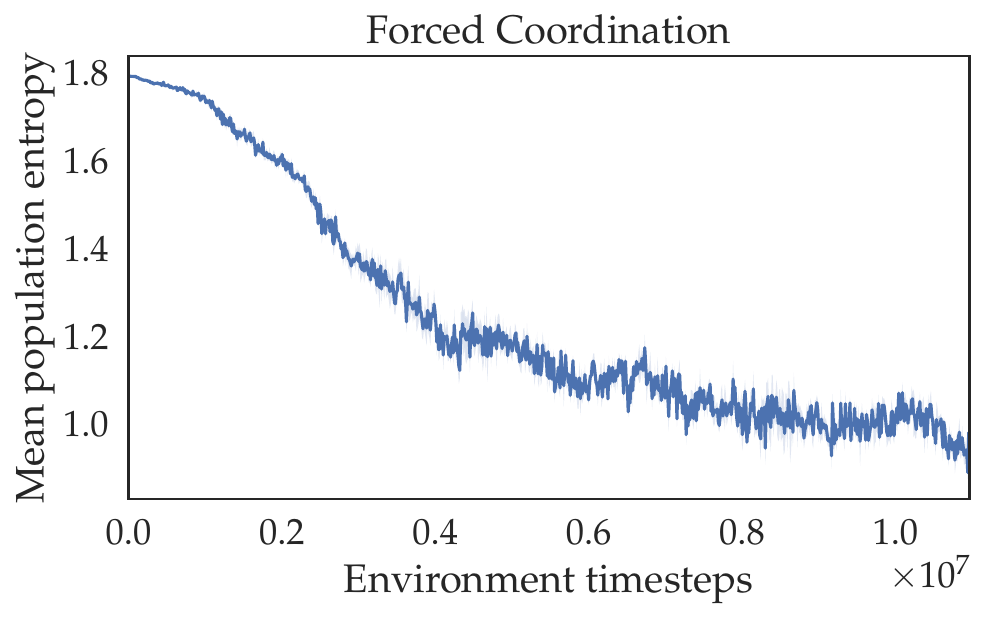} & \includegraphics[width=\linewidth]{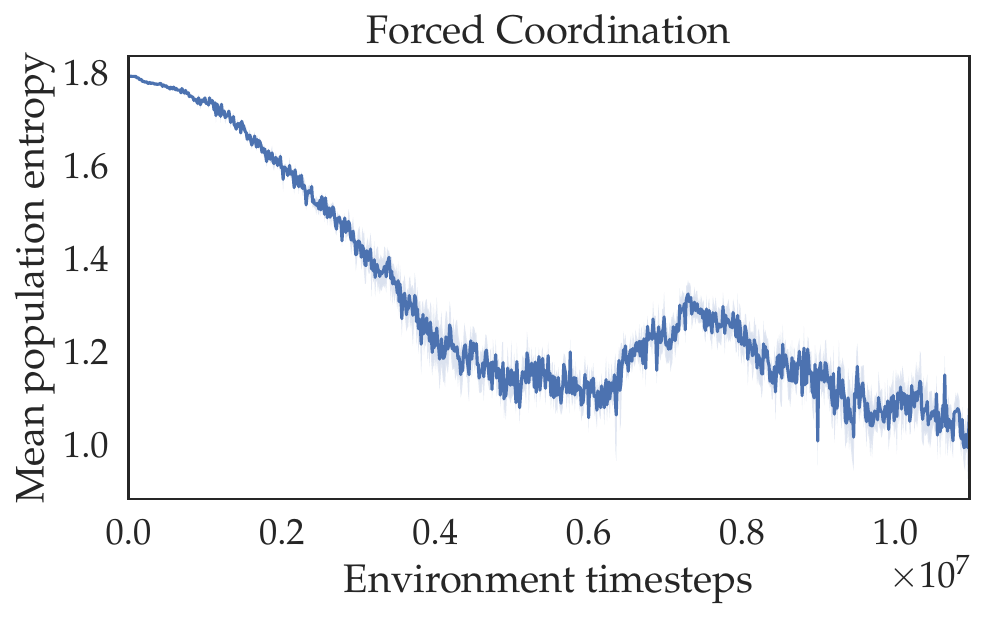} \\ \midrule
      \includegraphics[width=\linewidth]{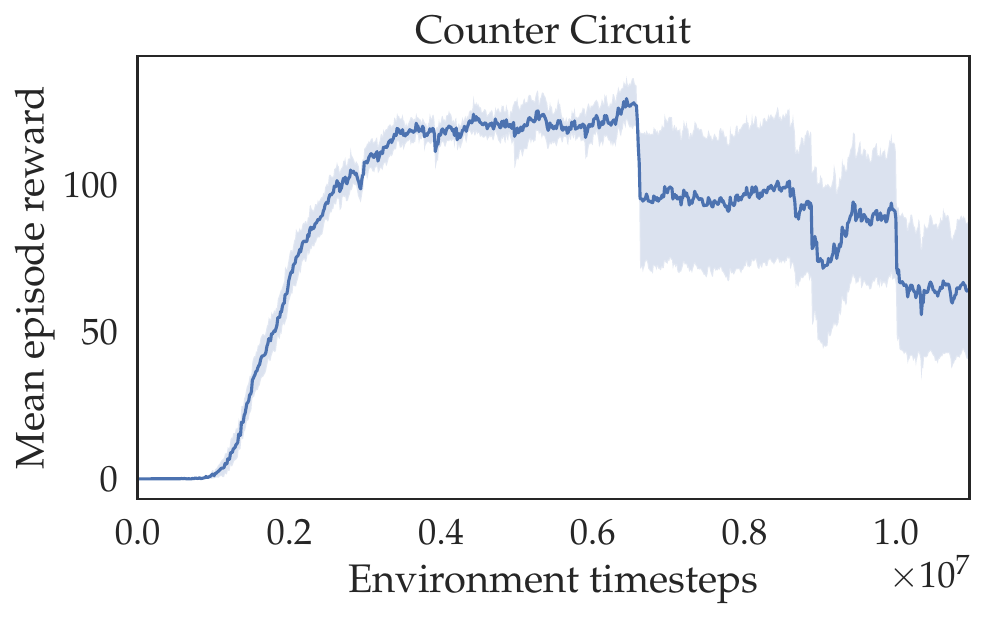} & \includegraphics[width=\linewidth]{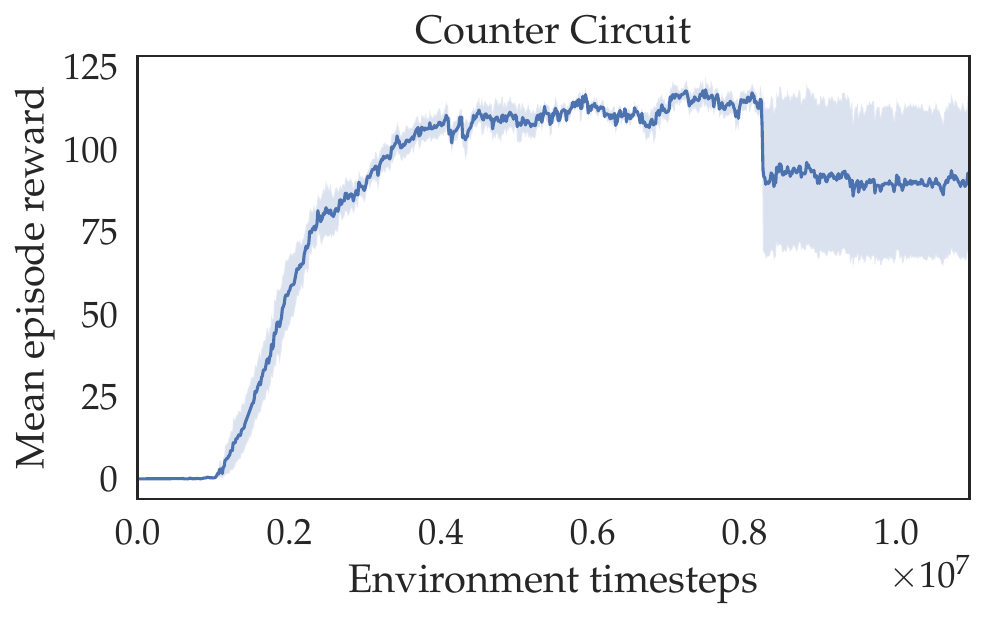} & \includegraphics[width=\linewidth]{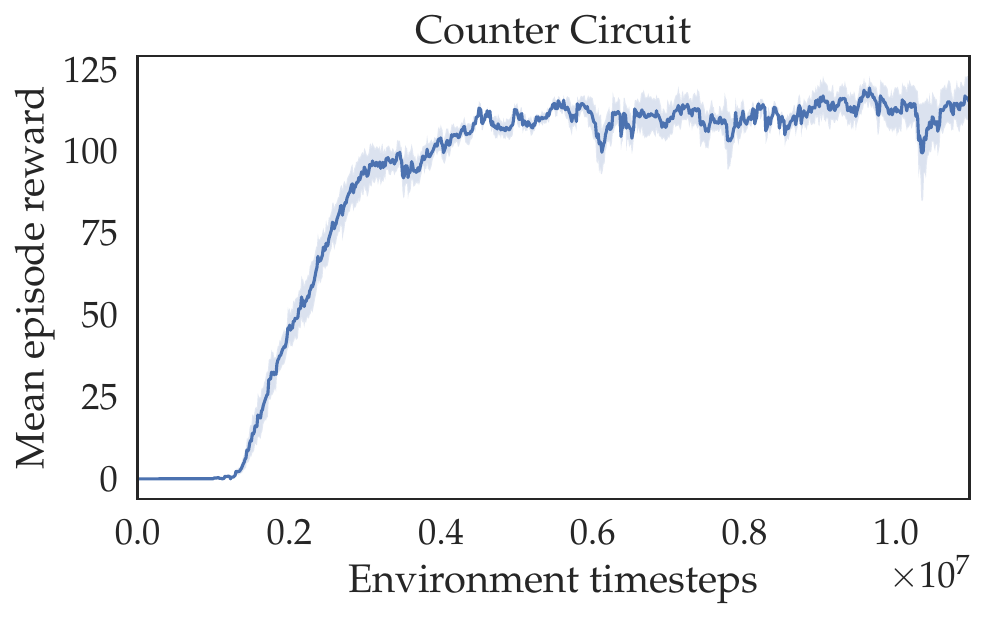} & \includegraphics[width=\linewidth]{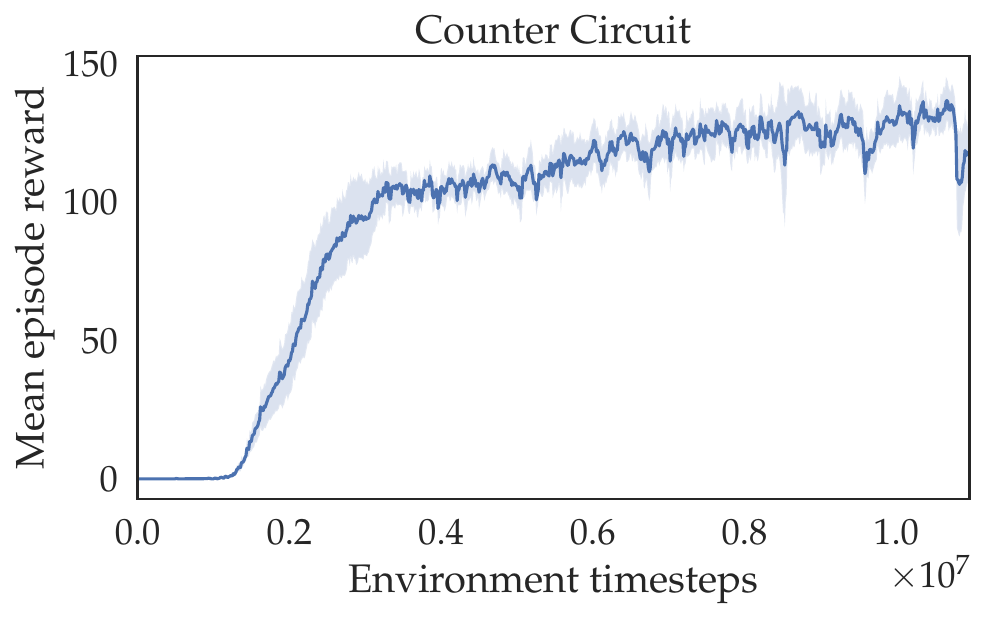} \\
      \includegraphics[width=\linewidth]{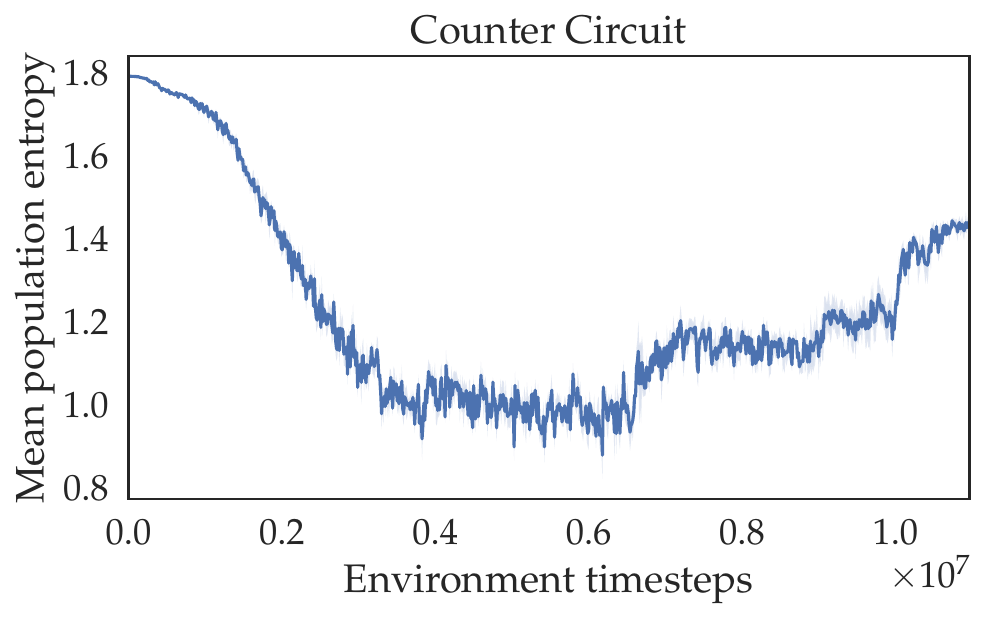} & \includegraphics[width=\linewidth]{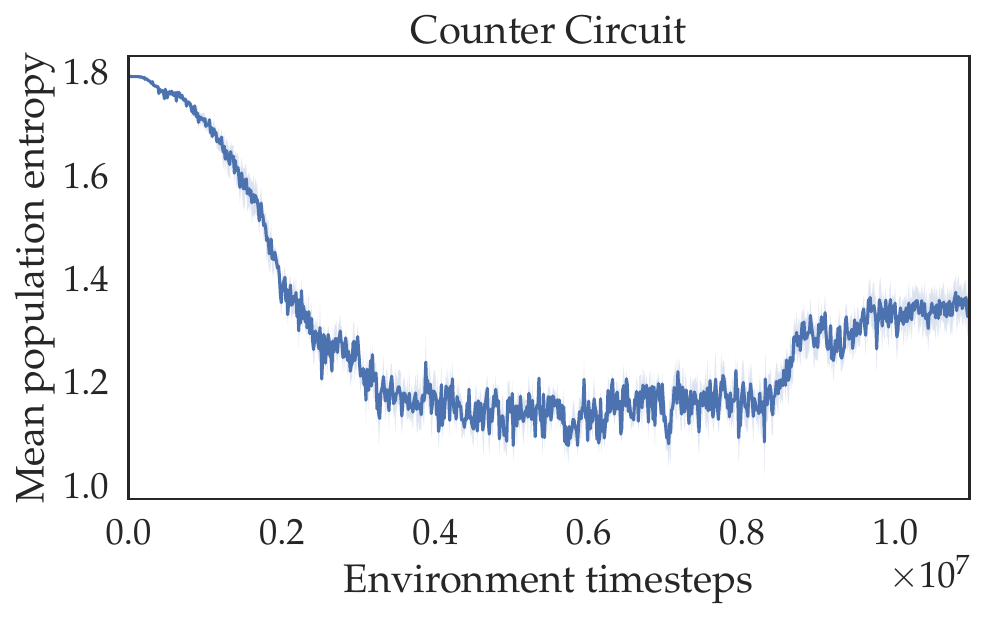} & \includegraphics[width=\linewidth]{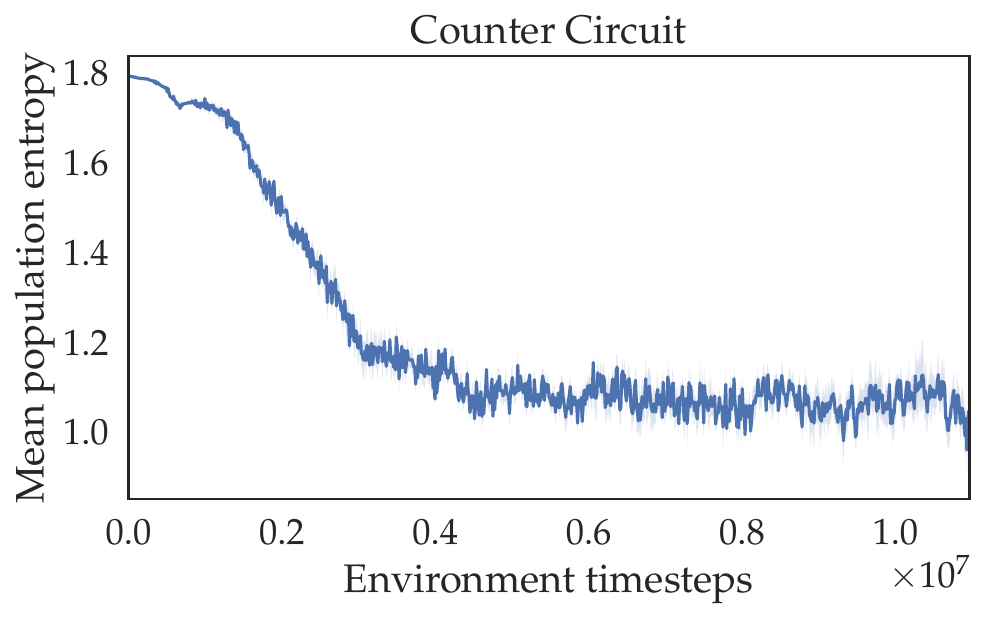} & \includegraphics[width=\linewidth]{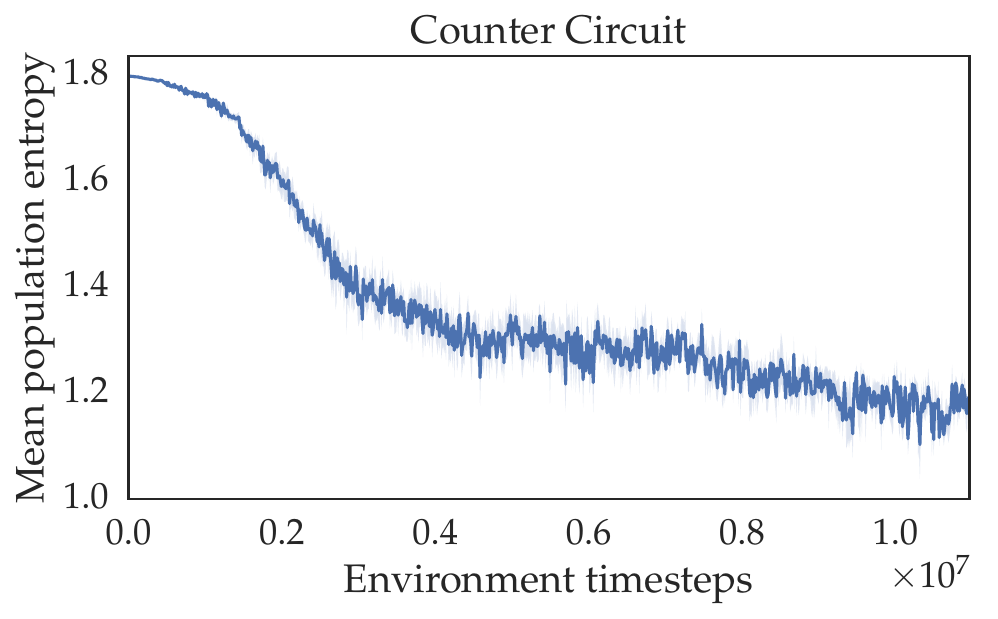} \\
      \bottomrule
  \end{tabular}
  \caption{\bo{Mean episode reward and population entropy with different $\alpha$ in all five layouts:} Each column corresponds to a different value of $\alpha$ in the set of $[0.000,\ 0.001,\ 0.005,\ 0.010]$. There are five row sections, which correspond to the five layouts. Each row section contains two rows, which are the plots of the mean episode reward and the mean population entropy of the layout, respectively.}
  \label{tab:rew_and_ent_1}
\end{table}

\begin{table}[h]
  \centering
  \begin{tabular}{M{0.218\linewidth}M{0.218\linewidth}M{0.218\linewidth}M{0.218\linewidth}}
     \toprule
      $\alpha$=0.020 & $\alpha$=0.030 & $\alpha$=0.040 & $\alpha$=0.050 \\
      \midrule
      \includegraphics[width=\linewidth]{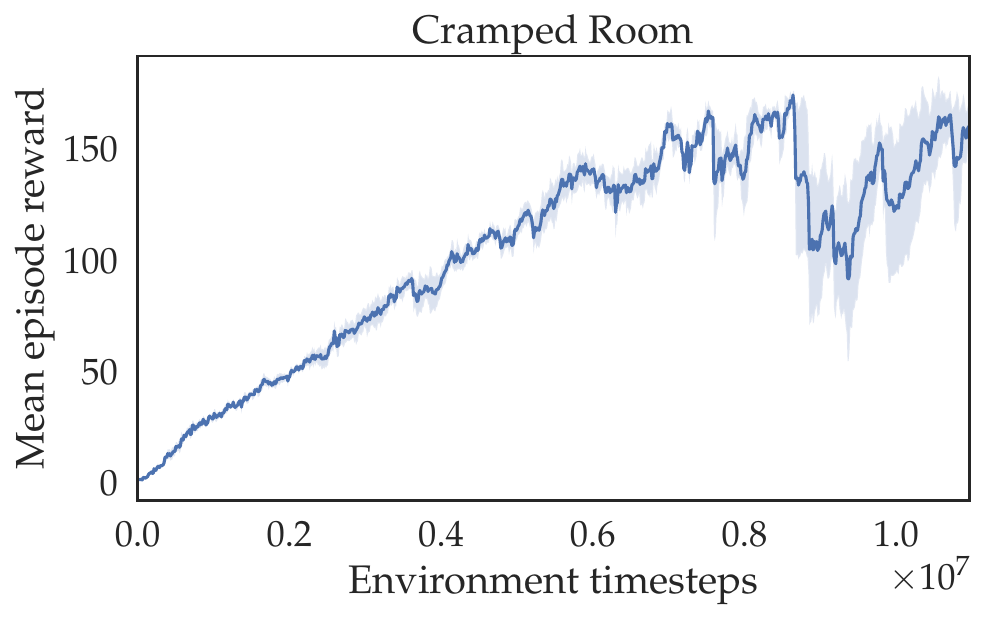} & \includegraphics[width=\linewidth]{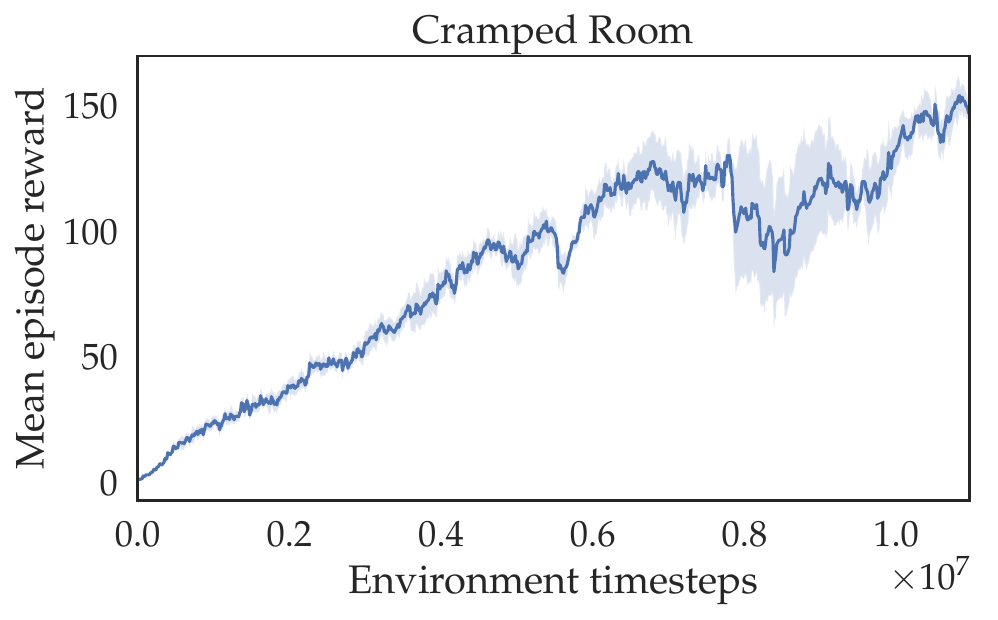} & \includegraphics[width=\linewidth]{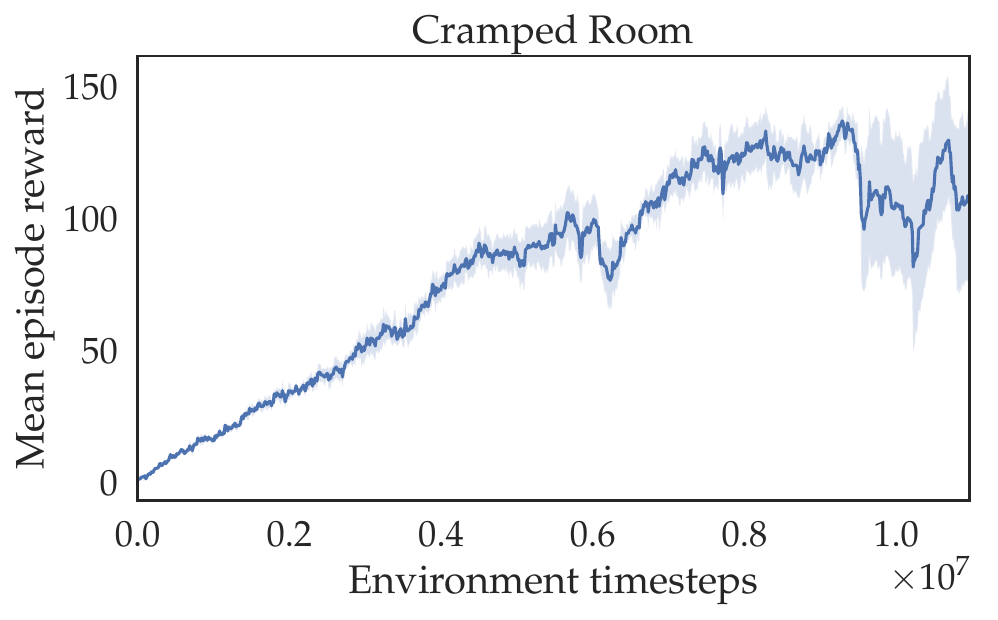} & \includegraphics[width=\linewidth]{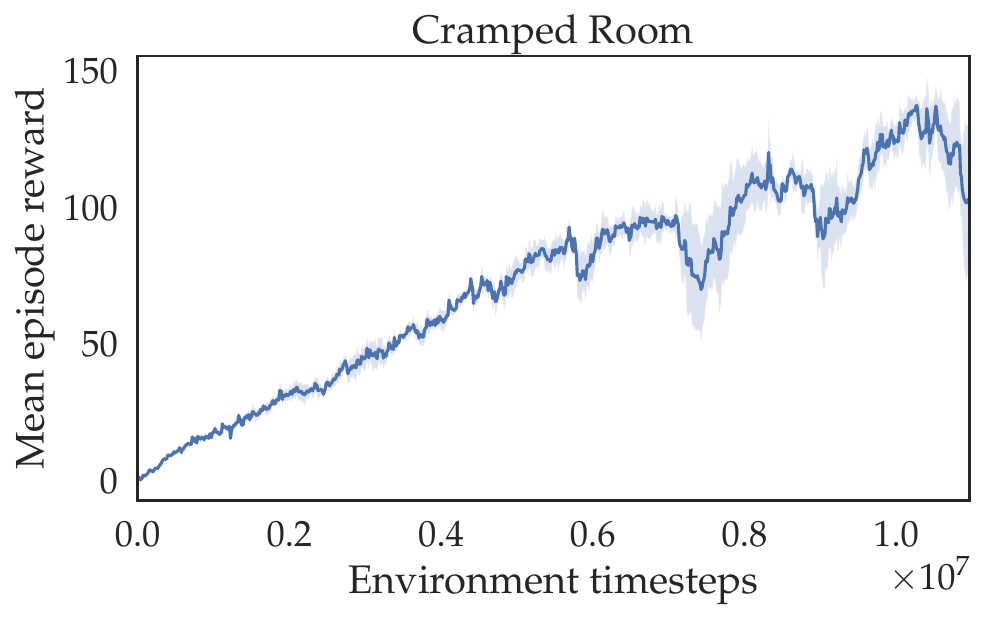} \\
      \includegraphics[width=\linewidth]{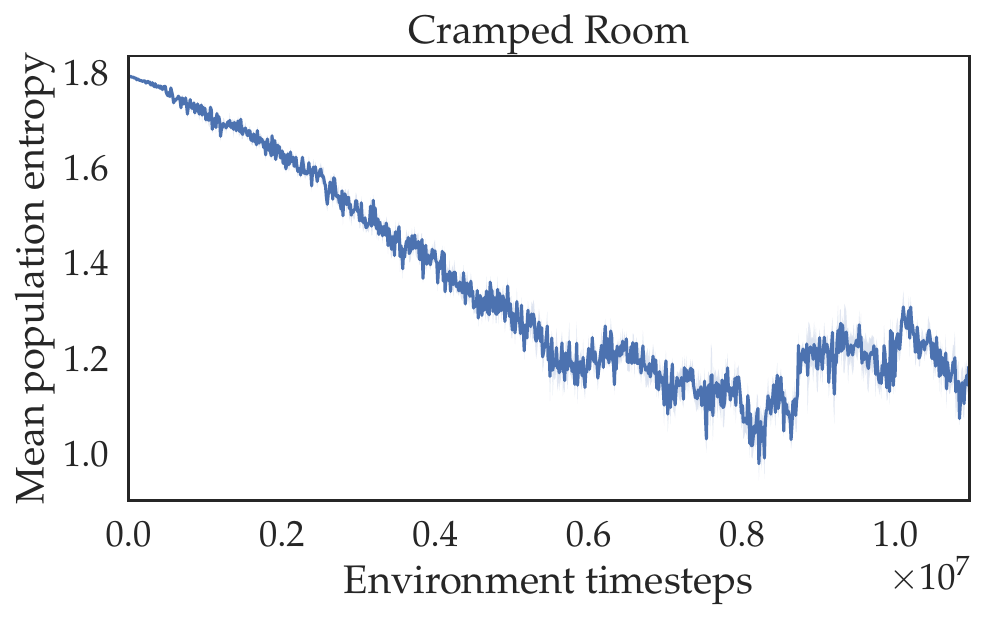} & \includegraphics[width=\linewidth]{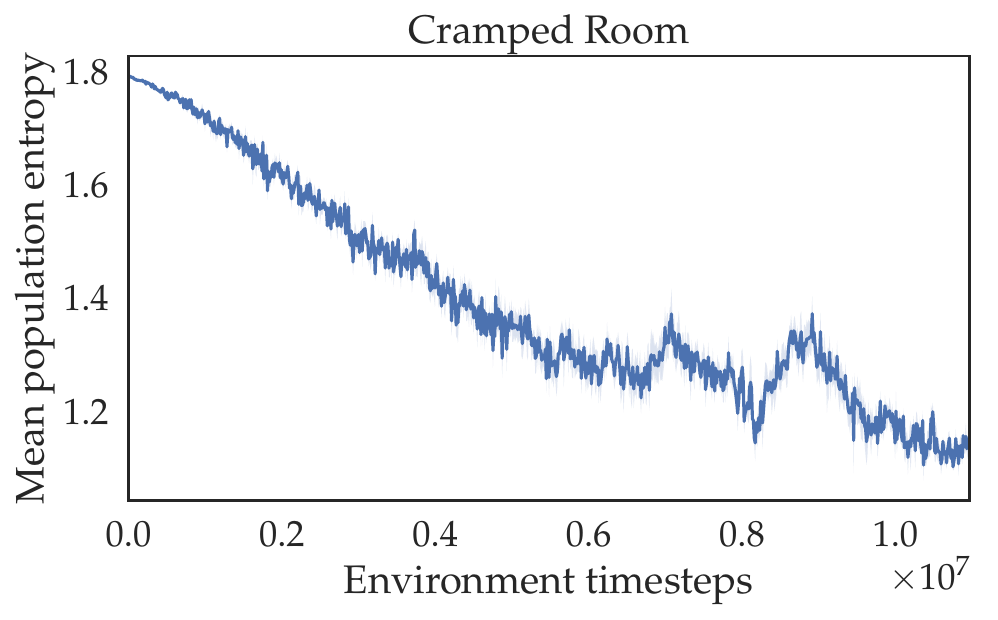} & \includegraphics[width=\linewidth]{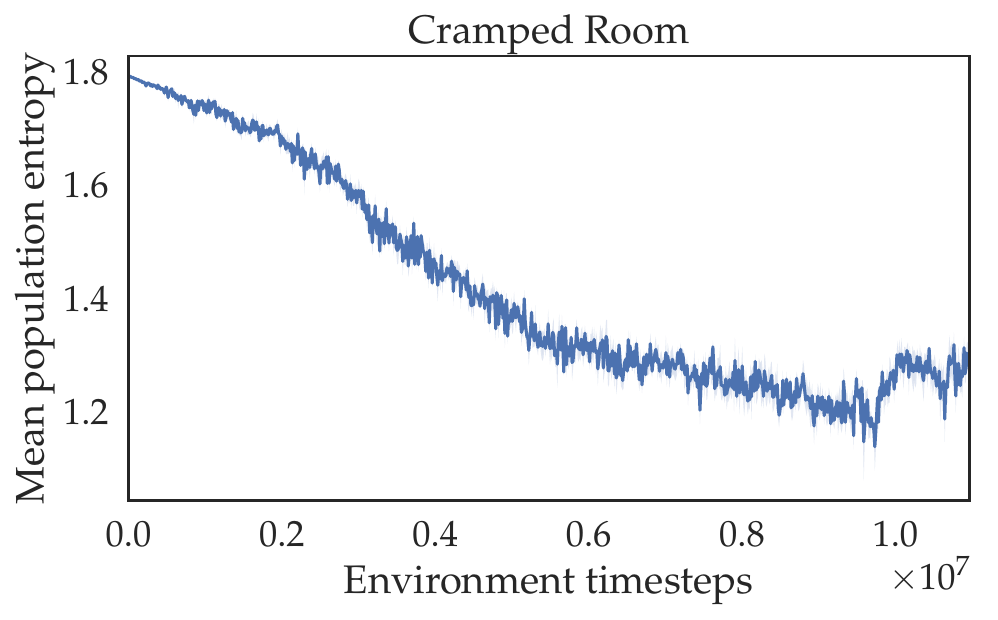} & \includegraphics[width=\linewidth]{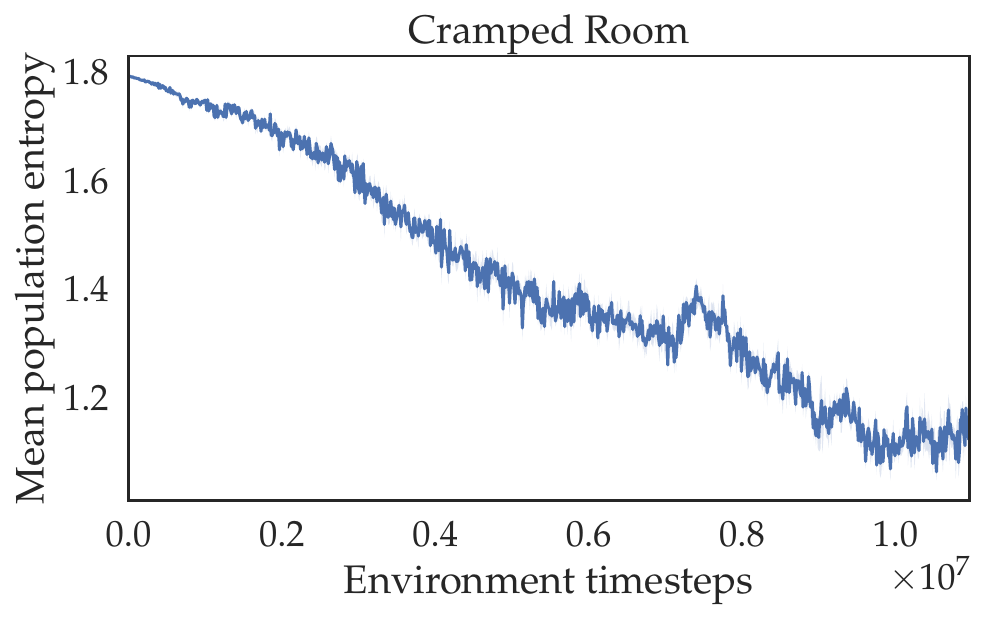} \\ \midrule
      \includegraphics[width=\linewidth]{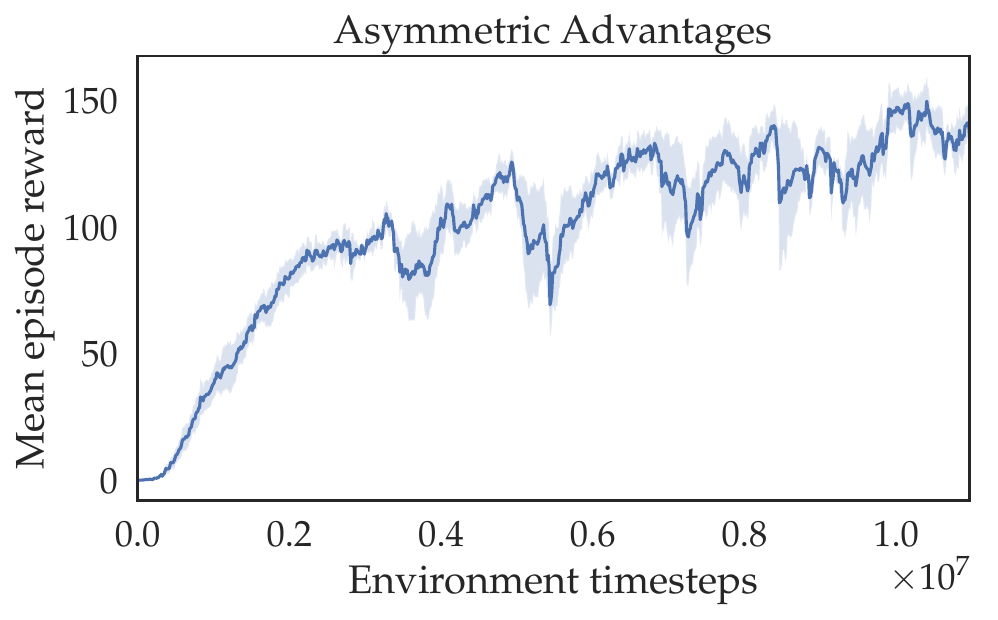} & \includegraphics[width=\linewidth]{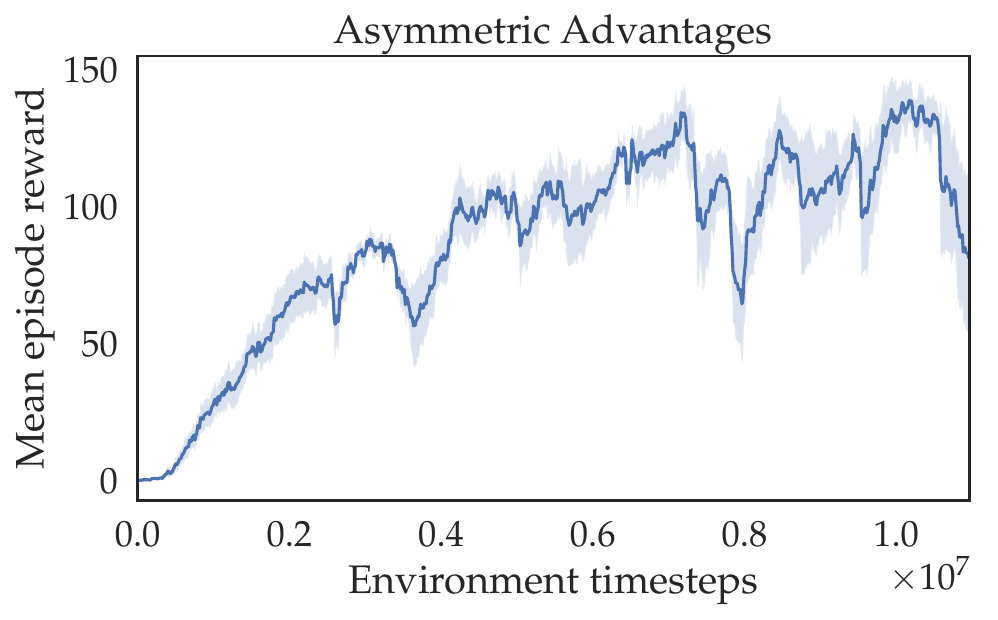} & \includegraphics[width=\linewidth]{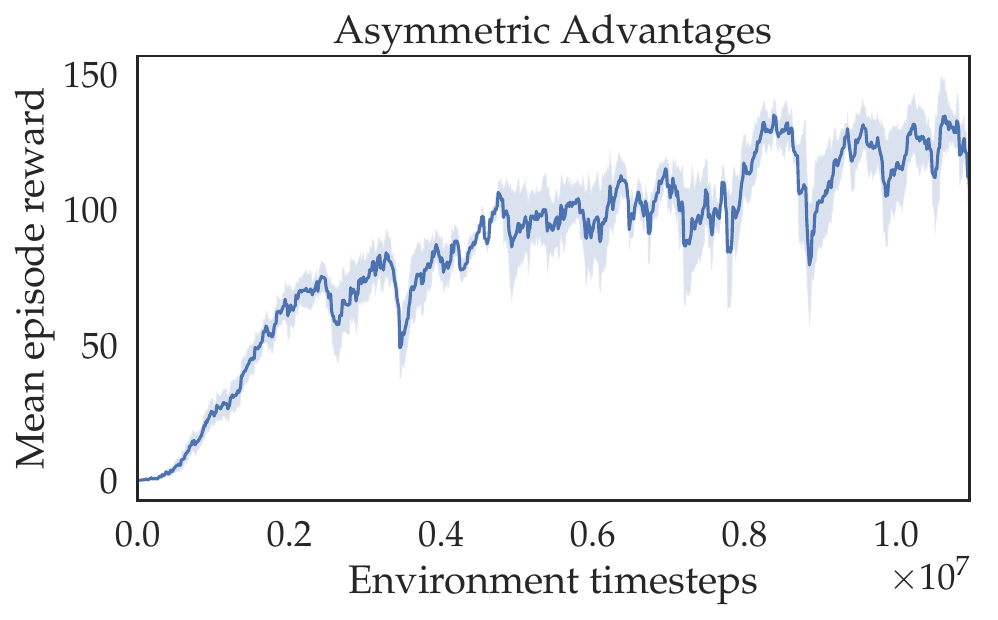} & \includegraphics[width=\linewidth]{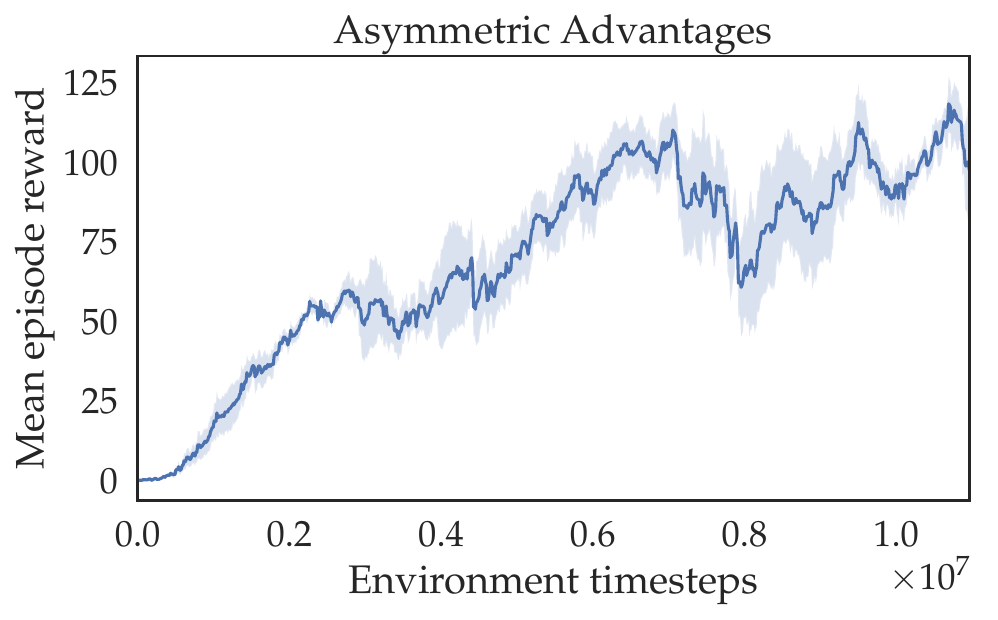} \\
      \includegraphics[width=\linewidth]{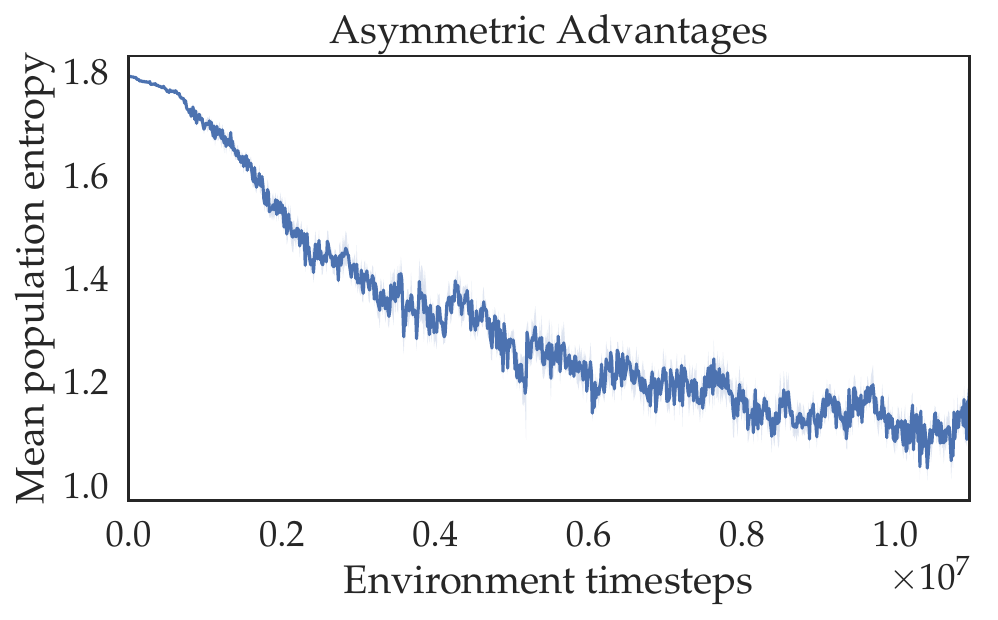} & \includegraphics[width=\linewidth]{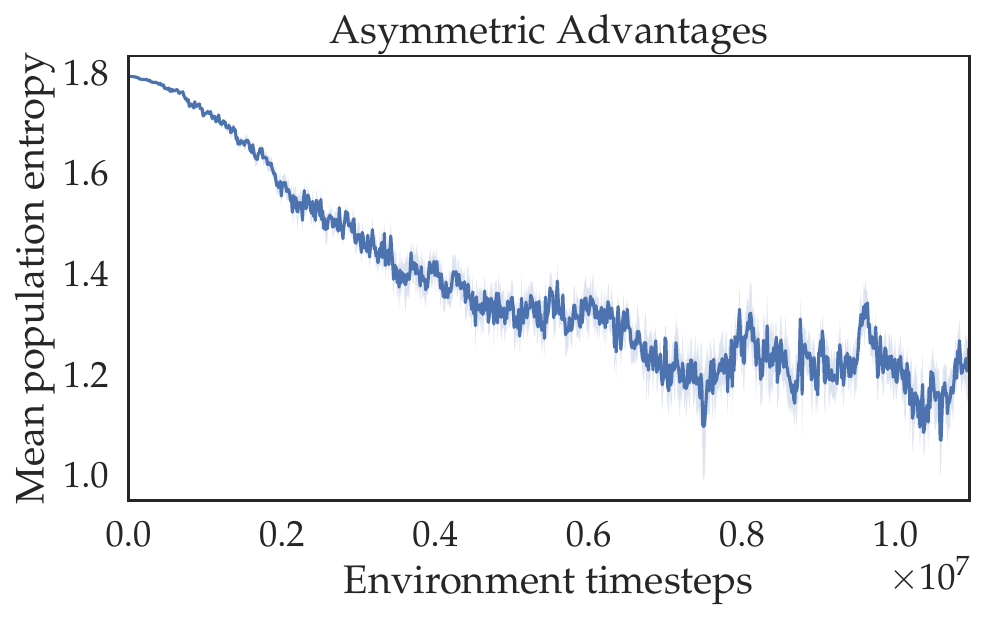} & \includegraphics[width=\linewidth]{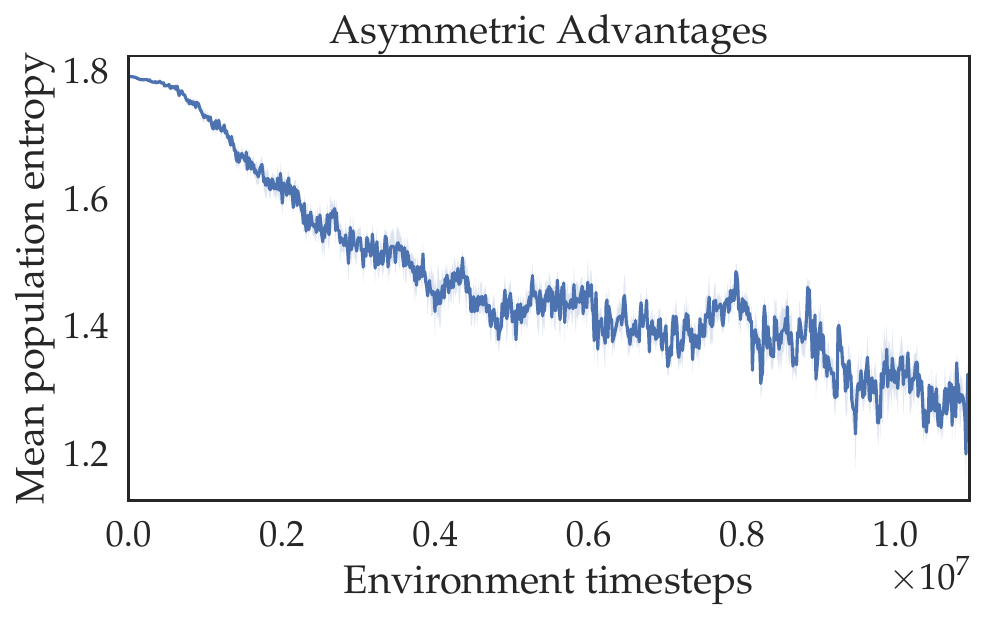} & \includegraphics[width=\linewidth]{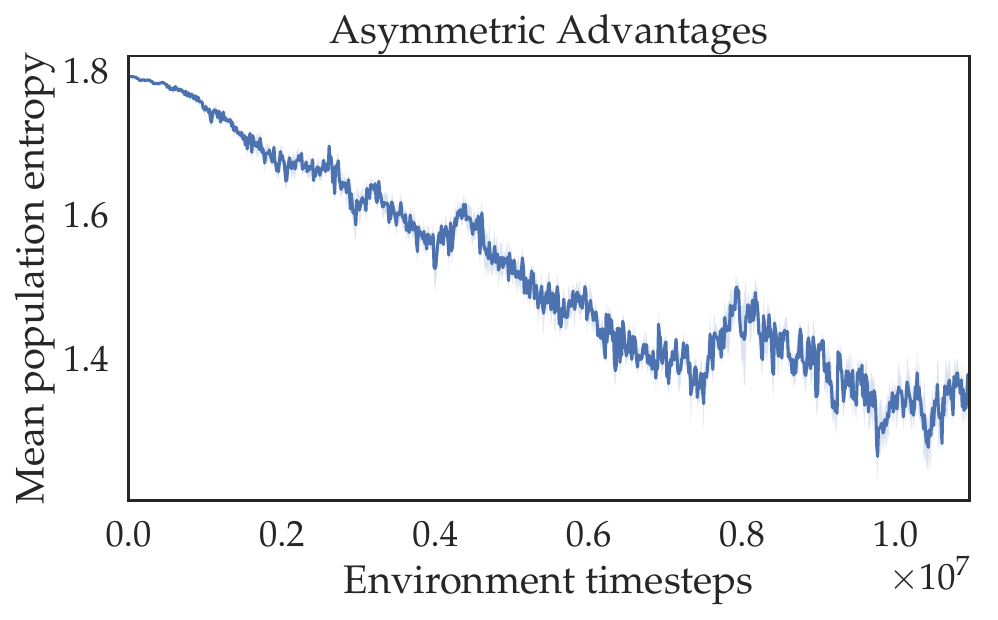} \\ \midrule
      \includegraphics[width=\linewidth]{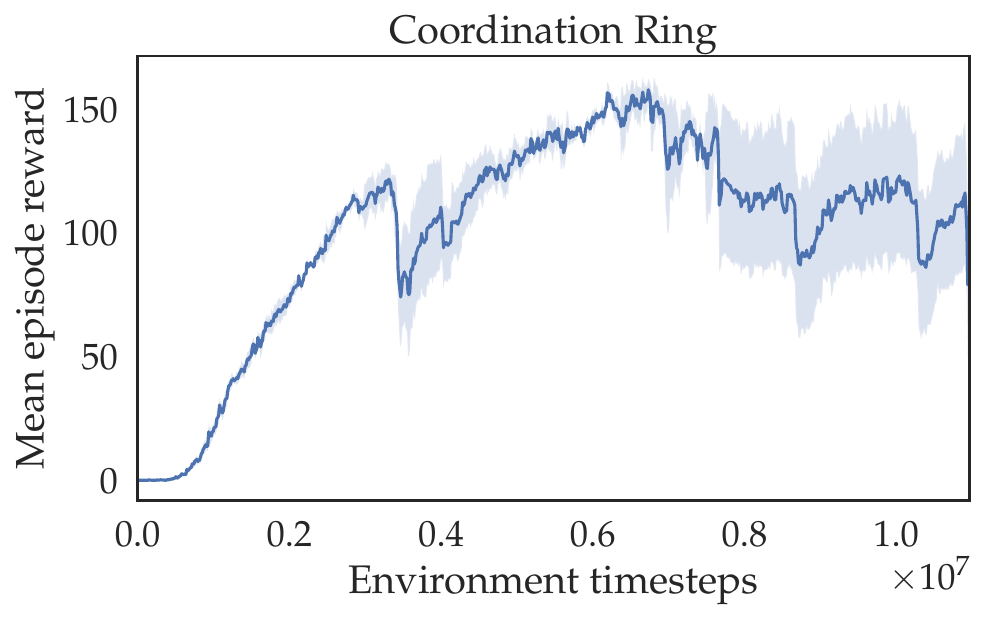} & \includegraphics[width=\linewidth]{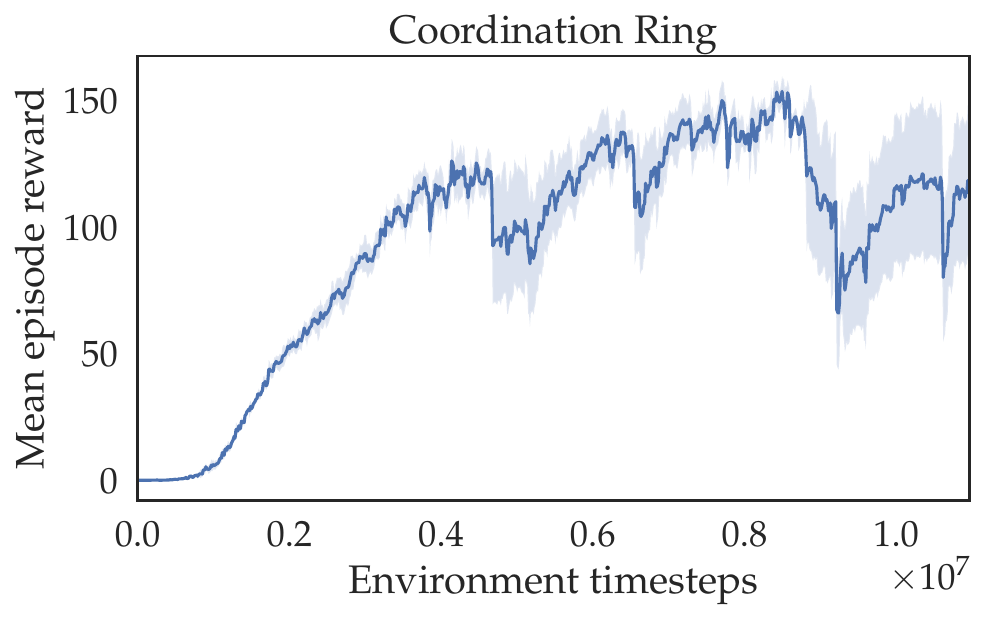} & \includegraphics[width=\linewidth]{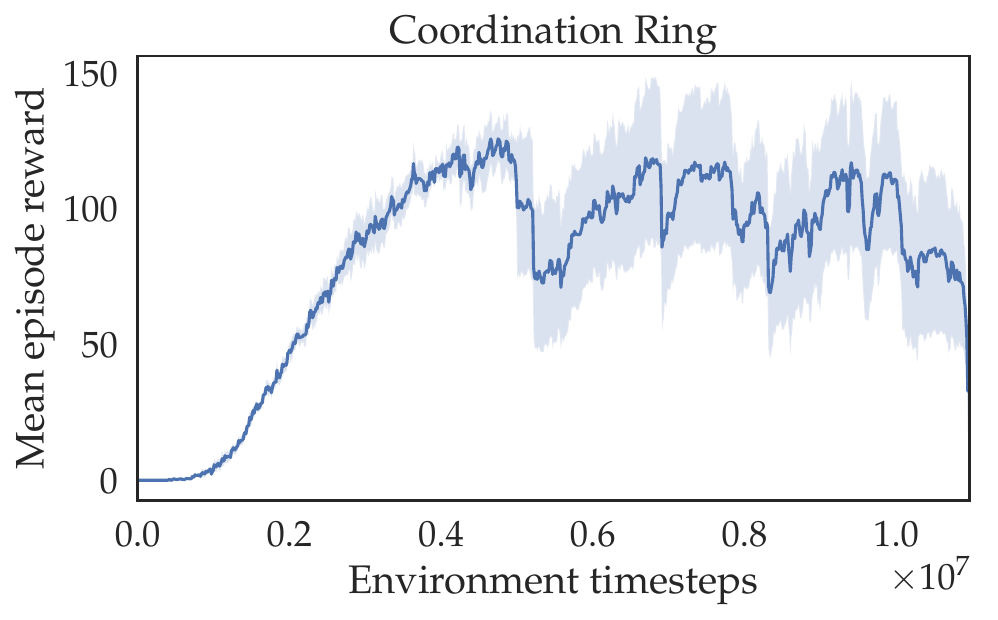} & \includegraphics[width=\linewidth]{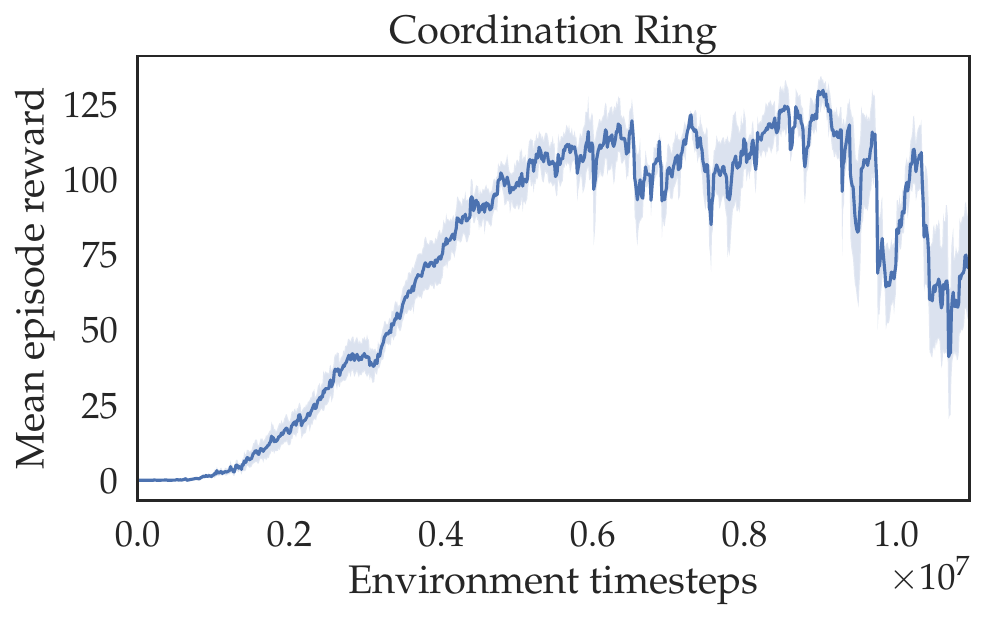} \\
      \includegraphics[width=\linewidth]{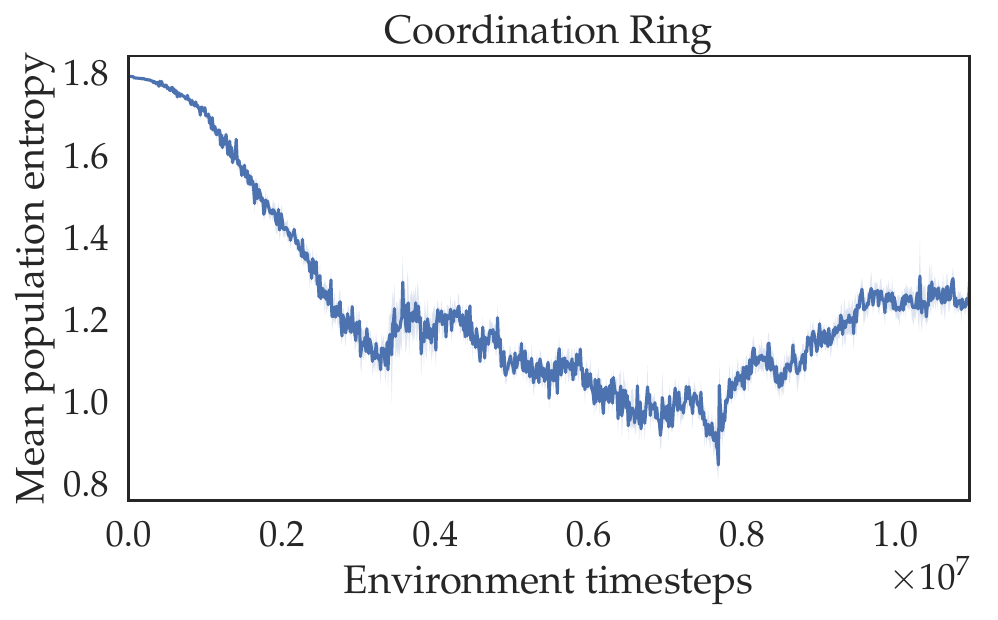} & \includegraphics[width=\linewidth]{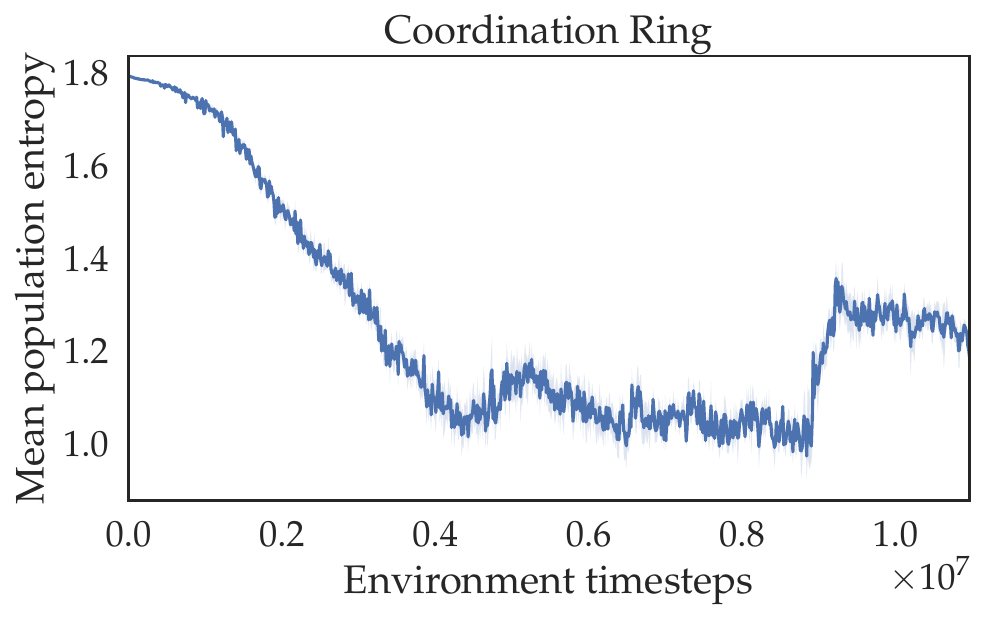} & \includegraphics[width=\linewidth]{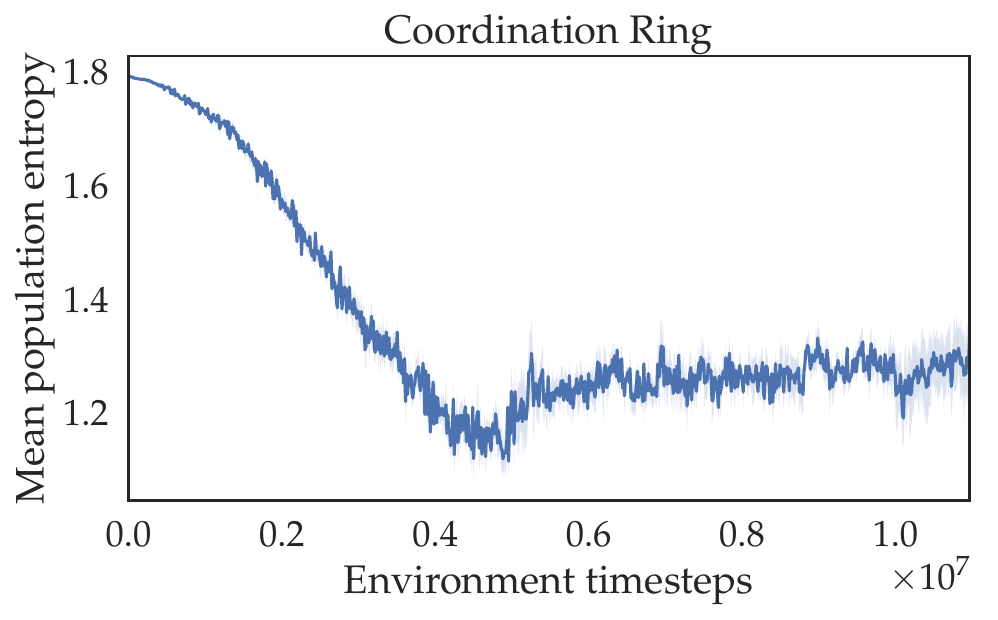} & \includegraphics[width=\linewidth]{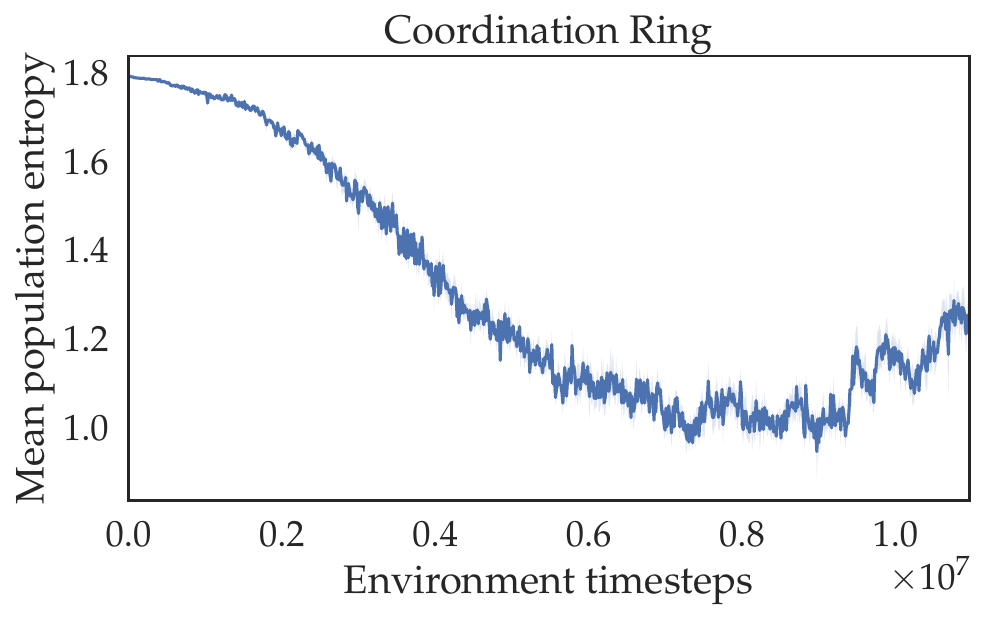} \\ \midrule
      \includegraphics[width=\linewidth]{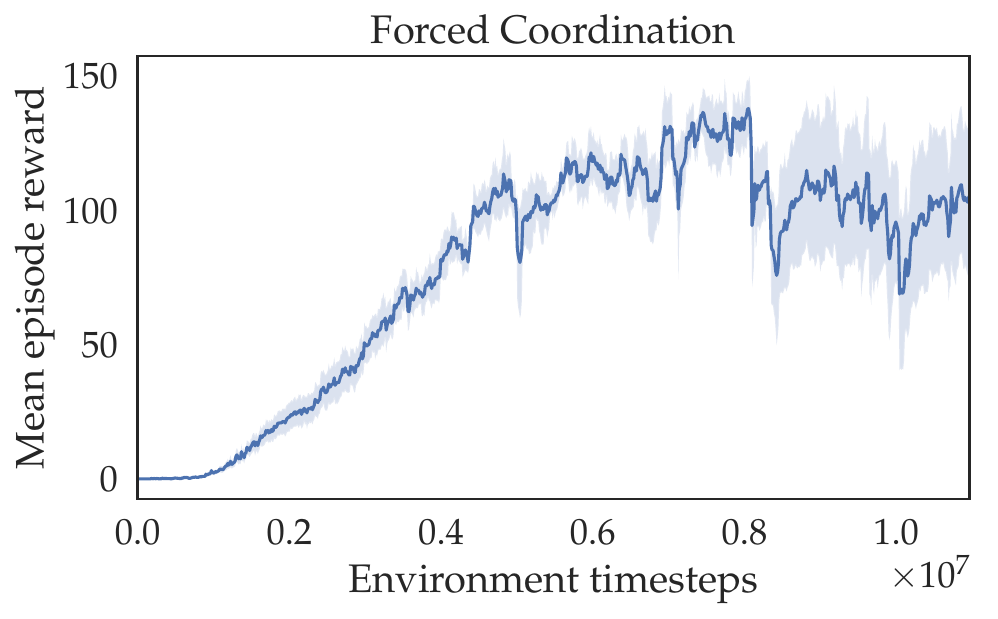} & \includegraphics[width=\linewidth]{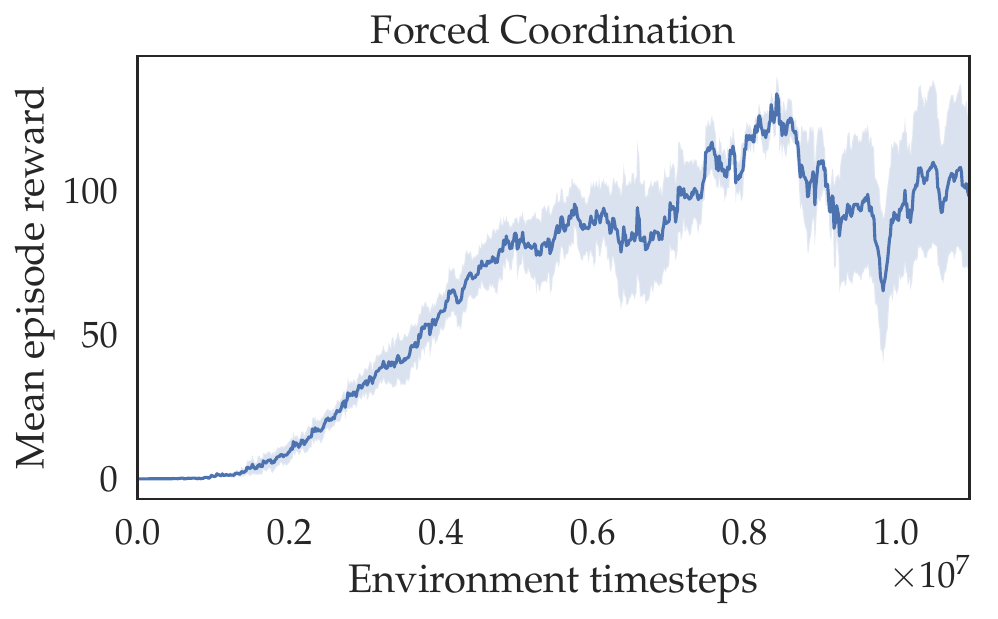} & \includegraphics[width=\linewidth]{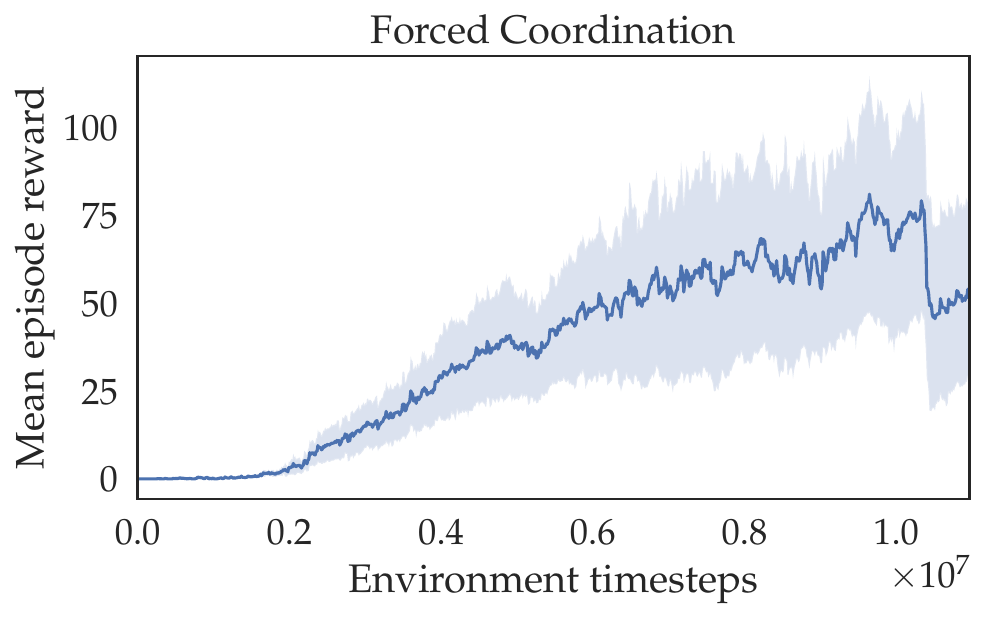} & \includegraphics[width=\linewidth]{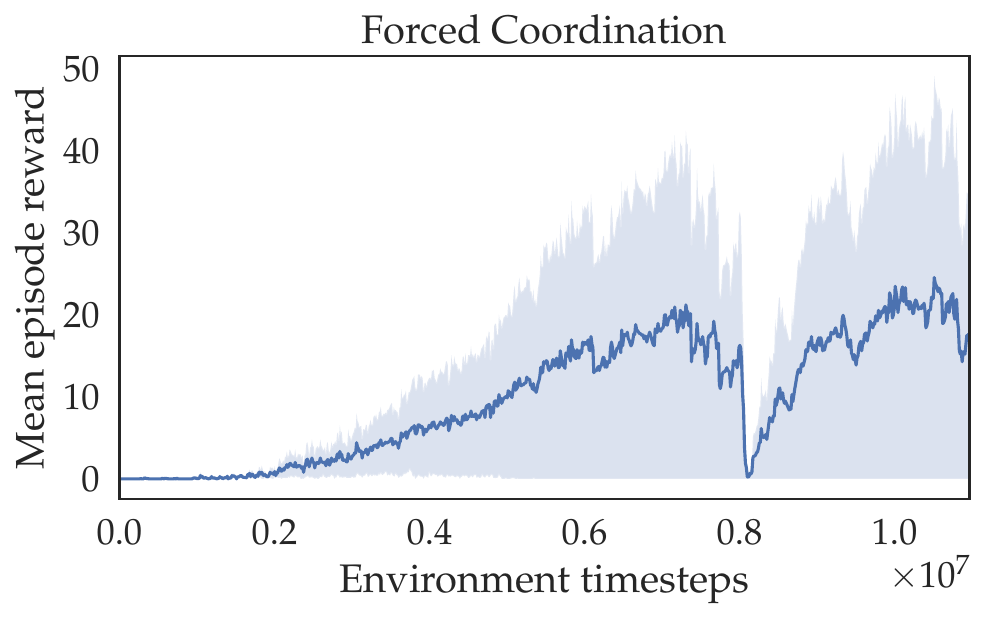} \\
      \includegraphics[width=\linewidth]{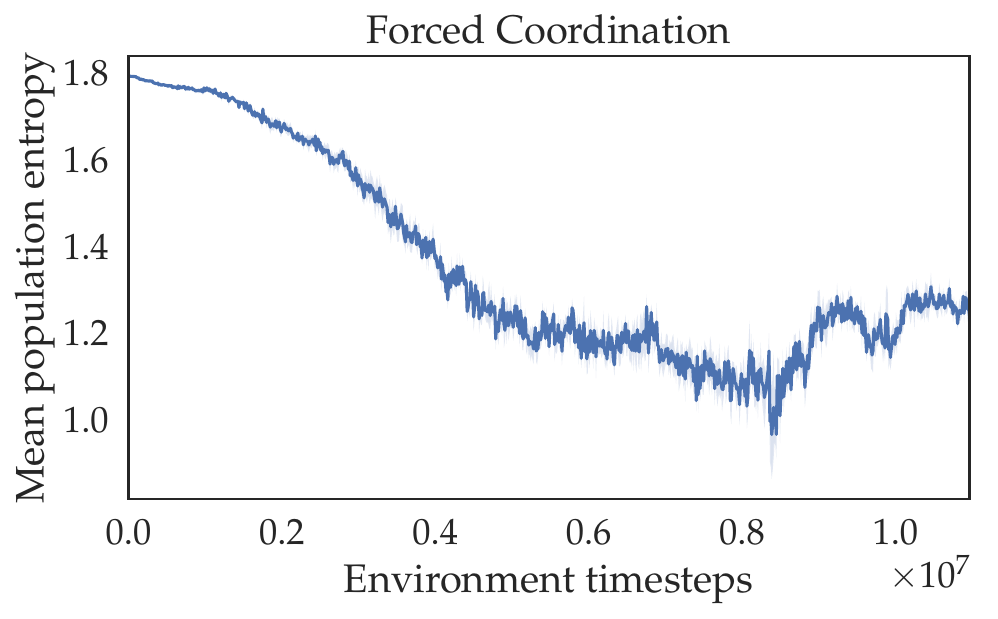} & \includegraphics[width=\linewidth]{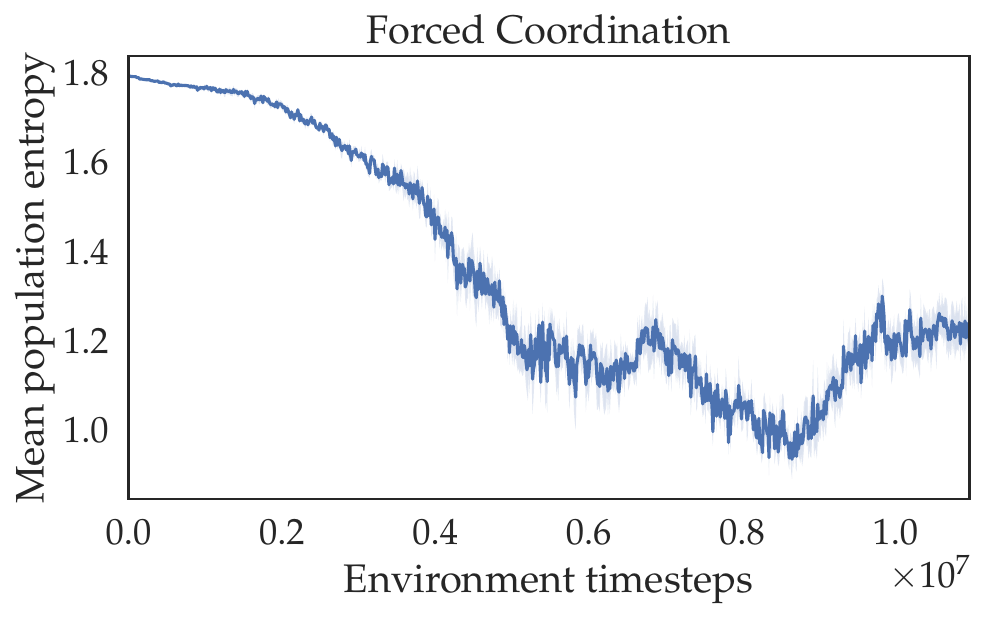} & \includegraphics[width=\linewidth]{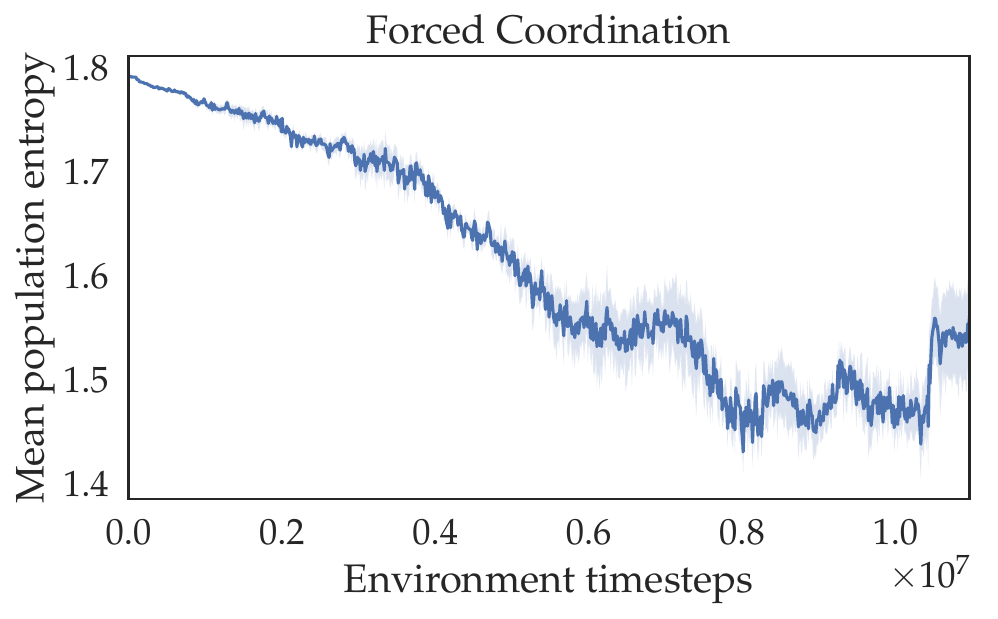} & \includegraphics[width=\linewidth]{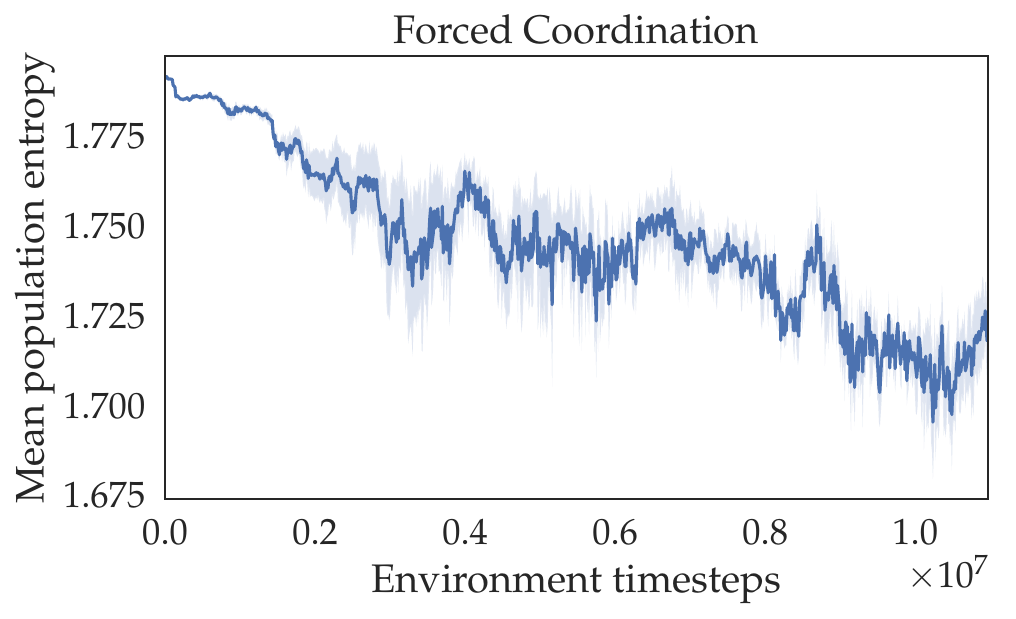} \\ \midrule
      \includegraphics[width=\linewidth]{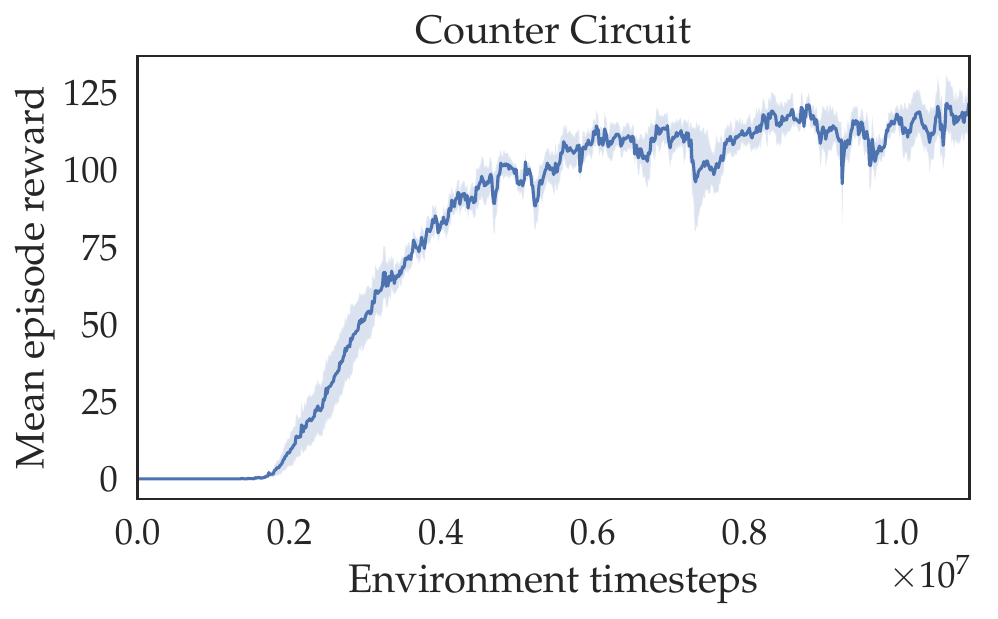} & \includegraphics[width=\linewidth]{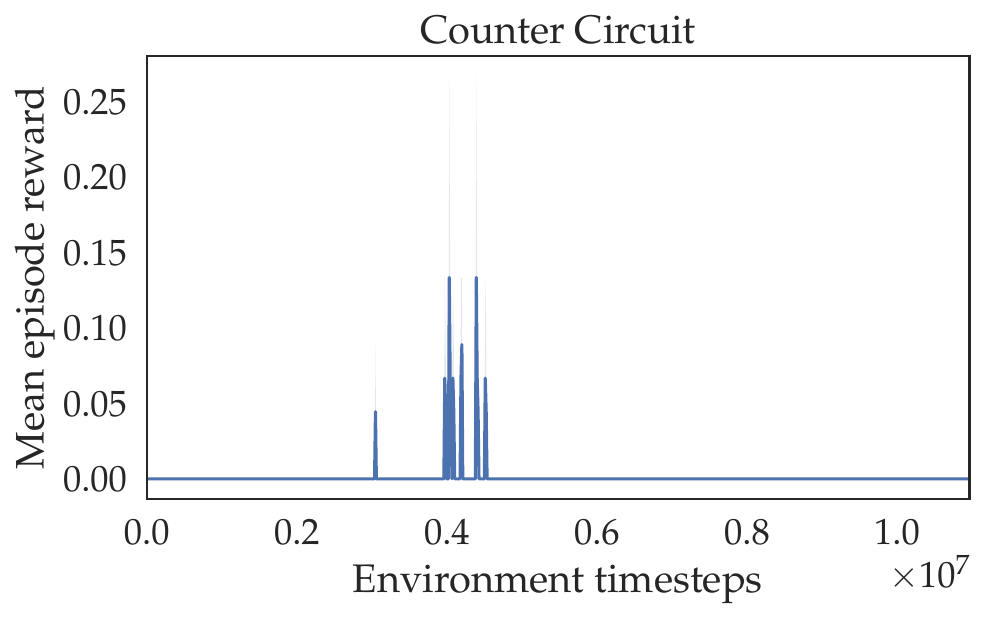} & \includegraphics[width=\linewidth]{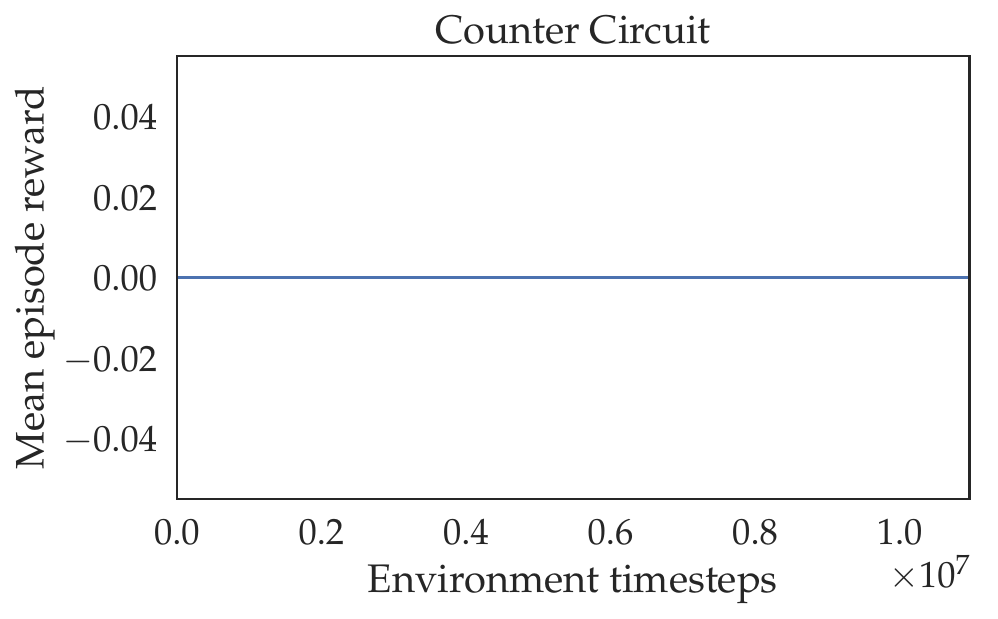} & \includegraphics[width=\linewidth]{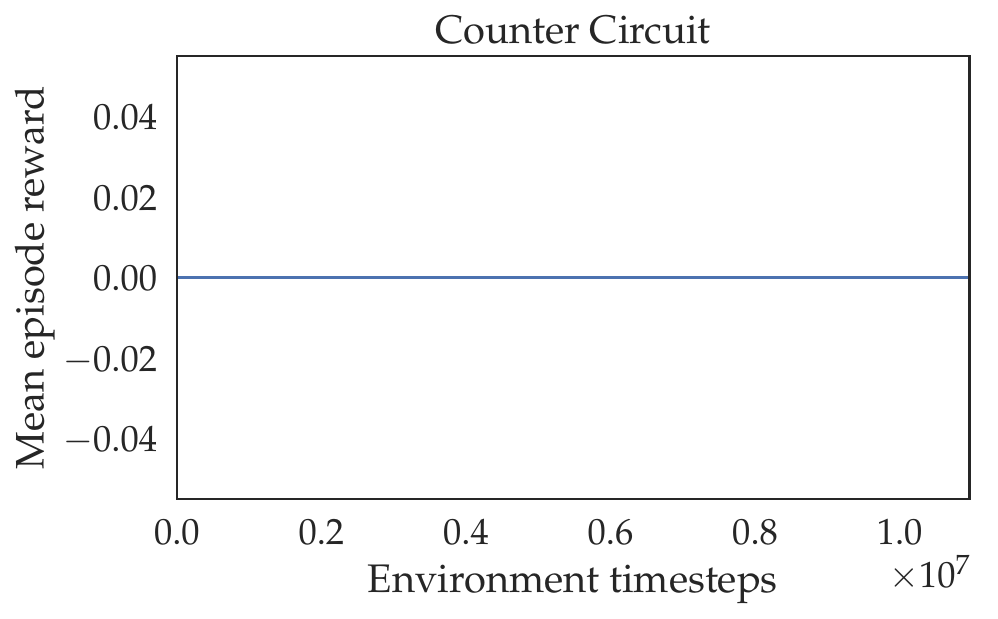} \\
      \includegraphics[width=\linewidth]{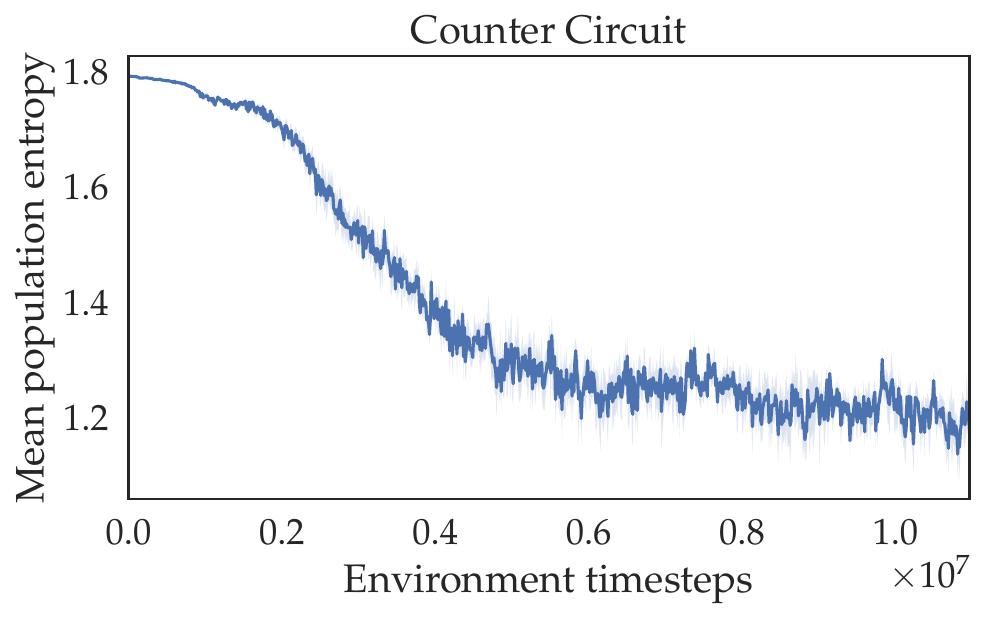} & \includegraphics[width=\linewidth]{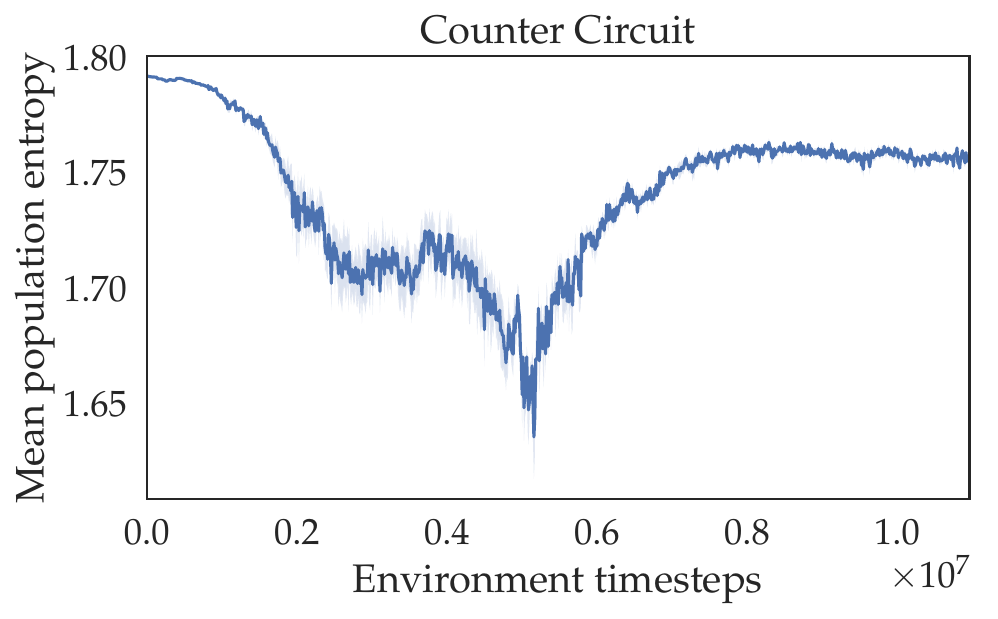} & \includegraphics[width=\linewidth]{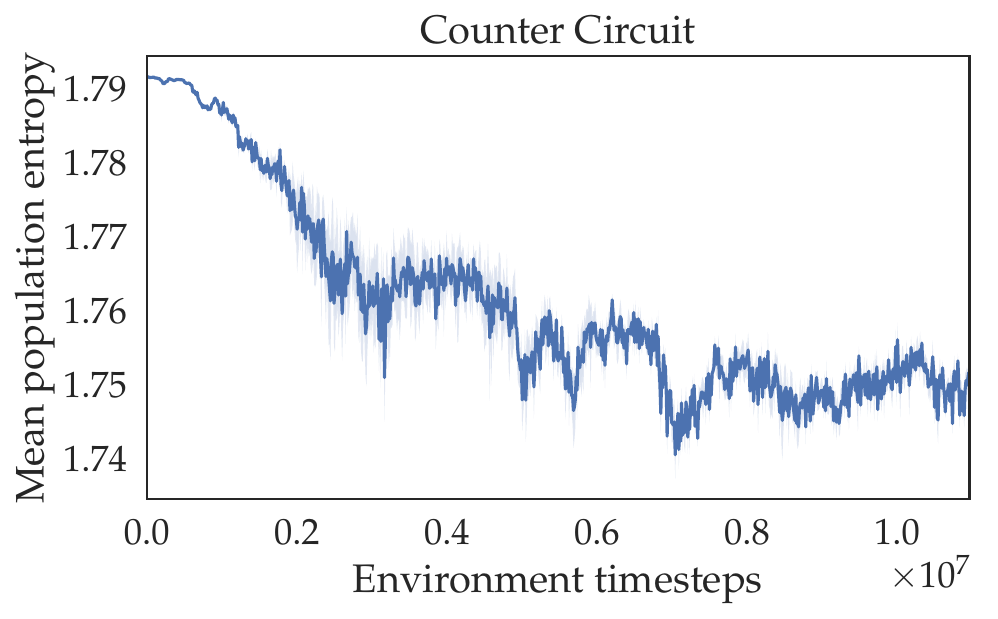} & \includegraphics[width=\linewidth]{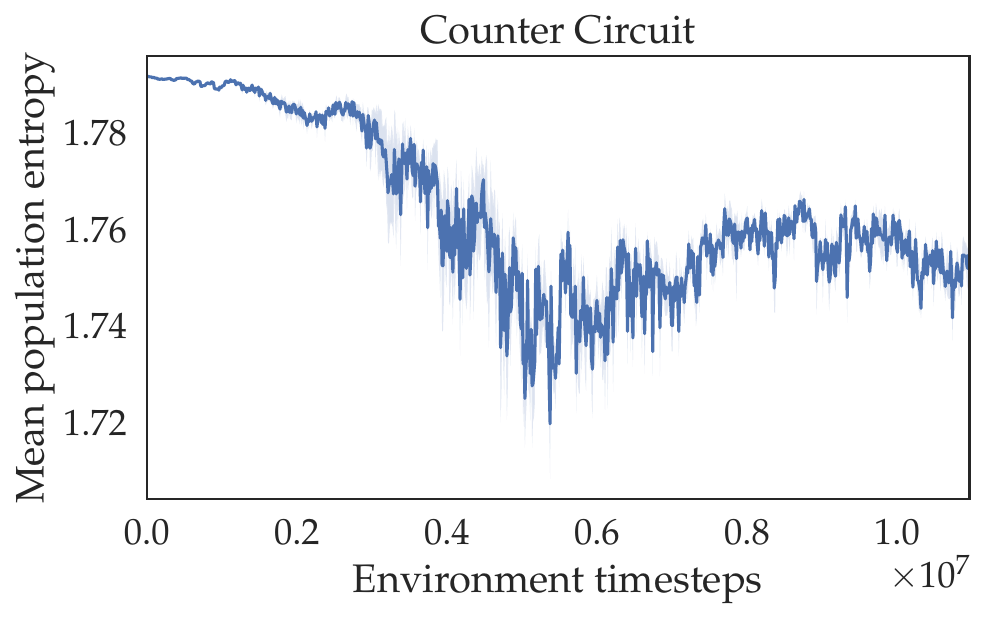} \\
      \bottomrule
  \end{tabular}
  \caption{\bo{Mean episode reward and population entropy with different $\alpha$ in all five layouts:} Each column corresponds to a different value of $\alpha$ in the set of $[0.020,\ 0.030,\ 0.040,\ 0.050]$. There are five row sections, which correspond to the five layouts. Each row section contains two rows, which are the plots of the mean episode reward and the mean population entropy of the layout, respectively.}
  \label{tab:rew_and_ent_2}
\end{table}


\end{document}